\DeclareMathOperator*{\argmax}{arg\,max}
\DeclareMathOperator*{\argmin}{arg\,min}
\newcommand{\bfx}{{\bf x}}
\newcommand{\bfy}{{\bf y}}
\newcommand{\calD}{\mathcal{D}}
\newcommand{\loss}{\mathcal{L}}
\newcommand{\realR}{\mathbb{R}}
\newcommand{\calC}{\mathcal{C}}
\newcommand{\bmtheta}{\bm{\theta}}
\newcommand{\bmc}{\bm{c}}
\newcommand{\hada}{\odot}
\newcommand{\ones}{\bm{1}}
\newcommand{\norm}[1]{\left\lVert#1\right\rVert}
\newcommand{\abs}[1]{\left\lvert#1\right\rvert}
\newcommand{\SKIP}[1]{}
\DeclareRobustCommand\onedot{\futurelet\@let@token\@onedot}
\def\@onedot{\ifx\@let@token.\else.\null\fi\xspace}
\def\eg{\emph{e.g}\onedot} 
\def\ie{\emph{i.e}\onedot} 
\def\etc{\emph{etc}\onedot} 
\def\wrt{w.r.t\onedot} 
\title{Progressive Skeletonization: Trimming more fat from a network at initialization}
\author{
  \textbf{Pau de Jorge}\thanks{Correspondence to \texttt{pau@robots.ox.ac.uk}} \\
  University of Oxford \& \\ NAVER LABS Europe\thanks{\url{www.europe.naverlabs.com}}
  \and
  \textbf{Amartya Sanyal} \\
  University of Oxford \&\\
  The Alan Turing Institute, London, UK
  \and
  \textbf{Harkirat S. Behl} \\
  University of Oxford \\
  \And
  \textbf{Philip H. S. Torr} \\
  University of Oxford \\
  \And
  \textbf{Grégory Rogez} \\
  NAVER LABS Europe \\
  \And
  \textbf{Puneet K. Dokania} \\
  University of Oxford \& \\ Five AI Limited\\
  }
\begin{document}

\maketitle

\begin{abstract}
Recent studies have shown that skeletonization (pruning parameters) of networks \textit{at initialization} provides all the practical benefits of sparsity both at inference and training time, while only marginally degrading their performance.
However, we observe that beyond a certain level of sparsity (approx $95\%$), these approaches fail to preserve the network performance, and to our surprise, in many cases perform even worse than trivial random pruning.
To this end, we propose an objective to find a skeletonized network with maximum {\em foresight connection sensitivity} (FORCE) whereby the trainability, in terms of connection sensitivity, of a pruned network is taken into consideration. 
We then propose two approximate procedures to maximize our objective (1) Iterative SNIP: allows parameters that were unimportant at earlier stages of skeletonization to become important at later stages; and (2) FORCE: iterative process that allows exploration by allowing already pruned parameters to resurrect at later stages of skeletonization.
Empirical analysis on a large suite of experiments show that our approach, while providing at least as good a performance as other recent approaches on moderate pruning levels, provide remarkably improved performance on higher pruning levels (could remove up to $99.5\%$ parameters while keeping the networks trainable).

\end{abstract}

\vspace{-2ex}
\section{Introduction}
\vspace{-1ex}

The majority of pruning algorithms 
for Deep Neural Networks require training dense models and often fine-tuning sparse sub-networks in order to obtain their pruned counterparts. In \cite{LTH}, the authors provide empirical evidence to support the hypothesis that there exist sparse sub-networks that can be trained from scratch to achieve similar performance as the dense ones. However, their method to find such sub-networks requires training the full-sized model and intermediate sub-networks, making the process much more expensive.

Recently, \cite{snip} presented SNIP. Building upon almost a three decades old saliency criterion for pruning trained models ~\citep{skeletonizationNIPS1988}, they are able to predict, at initialization, the importance each weight will have later in training. Pruning at initialization methods are much cheaper than conventional pruning methods. Moreover, while traditional pruning methods can help accelerate inference tasks, pruning at initialization may go one step further and provide the same benefits at train time~\cite{fast}.

\cite{grasp} (GRASP) noted that after applying the pruning mask, gradients are modified due to non-trivial interactions between weights. Thus, maximizing SNIP criterion before pruning might be sub-optimal. 
They present an approximation to maximize the gradient norm after pruning, where they treat pruning as a perturbation on the weight matrix and use the first order Taylor's approximation. While they show improved performance, their approximation involves computing a Hessian-vector product which is expensive both in terms of memory and computation.

\begin{figure}[ht]
\centering
\includegraphics[width=0.6\textwidth]{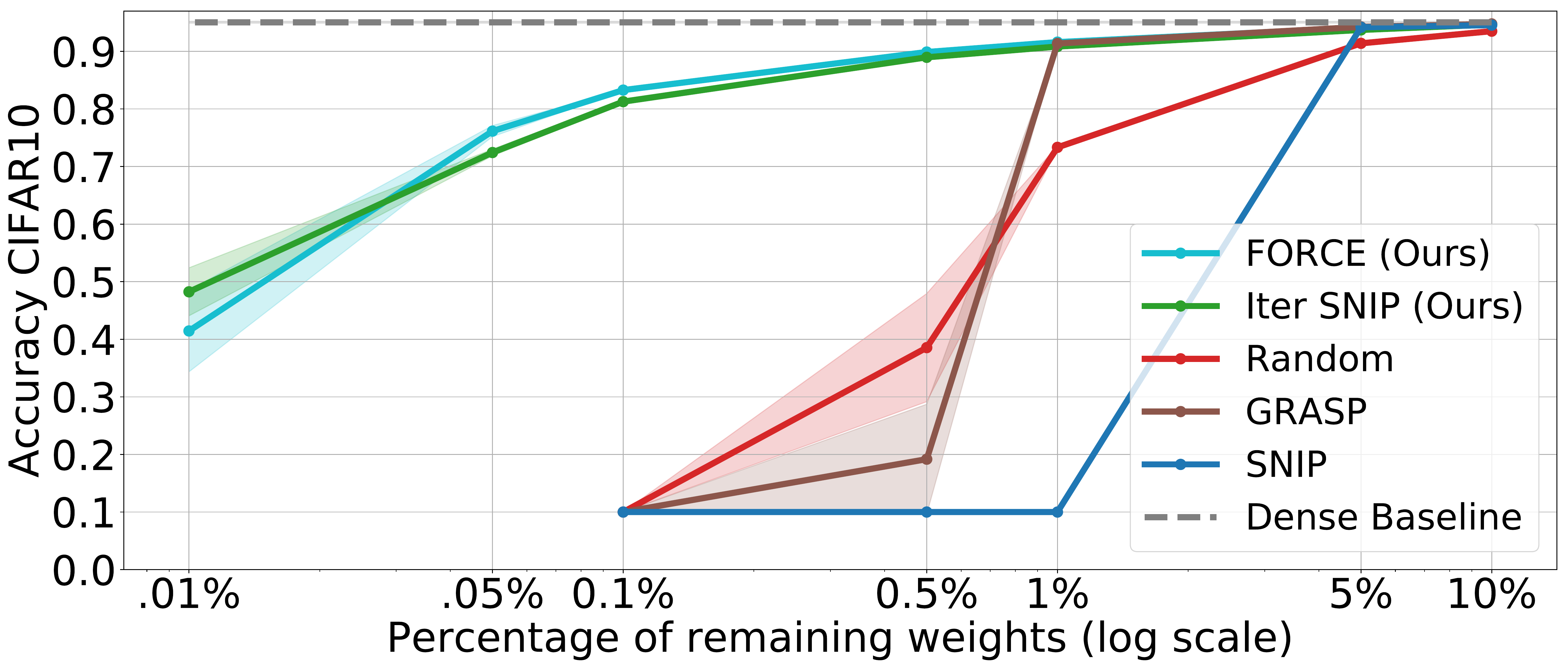}
\caption{Test accuracies on CIFAR-10 (ResNet50) for different pruning methods. Each point is the average over 3 runs of prune-train-test. The shaded areas denote the standard deviation of the runs (too small to be visible in some cases). Random corresponds to removing connections uniformly.}
\label{fig:figure_1}\vspace{-3ex}
\end{figure}
We argue that both SNIP and GRASP approximations of the gradients after pruning do not hold for high pruning levels, where a large portion of the weights are  removed at once.  In this work, while we rely on the saliency criteria introduced by~\citet{skeletonizationNIPS1988}, we optimize what this saliency would be \emph{after} pruning, rather than \emph{before}. Hence, we name our criteria \textit{Foresight Connection sEnsitivity} (FORCE). 
We introduce two approximate procedures to progressively optimize our objective. The first, which turns out to be equivalent to applying SNIP iteratively, removes a small fraction of weights at each step and re-computes the gradients after each pruning round. This allows to take into account the intricate interactions between weights, re-adjusting the importance of connections at each step. The second procedure, which we name FORCE, is also iterative in nature, but contrary to the first, it allows pruned parameters to resurrect. Hence, it supports exploration, which otherwise is not possible in the case of iterative SNIP. Moreover, one-shot SNIP can be viewed as a particular case of using only one iteration. Empirically, we find that both SNIP and GRASP have a sharp drop in performance when targeting higher pruning levels. Surprisingly, they perform even worse than random pruning as can be seen in Fig \ref{fig:figure_1}. In contrast, our proposed pruning procedures prove to be significantly more robust on a wide range of pruning levels.

\vspace{-1ex}
\section{Related work}
\vspace{-1ex}
\textbf{Pruning trained models }
Most of the pruning works follow the \textit{train -- prune -- fine-tune} cycle ~\citep{skeletonizationNIPS1988,optimal,surgeon,han,molchanov,DNS}, which requires training the dense network until convergence, followed by multiple iterations of pruning and fine-tuning until a target sparsity is reached. Particularly, \cite{molchanov} present a criterion very similar to~\citet{skeletonizationNIPS1988} and therefore similar to~\citet{snip} and our FORCE, but they focus on pruning whole neurons, and involve training rounds while pruning.
\cite{LTH} and \cite{stable} showed that it was possible to find sparse sub-networks that, when trained from scratch or an early training iteration, were able to match or even surpass the performance of their dense counterparts. Nevertheless, to find them they use a costly procedure based on~\citet{han}. All these methods rely on having a trained network, thus, they are not applicable before training. In contrast, our algorithm is able to find a trainable sub-network with randomly initialized weights. Making the overall pruning cost much cheaper and presenting an opportunity to leverage the sparsity during training as well.  

\textbf{Induce sparsity during training }
Another popular approach has been to induce sparsity during training. This can be achieved by modifying the loss function to consider sparsity as part of the optimization~\citep{chauvin,Carreira,louizos} or by dynamically pruning during training~\citep{deepr,set,mostafa, nest,sparse-scratch,dynamic,soft,rigl}. These methods are usually cheaper than pruning after training, but they still need to train the network to select the final sparse sub-network. We focus on finding sparse sub-networks before any weight update, which is not directly comparable.

\textbf{Pruning at initialization}
These methods present a significant leap with respect to other pruning methods. While traditional pruning mechanisms focused on bringing speed-up and memory reduction at inference time, pruning at initialization methods bring the same gains both at training and inference time. Moreover, they can be seen as a form of Neural Architecture Search~\citep{NAS} to find more {\em efficient} network topologies. Thus, they have both a theoretical and practical interest. \cite{snip} presented SNIP, a method to estimate, at initialization, the importance that each weight could have later during training. SNIP analyses the effect of each weight on the loss function when perturbed at initialization. In~\citet{snip-2}, the authors studied pruning at initialization from a signal propagation perspective, focusing on the initialization scheme. 
Recently,~\citet{grasp} proposed GRASP, a different method based on the gradient norm after pruning and showed a significant improvement for higher levels of sparsity. However, neither SNIP nor GRASP perform sufficiently well when larger compressions and speed-ups are required and a larger fraction of the weights need to be pruned. In this paper, we analyse the approximations made by SNIP and GRASP, and present a more suitable solution to maximize the saliency after pruning.

\vspace{-1ex}
\section{Problem Formulation: Pruning at Initialization}
\vspace{-1ex}
\label{sec:problemFormulation}
Given a dataset $\calD = \{(\bfx_i, \bfy_i)\}_{i=1}^n$, the training of a neural network $f$ parameterized by $\bmtheta \in \realR^m$ can be written as minimizing the following empirical risk:
\begin{align}\label{net-opt}
    \argmin_{\bmtheta} & \; \; \frac{1}{n}\sum_i \loss((f(\bfx_i; \bmtheta)), \bfy_i)\quad 
    \text{s.t.}~\bmtheta \in \calC, 
\end{align}
where $\loss$ and $\calC$ denote the loss function and the constraint set, respectively. Unconstrained (standard) training corresponds to $\calC = \realR^m$. 
Assuming we have access to the gradients (batch-wise) of the empirical risk, an optimization algorithm (\eg SGD) is generally used to optimize the above objective, that, during the optimization process, produces a sequence of iterates $\{ \bmtheta_i \}_{i=0}^T$, where $\bmtheta_0$ and $\bmtheta_T$ denote the initial and the final (optimal) parameters, respectively. Given a target sparsity level of $k<m$, the {\em general} parameter pruning problem involves $\calC$ with a constraint $\norm{\bmtheta_T}_0 \leq k$, \ie, the final optimal iterate must have a maximum of $k$ non-zero elements. Note that there is no such constraint with the intermediate iterates.

Pruning at initialization, the main focus of this work, adds further restrictions to the above mentioned formulation by constraining all the iterates to lie in a {\em fixed} subspace of $\calC$. Precisely, the constraints are to find an initialization $\bmtheta_0$ such that $\norm{\bmtheta_0}_0 \leq k$
\footnote{In practice, as will be done in this work as well, a subset of a {\em given} dense initialization is found using some saliency criterion (will be discussed soon), however, note that our problem statement is more general than that.}, and the intermediate iterates are $\bmtheta_i \in \bar{\calC} \subset \calC,\ \forall i \in \{1, \dots, T\}$, where $\bar{\calC}$ is the subspace of $\realR^m$ spanned by the natural basis vectors $\{{\bf e}_j\}_{j \in \texttt{supp}(\bmtheta_0)}$. Here, $\texttt{supp}(\bmtheta_0)$ denotes the support of $\bmtheta_0$, \ie, the set of indices with non-zero entries. The first condition defines the sub-network at initialization with $k$ parameters, and the second fixes its topology throughout the training process. Since there are $\binom{m}{k}$ such possible sub-spaces, exhaustive search to find the optimal sub-space to optimize \eqref{net-opt} is impractical as it would require training $\binom{m}{k}$ neural networks. Below we discuss two recent approaches that circumvent this problem by maximizing a hand-designed data-dependent objective function. These objectives are tailored to preserve some relationships  between the parameters, the loss, and the dataset, that {\em might} be sufficient to obtain a reliable $\bmtheta_0$. For the ease of notation, we will use $\bmtheta$ to denote the dense initialization.

{\bf SNIP }~\citet{snip} present a method based on the saliency criterion from~\citet{skeletonizationNIPS1988}. They add a key insight and show this criteria works surprisingly well to predict, at initialization, the importance each connection will have during training. The idea is to preserve the parameters that will have maximum impact on the loss when perturbed. Let $\bmc \in \{0, 1\}^m$ be a binary vector, and $\hada$ the Hadamard product. Then, the \textit{connection sensitivity} in SNIP is computed as:
\begin{equation}\label{eq:weight-snip}
    \bm{g}(\bmtheta) := \left. \dfrac{\partial \loss (\bmtheta \odot \bmc)}{\partial \bmc}\right|_{\bmc=\ones} =  \dfrac{\partial \loss (\bmtheta)}{\partial \bmtheta} \hada \bmtheta.
\end{equation}
Once $\bm{g}(\bmtheta)$ is obtained, the parameters corresponding to the top-$k$ values of $\abs{\bm{g}(\bmtheta)_i}$ are then kept. 
Intuitively, SNIP favors those weights that are far from the origin {\em and} provide high gradients (irrespective of the direction). 
We note that SNIP objective can be written as the following problem:
\begin{align}\label{saliency-snip}
   \max_{\bmc} S(\bm{\theta}, \bm{c}) := \sum_{i \in \texttt{supp}(\bmc)}  | \theta_i\ \nabla \mathcal{L}(\bm{\theta})_i| \; \; \; \text{s.t.} \; \bmc \in \{0,1\}^m,\ \norm{\bmc}_0 = k. 
\end{align}
It is trivial to note that the optimal solution to the above problem can be obtained by selecting the indices corresponding to the top-$k$ values of $| \theta_i\ \nabla \mathcal{L}(\bm{\theta})_i|$.

{\bf GRASP }~\citet{grasp} note that the SNIP saliency is measuring the \textit{connection sensitivity} of the weights \textit{before} pruning, however, it is likely to change after pruning. Moreover, they argue that, at initialization, it is more important to preserve the gradient signal than the loss itself. They propose to use as saliency the gradient norm of the loss $\Delta \mathcal{L} (\bm{\theta}) = \nabla\mathcal{L} (\bm{\theta})^T \nabla\mathcal{L} (\bm{\theta})$, but measured \textit{after} pruning. 
To maximize it, \cite{grasp} adopt the same approximation introduced in \cite{optimal} and treat pruning as a perturbation on the initial weights. 
Their method is equivalent to solving:
\begin{align}\label{grasp_mask}
\max_{\bmc} G(\bm{\theta}, \bm{c}) &:= \sum_{\{i:\ c_i = 0\}}{-\theta_i\ [\textbf{Hg}]_i} \; \; \; \text{s.t.} \; \bmc \in \{0,1\}^m,\ \norm{\bmc}_0 = k.
\end{align}
Where $\textbf{H}$ and $\textbf{g}$ denote the Hessian and the gradient of the loss respectively. 
\SKIP{
Therefore, it is extremely important to design an approach that would find a suitable $\theta_0$ efficiently such that the chances (vaguely speaking) of finding a good optima by the optimizer searching the subspace spanned by the $\texttt{supp}(\theta_0)$ is high. 

Note, as we increase the degree of pruning, $|\texttt{supp}(\theta_0)|$ decreases, which in turn reduces the search space of the optimizer, thus, making it less likely to find a solution that would perform well on unseen test data. However, in the case of neural networks, it has been shown in multiple experimental settings that even $90\%$ pruning at initialization leads to marginal drop in the performance compared to unconstrained training. This behaviour is generally attributed to the over-parameterization of the neural networks. 

To this end, we discuss the two most recent such approaches. The main idea here is to pose the problem of finding a suitable $\theta_0$ as another maximization problem that would ensure that the obtained $\theta_0$ has some properties (\eg, high gradient norm, high parameter magnitude \etc) \wrt the given loss function or parameter mag ( some properties such as (a) high magnitude, (b) high grad whereby the goal is to preserve some factors,   to efficiently find such $\theta_0$. 

, all the recent works pose the problem of finding such suitable $\theta_0$ as an optimization problem whereby the objective is to preserve some training dynamics of . Below we discuss two most recent app.

Let $z_i \sim \mathcal{D}$ be a set of independent and identically distributed random variables, where  $z_i = (\bm{x}_i, \bm{y}_i) \in \ensuremath{\mathcal{X}} \times \ensuremath{\mathcal{Y}}$ denotes input--output pairs.
Let $k \in \mathbb{N}$ denote the sparsity level and $\bm{c}$ be a binary mask indicating the presence or absence of a weight. Optimal pruning can be formulated as:
\begin{align}\label{pruning}
    \bm{c^*} &= \argmin_{\bm{c}}\  \mathbb{E}_{\bm{z}\sim \mathcal{D}} \left[l (\bm{z}, \bm{\theta^*}(\bm{\theta_0} , \bm{c}) )\right], \nonumber\\
    &\text{s.t.} \ \ \bm{c}\in\{0,1\}^m, \ \ \bm{c}^\top \mathbf{1}=k,
\end{align}

where $\odot$ denotes the Hadamard product, $l(z,\bm{\theta}) \in \mathbb{R}$ denotes the loss function (e.g. cross-entropy), $m$ is the total number of parameters and $\bm{\theta^*}(\bm{\theta_0} \odot \bm{c})$ denotes the optimal parameters after training the network with a standard optimizer (e.g SGD), with initial parameters $\bm{\theta_0} \odot \bm{c}$ 
For simplicity, from now on $\bm{\theta}$ will denote the weights at initialization unless stated otherwise.
 
Minimizing the objective in Eq. \eqref{pruning} is computationally intractable since $\bm{c}$ is discrete with $\binom{m}{k}$ feasible values, where $m$ is of the order of millions, and it requires training the network for each mask $\bm{c}$ that fulfills the constraints. Some previous works~\citep{snip, grasp} bypass this by defining a data dependent saliency $S_{\mathcal{D}}$ that the pruned network should maximize, which enables pruning at initialization. These methods substitute the problem in Eq. \eqref{pruning} with the following problem:
\begin{align}\label{substitute}
   \bm{c^*} &= \argmax_{\bm{c}} \ S_{\mathcal{D}}(\bm{\theta} , \bm{c}), \nonumber\\
   & \text{s.t.} \ \ \bm{c}\in\{0,1\}^m,\ \ \bm{c}^\top \mathbf{1}=k.
\end{align}

In practice, we will assume we are given a training set $\{z_i\}_{i=1:n}$ and from now on we will approximate 
}

\vspace{-1ex}
\section{Foresight Connection Sensitivity} 
\label{sec: Extreme pruning}
\vspace{-1ex}
Since removing connections of a neural network will have significant impact on its forward and backward signals, we are interested in obtaining a pruned network that is easy to train. We use {\em connection sensitivity} of the loss function as a proxy for the so-called trainability of a network. 
To this end, we first define connection sensitivity {\em after} pruning which we name {\em Foresight Connection sEnsitivity} (FORCE), and then propose two procedures to optimize it in order to obtain the desired pruned network. 
Let $\bar{\bmtheta} = \bmtheta \odot \bmc$ denotes the pruned parameters once a binary mask $\bmc$ with $\norm{\bmc}_0 = k \leq m$ is applied. 
The FORCE at $\bar{\bmtheta}$ for a given mask $\hat{\bmc}$ is then obtained as:
\begin{equation}\label{eq:snip-exact}
    \bm{g}(\bar{\bmtheta}) := \left. \dfrac{\partial \loss (\bar{\bmtheta})}{\partial \bmc}\right|_{\bmc=\hat{\bmc}} = \left. \dfrac{\partial \loss (\bar{\bmtheta})}{\partial \bar{\bmtheta}}\right|_{\bmc=\hat{\bmc}} \odot \left. \dfrac{\partial \bar{\bmtheta}}{\partial \bmc} \right|_{\bmc=\hat{\bmc}} = \left. \dfrac{\partial \loss (\bar{\bmtheta})}{\partial \bar{\bmtheta}}\right|_{\bmc=\hat{\bmc}} \hada \bmtheta.
\end{equation}
The last equality is obtained by rewriting $\bar{\bmtheta}$ as $\text{diag}(\bmtheta) \bmc$, where $\text{diag}(\bmtheta)$ is a diagonal matrix with $\bmtheta$ as its elements, and then differentiating \wrt $\bmc$. Note, when $k<m$, the sub-network $\bar{\bmtheta}$ is obtained by {\em removing} connections corresponding to all the weights for which the binary variable is zero. Therefore, only the weights corresponding to the indices for which $\bmc(i) = 1$ contribute in equation~\eqref{eq:snip-exact}, all other weights do not participate in forward and backward propagation and are to be ignored. We now discuss the crucial differences between our formulation~\eqref{eq:snip-exact}, SNIP~\eqref{eq:weight-snip} and GRASP~\eqref{grasp_mask}.
\begin{itemize}
\item When $\hat{\bmc} = \ones$, the formulation is exactly the same as the connection sensitivity used in SNIP. However, $\hat{\bmc} = \ones$ is too restrictive in the sense that it assumes that all the parameters are active in the network and they are removed one by one {\em with replacement}, therefore, it fails to capture the impact of removing a group of parameters.
\item Our formulation uses weights and gradients corresponding to $\bar{\bmtheta}$ thus, compared to SNIP, provides a better indication of the training dynamics of the pruned network. However, GRASP formulation is based on the assumption that pruning is a small perturbation on the Gradient Norm which, also shown experimentally, is not always a reliable assumption.
\item When $\norm{\hat{\bmc}}_0 \ll \norm{\ones}_0$, \ie, extreme pruning, the gradients before and after pruning will have very different values as $\norm{ \bmtheta \odot \hat{\bmc}}_2 \ll \norm{ \bmtheta}_2$, making SNIP and GRASP unreliable (empirically we find SNIP and GRASP fail in the case of high sparsity).
\end{itemize}

\textbf{FORCE saliency }Note FORCE~\eqref{eq:snip-exact} is defined for a given sub-network which is unknown \textit{a priori}, as our objective itself is to find the sub-network with maximum connection sensitivity. Similar to the reformulation of SNIP in~(\ref{saliency-snip}), the objective to find such sub-network corresponding to the foresight connection sensitivity can be written as:
\begin{align}\label{saliency-snip-exact}
   \max_{\bmc} S(\bmtheta, \bmc) := \sum_{i \in \texttt{supp}(\bmc)}  | \theta_i\ \nabla \mathcal{L}(\bmtheta \hada \bmc)_i| \; \; \; \text{s.t.} \; \bmc \in \{0,1\}^m,\ \norm{\bmc}_0 = k. 
\end{align}
Here $\nabla \mathcal{L}(\bmtheta \hada \bmc)_i$ represents the $i$-th index of \( \left. \frac{\partial \loss (\bar{\bmtheta})}{\partial \bar{\bmtheta}}\right|_{\bmc}\).
As opposed to~(\ref{saliency-snip}), finding the optimal solution of~(\ref{saliency-snip-exact}) is non trivial as it requires computing the gradients of all possible $\binom{m}{k}$ sub-networks in order to find the one with maximum sensitivity. To this end, we present two approximate solutions to the above problem that primarily involve (i) progressively increasing the degree of pruning, and (ii) solving an approximation of (\ref{saliency-snip-exact}) at each stage of pruning.

\textbf{Progressive Pruning (Iterative SNIP)} \label{iter_pruning}
%
%
%
 Let $k$ be the number of parameters to be kept after pruning. Let us assume that we know a schedule (will be discussed later) to divide $k$ into a set of natural numbers $\{k_t\}_{t=1}^T$ such that $k_t > k_{t+1}$ and $k_T = k$. Now, given the mask $\bmc_t$ corresponding to $k_t$, pruning from $k_t$ to $k_{t+1}$ can be formulated using the connection sensitivity~\eqref{eq:snip-exact} as:
 \begin{align}\label{eq:iterative}
     \bmc_{t+1} &= \argmax_{\bmc}\ S(\bar{\bmtheta}, \bmc) \; \; \; \text{s.t.} \ \ \bmc \in \{0,1\}^m, \ \ \norm{\bmc}_0= k_{t+1}, \ \bmc \hada \bmc_t = \bmc,
 \end{align}
 where $\bar{\bmtheta} = \bmtheta \hada \bmc_t$. The additional constraint $\bmc \hada \bmc_t = \bmc$ ensures that no parameter that had been pruned earlier is activated again. 
 Assuming that the pruning schedule ensures a smooth transition from one topology to another ($\norm{\bmc_t}_0 \approx \norm{\bmc_{t+1}}_0$) such that the \textit{gradient approximation} \(\left. \frac{\partial \loss (\bar{\bmtheta})}{\partial \bar{\bmtheta}}\right |_{\bmc_{t}} \approx \left. \frac{\partial \loss (\bar{\bmtheta})}{\partial \bar{\bmtheta}}\right|_{\bmc_{t+1}} \) is valid, ~\eqref{eq:iterative} can be approximated as solving ~\eqref{saliency-snip} at $\bar{\bmtheta}$. Thus, for a given schedule over $k$, our first approximate solution to~\eqref{saliency-snip-exact} involves solving~\eqref{saliency-snip} iteratively. 
 This allows re-assessing the importance of connections after changing the sub-network.
 For a schedule with $T=1$, we recover SNIP where a crude \textit{gradient approximation} between the dense network $\bmc_0 = \bm{1}$ and the final mask $\bmc$ is being used.
This approach of ours turns out to be algorithmically similar to a concurrent work~\citep{snipit}. However, our motivation comes from a novel objective function (5) which also gives place to our second approach (FORCE). \cite{synflow} also concurrently study the effect of iterative pruning and report, similar to our findings, pruning progressively is needed for high sparsities.


\textbf{Progressive Sparsification (FORCE)} The constraint $\bmc \hada \bmc_t = \bmc$ in~\eqref{eq:iterative} (Iterative SNIP) might be restrictive in the sense that while re-assessing the importance of \textit{unpruned} parameters, it does not allow previously \textit{pruned} parameters to resurrect (even if they could become important). This hinders exploration which can be unfavourable in finding a suitable sub-network. Here we remove this constraint, meaning, the weights for which $\bmc(i) = 0$ are {\em not} removed from the network, rather they are assigned a value of zero. Therefore, while not contributing to the forward signal, they might have a non-zero gradient. This relaxation modifies the saliency in~\eqref{eq:snip-exact} whereby the gradient is now computed at a \textit{sparsified} network instead of a \textit{pruned} network. Similar to the above approach, we sparsify the network progressively and once the desired sparsity is reached, all connections with $\bmc(i) = 0$ are \textit{pruned}. Note, the step of removing zero weights is valid if removing such connections does not adversely impact the gradient flow of the unpruned parameters. We, in fact, found it to be true in our experiments shown in Fig \ref{fig:spars-prun} (Appendix). However, this assumption might not hold always.

An overview of Iterative SNIP and FORCE is presented in Algorithm~\ref{algo}.

\textbf{Sparsity schedule} \label{modes}
Both the above discussed iterative procedures approximately optimize~\eqref{eq:snip-exact}, however, they depend on a sparsity/pruning schedule favouring \textit{small} steps to be able to reliably apply the mentioned \textit{gradient approximation}. One such valid schedule would be where the portion of newly removed weights with respect to the remaining weights is small. We find a simple exponential decay schedule, defined below, to work very well on all our experiments:
\begin{align}
    &\text{Exp mode:} \ \ k_t = \exp \left\{ \alpha \log{k} + (1 - \alpha) \log{m}\right\}, \ \ \alpha = \dfrac{t}{T}. \label{eq:exp_mode}
\end{align}
In section \ref{analysis} we empirically show that these methods are very robust to the hyperparameter $T$.

\begin{algorithm}[b]
	\caption{FORCE/Iter SNIP  algorithms to find a pruning mask} 
	\label{algo}
	\begin{algorithmic}[1] 
	\State \textbf{Inputs:} Training set $\mathcal{D}$, final sparsity $k$, number of steps $T$, weights $\bm{\theta}_0 \in \realR^m$.
	\State Obtain $\{k_t\}_{t=1:T}$ using the chosen schedule (refer to Eq \eqref{eq:exp_mode})
	\State Define intial mask $\bm{c_0} = \bm{1}$
	\For {$t=0,\dots, T-1$}
	    \State Sample mini-batch $\{z_i\}_{i=1}^n$ from $\mathcal{D}$
	    \State Define $\bar{\bmtheta} = \bmtheta \hada \bmc_t$ (as \textit{sparsified} (FORCE) vs \textit{pruned} (Iter SNIP) network)
		\State Compute $\bm{g}(\bar{\bmtheta})$ (refer to Eq \eqref{eq:snip-exact} )
		\State $I = \{i_1, \dots, i_{k_{t+1}}\}$ are top-$k_{t+1}$ values of $|\bm{g}_i|$ 
		\State Build $\bmc_{t+1}$ by setting to 0 all indices not included in $I$.
	\EndFor
	\State \textbf{Return:} $\bmc_T$.
	\end{algorithmic} 
\end{algorithm}

\textbf{Some theoretical insights} When pruning weights gradually, we are looking for the best possible sub-network in a neighbourhood defined by the previous mask and the amount of weights removed at that step. The problem being non-convex and non-smooth makes it challenging to prove if the mask obtained by our method is globally optimal. However, in Appendix~\ref{proof} we prove that each intermediate mask obtained with Iterative SNIP is indeed an approximate local minima, where the degree of sub-optimality increases with the pruning step size. This gives some validation on why SNIP fails on higher sparsity. We can not provide the same guarantees for FORCE (there is no obvious link between the step size and the distance between masks), nevertheless, we empirically observe that FORCE is quite robust and more often than not improves over the performance of Iterative SNIP, which is not able to recover weights once pruned. We present further analysis in Appendix \ref{sec:evo_prun_masks}.  

\SKIP{
}

\SKIP{
}

\vspace{-2ex}
\section{Experiments} \label{experiments}
\vspace{-1ex}
 In the following we evaluate the efficacy of our approaches accross different architectures and datasets. Training settings, architecture descriptions, and implementation details are provided in Appendix \ref{pruning-details}.

\vspace{-1ex}
\subsection{Results on CIFAR-10}
\vspace{-1ex}

Fig \ref{fig:figure_3} compares the accuracy of the described iterative approaches with both SNIP and GRASP. We also report the performance of a dense and a random pruning baseline. Both SNIP and GRASP consider a single batch to approximate the saliencies, while we employ a different batch of data at each stage of our gradual skeletonization process. For a fair comparison, and to understand how the number of batches impacts  performance, we also run these methods averaging the saliencies over $T$ batches, where $T$ is the number of iterations. SNIP-MB and GRASP-MB respectively refer to these multi-batch (MB) counterparts. In these experiments, we use $T=300$. We study the hyper-parameter robustness regarding $T$ later in section \ref{analysis}. 

\begin{figure}[hb]
\centering
\begin{subfigure}[b]{.49\linewidth}
\includegraphics[width=\linewidth]{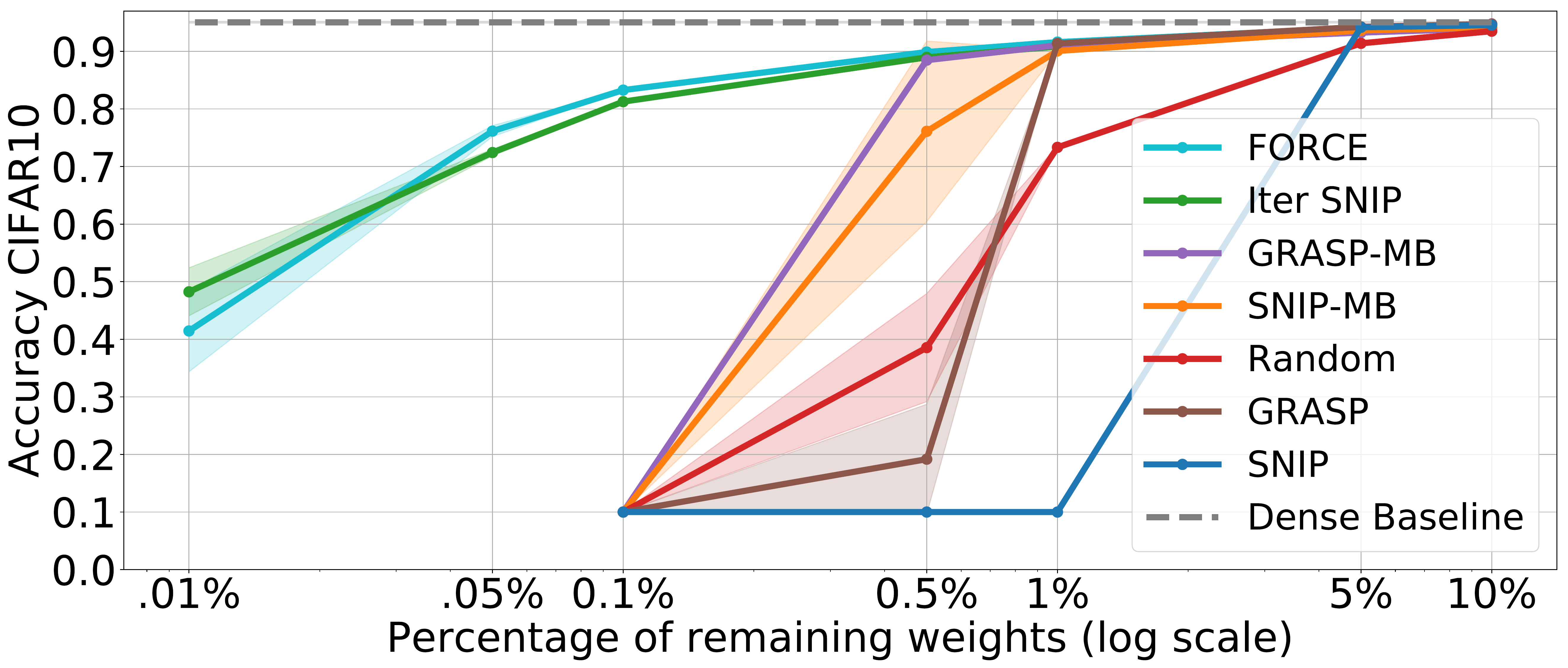}
\caption{Resnet50}
\end{subfigure}
\begin{subfigure}[b]{.49\linewidth}
\includegraphics[width=\linewidth]{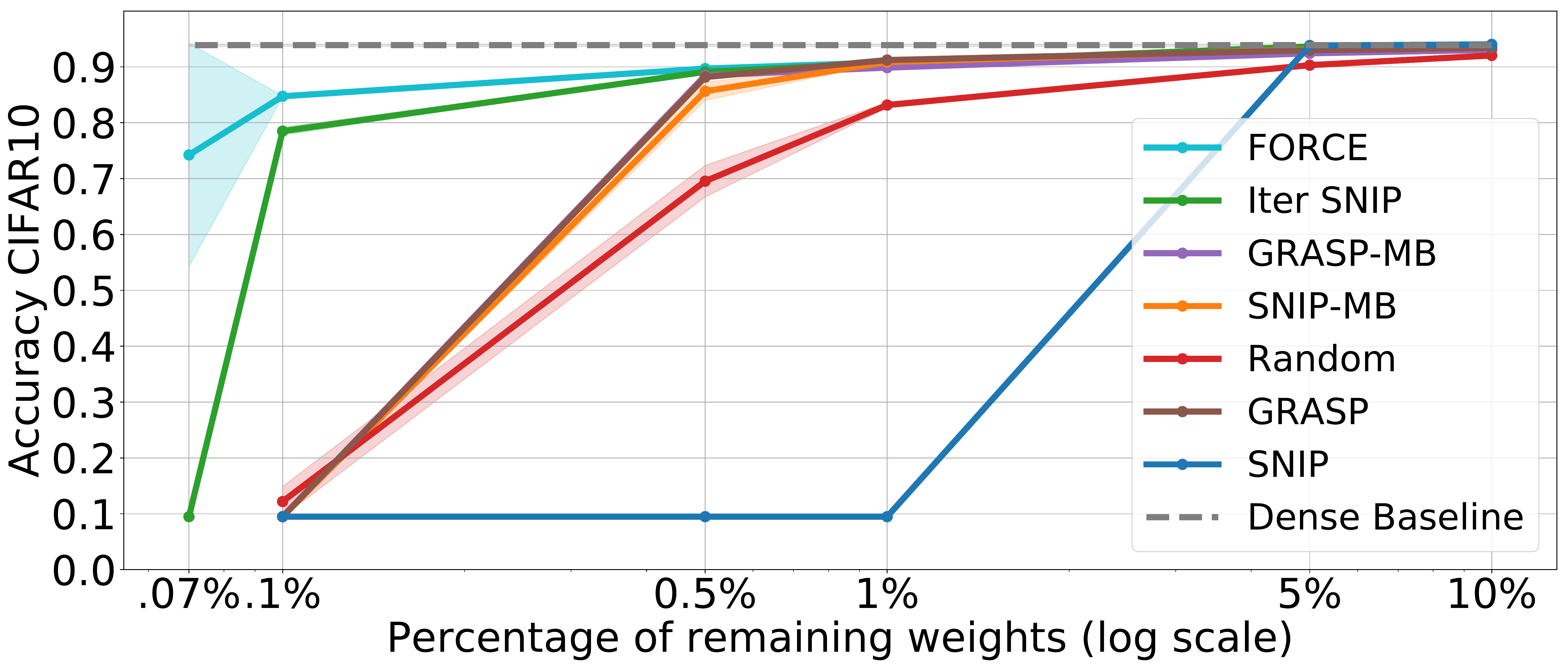}
\caption{VGG19}
\end{subfigure}

\caption{Test accuracies on CIFAR-10 for different pruning methods. With increased number of batches (-MB) one-shot methods are more robust at higher sparsity levels, but our gradual pruning approaches can go even further. Moreover, FORCE consistently reaches higher accuracy than other methods across most sparsity levels. Each point is an average of $\geq3$ runs of prune-train-test. The shaded areas denote the standard deviation of the runs (too small to be visible in some cases). Note that in (b), GRASP and GRASP-MB overlap.}
\label{fig:figure_3} \vspace{-1ex}
\end{figure}

We observe that for moderate sparsity levels, one batch is sufficient for both SNIP and GRASP as reported in~\citet{snip, grasp}. However, as we increase the level of sparsity, the performance of SNIP and GRASP degrades dramatically. For example, at $99.0\%$ sparsity, SNIP drops down to $10\%$ accuracy for both ResNet50 and VGG19, which is equivalent to random guessing as there are $10$ classes. Note, in the case of randomly pruned networks, accuracy is nearly $75\%$ and $82\%$ for ResNet50 and VGG19, respectively, which is significantly better than the performance of SNIP. 
However, to our surprise, just using multiple batches to compute the connection sensitivity used in SNIP improves it from $10\%$ to almost $90\%$. This clearly indicates that a better approximation of the connection sensitivity is necessary for good performance in the case of high sparsity regime. Similar trends, although not this extreme, can be observed in the case of GRASP as well. On the other hand, gradual pruning approaches are much more robust in terms of sparsity for example, in the case of $99.9\%$ pruning, while one-shot approaches perform as good as a random classifier (nearly $10\%$ accuracy), both FORCE and Iterative SNIP obtain more than $80\%$ accuracy. While the accuracies obtained at higher sparsities might have degraded too much for some use cases, we argue this is an encouraging result, as no approach before has pruned a network at initialization to such extremes while keeping the network trainable and these results might encourage the community to improve the performance further. 
Finally, gradual pruning methods consistently improve over other methods even at moderate sparsity levels (refer to Fig \ref{fig:zoomed_plots}), this motivates the use of FORCE or Iterative SNIP instead of other methods by default at any sparsity regime. Moreover, the additional cost of using iterative pruning instead of SNIP-MB is negligible compared to the cost of training and is significantly cheaper than GRASP-MB, further discussed in section \ref{analysis}.

\begin{figure}[ht]
\centering
\begin{subfigure}[b]{.49\linewidth}
\includegraphics[width=\linewidth]{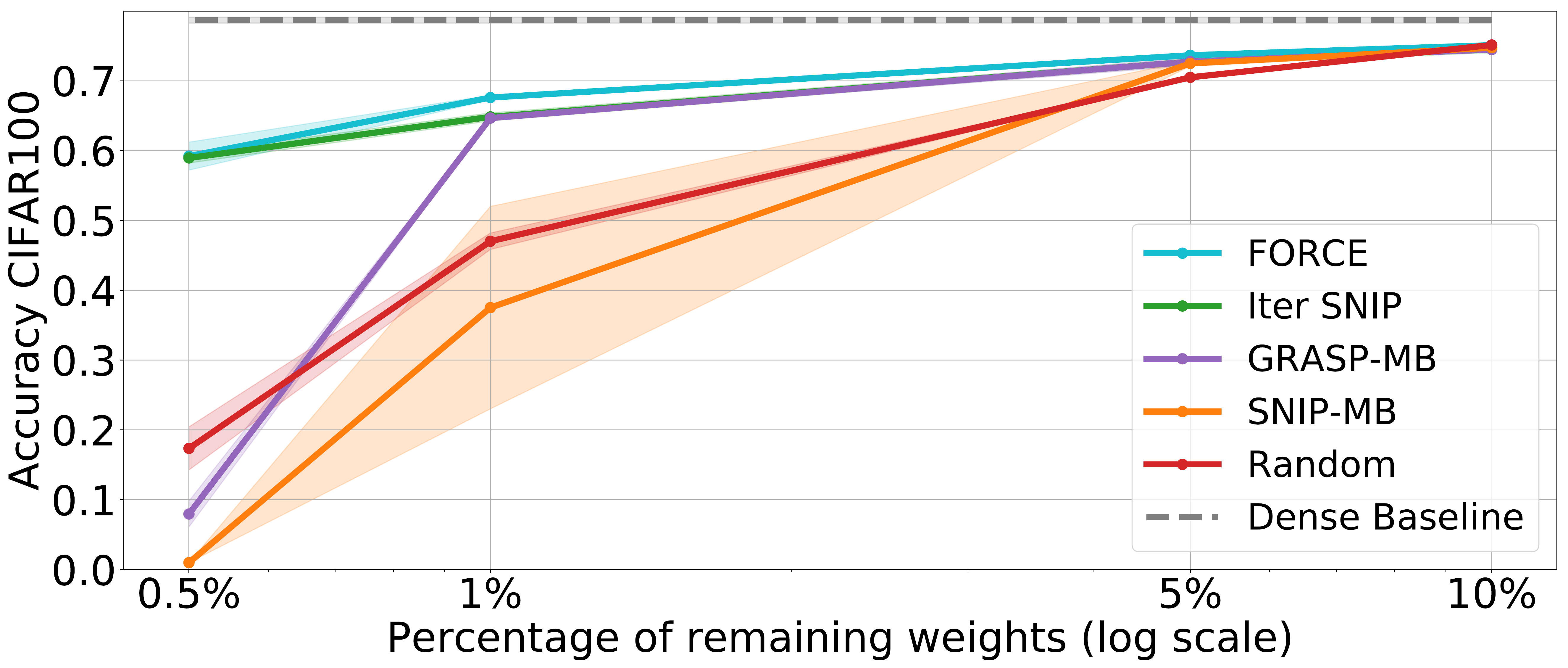}
\caption{CIFAR 100 - Resnet50}
\end{subfigure}
\begin{subfigure}[b]{.49\linewidth}
\includegraphics[width=\linewidth]{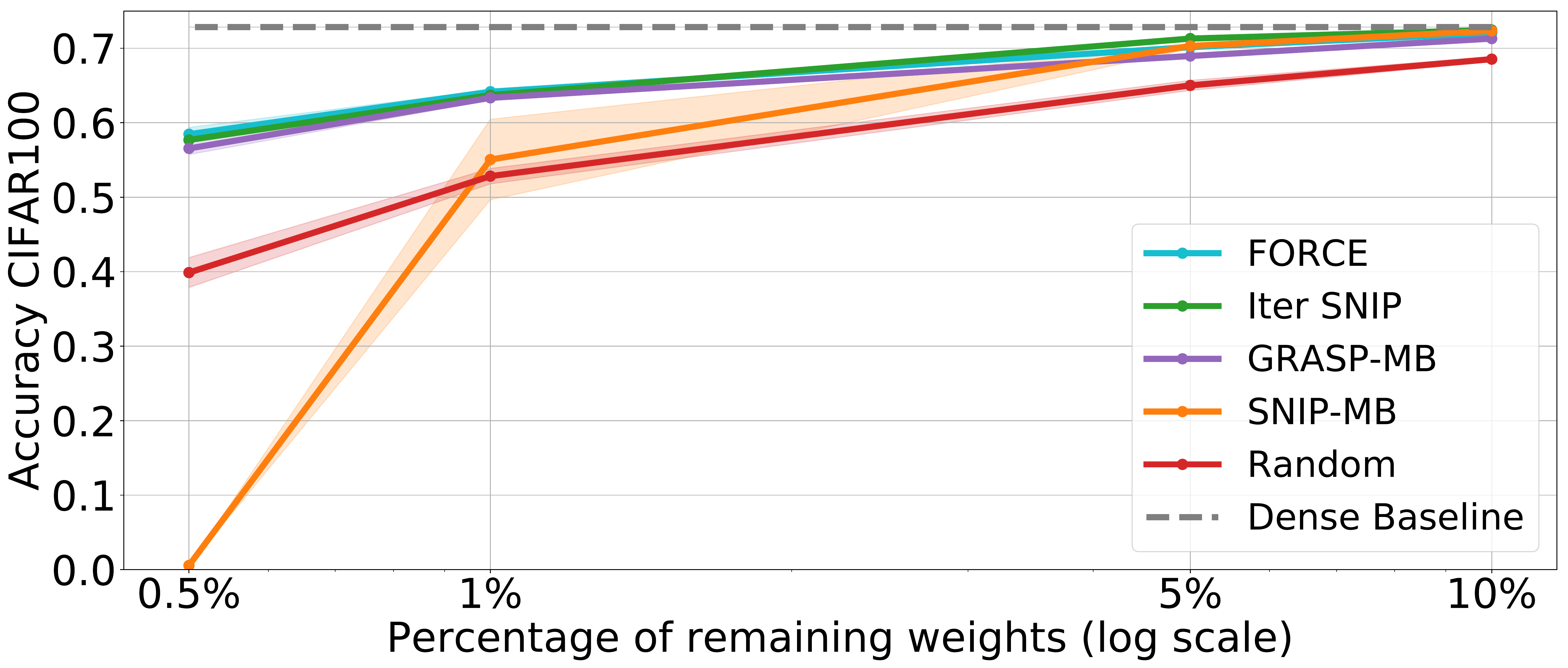}
\caption{CIFAR 100 - VGG19}
\end{subfigure}
\centering
\begin{subfigure}[b]{.49\linewidth}
\includegraphics[width=\linewidth]{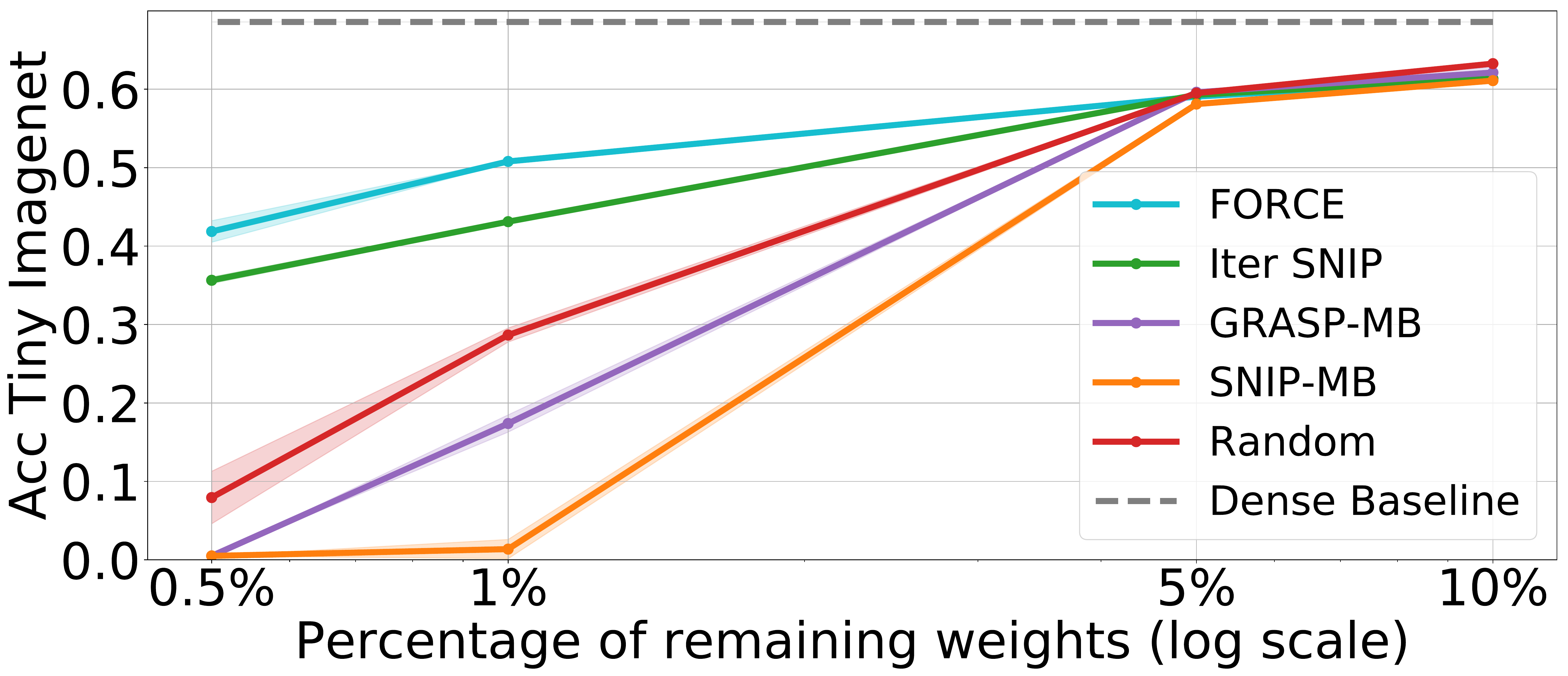}
\caption{Tiny Imagenet - Resnet50}
\end{subfigure}
\begin{subfigure}[b]{.49\linewidth}
\includegraphics[width=\linewidth]{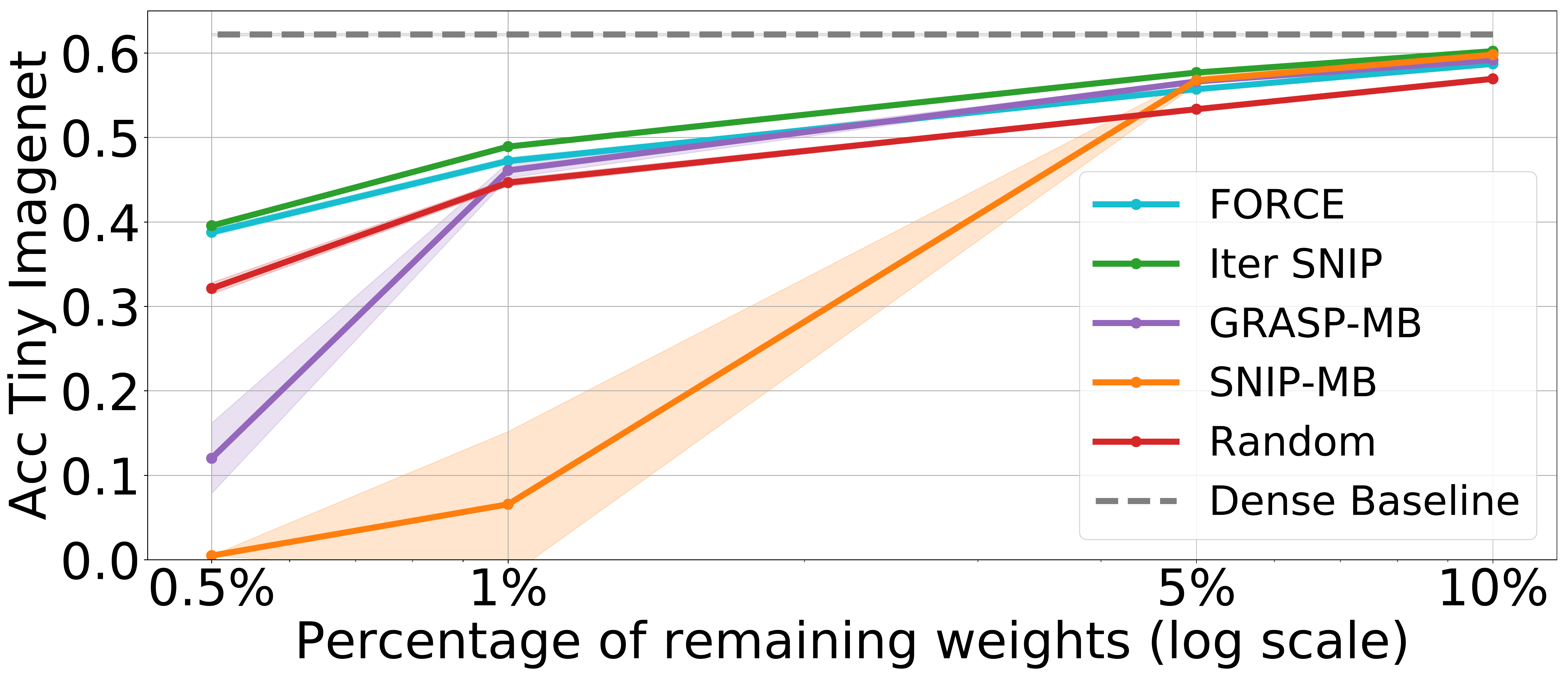}
\caption{Tiny Imagenet - VGG19}
\end{subfigure}

\caption{Test accuracies on CIFAR-100 and Tiny Imagenet for different pruning methods. Each point is the average over 3 runs of prune-train-test. The shaded areas denote the standard deviation of the runs (too small to be visible in some cases).}
\label{fig:figure_6} \vspace{-3.5ex}
\end{figure} 

\vspace{-0.5ex}
\subsection{Results on larger datasets}
\vspace{-1ex}
We now present experiments on large datasets.~\citet{grasp} and~\citet{snip} suggest using a batch of size {\raise.17ex\hbox{$\scriptstyle\sim$}}10 times the number of classes, which is very large in these experiments. Instead, for memory efficiency, we average the saliencies over several mini-batches. For CIFAR100 and Tiny-ImageNet, we average 10 and 20 batches per iteration respectively, with 128 examples per batch. As we increase the number of batches per iteration, computing the pruning mask becomes more expensive. From Fig \ref{fig:figure_2}, we observe that the accuracy converges after just a few iterations. Thus, for the following experiments we used 60 iterations. For a fair comparison, we run SNIP and GRASP with $T \times B$ batches, where $T$ is the number of iterations and $B$ the number of batches per iteration in our method. We find the results, presented in Fig \ref{fig:figure_6}, consistent with trends in CIFAR-10.

\renewcommand{\arraystretch}{1.1}
\begin{table}[b]
\vspace{-2.5ex}
        \caption{Test accuracies on Imagenet for different pruning methods and sparsities.}
        \centering
        \small
        \begin{tabular}{lcccccccc}
        \toprule
        \textbf{Network}&\multicolumn{4}{c}{VGG19}&\multicolumn{4}{c}{Resnet50}\\
        \hline
        \textbf{Sparsity percentage}&\multicolumn{2}{c}{90\%}&\multicolumn{2}{c}{95\%}&\multicolumn{2}{c}{90\%}&\multicolumn{2}{c}{95\%}\\
         Accuracy               & Top-1 & Top-5 & Top-1 & Top-5 & Top-1 & Top-5 & Top-1 & Top-5 \\
        \midrule
        (Baseline)              & 73.1  & 91.3  &       &       & 75.6  & 92.8  &       &       \\
        \midrule
        FORCE (Ours)            & \textbf{70.2} & \textbf{89.5} & 65.8 & 86.8  & 64.9 & 86.5 & \textbf{59.0} & \textbf{82.3}  \\
        Iter SNIP (Ours)        & 69.8 & 89.5 &  65.9 & 86.9  & 63.7 & 85.5 &  54.7 & 78.9    \\
        GRASP \cite{grasp}      & 69.5 & 89.2 &  \textbf{67.6} & \textbf{87.8} & \textbf{65.4} & \textbf{86.7} &  46.2 & 66.0  \\
        SNIP \cite{snip}        & 68.5 & 88.8 &  63.8 & 86.0  & 61.5 & 83.9 &  44.3 & 69.6  \\
        Random                  & 64.2 & 86.0 &  56.6 & 81.0  & 64.6 & 86.0 &  57.2 & 80.8  \\
        
         \bottomrule
        \end{tabular}
        \label{tab:imagenet} \vspace{-2ex}
\end{table} 
\renewcommand{\arraystretch}{1}

In the case of Imagenet, we use a batch size of 256 examples and 40 batches per iteration. 
We use the official implementation of VGG19 with batch norm and Resnet50 from ~\citet{pytorch}. As presented in Table \ref{tab:imagenet}, gradual pruning methods are consistently better than SNIP, with a larger gap as we increase sparsity. We would like to emphasize that FORCE is able to prune $90\%$ of the weights of VGG while losing less than $3\%$ of the accuracy, we find this remarkable for a method that prunes before any weight update. Interestingly, GRASP performs better than other methods at 95\% sparsity (VGG), moreover, it also slightly surpasses FORCE for Resnet50 at 90\%, however, it under-performs random pruning at 95\%. In fact, we find all other methods to perform worse than random pruning for Resnet50. We hypothesize that, for a much more challenging task (Imagenet with 1000 classes), Resnet50 architecture might not be extremely overparametrized. For instance, VGG19 has 143.68M parameters while Resnet50 uses 25.56M (refer to Table \ref{tab:params}). On the other hand, the fact that random pruning can yield relatively trainable architectures for these sparsity levels is somewhat surprising and might indicate that there still is room for improvement in this direction. Results seem to indicate that the FORCE saliency is a step in the right direction and we hypothesize further improvements on its optimization might lead to even better performance. In Appendix~\ref{sec:mobilenet}, we show superior performance of our approach on the {\bf Mobilenet-v2 architecture}~\citep{movilenet} as well, which is much more "slim" than Resnet and VGG \footnote{Mobilenet has 2.3M params compared to 20.03M and 23.5M of VGG and Resnet, respectively.}. 

\vspace{-1ex}
\subsection{Analysis}\label{analysis}
\vspace{-1ex}
\textbf{Saliency optimization} 
To experimentally validate our approach \eqref{eq:iterative}, we conduct an ablation study where we compute the FORCE saliency after pruning \eqref{eq:snip-exact} while varying the number of iterations $T$ for different sparsity levels. In Fig \ref{fig:figure_2} (left) we present the relative change in saliency as we vary the number of iterations $T$, note when $T=1$ we recover one-shot SNIP. As expected, for moderate levels of sparsity, using multiple iterations does not have a significant impact on the saliency. Nevertheless, as we target higher sparsity levels, we can see that the saliency can be better optimized when pruning iteratively. In Appendix \ref{sec:saliency-T} we include results for FORCE where we observe similar trends.

\textbf{Hyperparameter robustness}
As shown in Figures \ref{fig:figure_3} and \ref{fig:figure_6}, for low sparsity levels, all methods are comparable, but as we move to higher sparsity levels, the gap becomes larger. In Fig \ref{fig:figure_2} (middle) we fix sparsity at 99.5\% and study the accuracy as we vary the number of iterations $T$. Each point is averaged over 3 trials. SNIP ($T = 1$) yields sub-networks unable to train (10\% acc), but as we move to iterative pruning ($T > 1$) accuracy increases up to 90\% for FORCE and 89\% for Iter SNIP. Moreover, accuracy is remarkably robust to the choice of $T$, the best performance for both FORCE and Iter SNIP is with more iterations, however a small number of iterations already brings a huge boost. This suggests these methods might be used by default by a user without worrying too much about hyper-parameter tuning, easily adapting the amount of iterations to their budget.

\textbf{Pruning cost}
As shown in Fig \ref{fig:figure_3}, SNIP performance quickly degrades beyond 95\% sparsity. \cite{grasp} suggested GRASP as a more robust alternative, however, it needs to compute a Hessian vector product which is significantly more expensive in terms of memory and time. In Fig \ref{tab:pruning_cost} (right), we compare the time cost of different methods to obtain the pruning masks along with the corresponding accuracy. We observe that both SNIP and GRASP are fragile when using only one batch (red accuracy indicates performance below random baseline). When using multiple batches their robustness increases, but so does the pruning cost. Moreover, we find that gradual pruning based on the FORCE saliency is much cheaper than GRASP-MB when using equal amount of batches, this is because GRASP involves an expensive Hessian vector product. Thus, FORCE (or Iterative SNIP) would be preferable over GRASP-MB even when they have comparable accuracies. 

\begin{figure}
\begin{minipage}[t]{.285\linewidth}
\centering
\includegraphics[width=\linewidth]{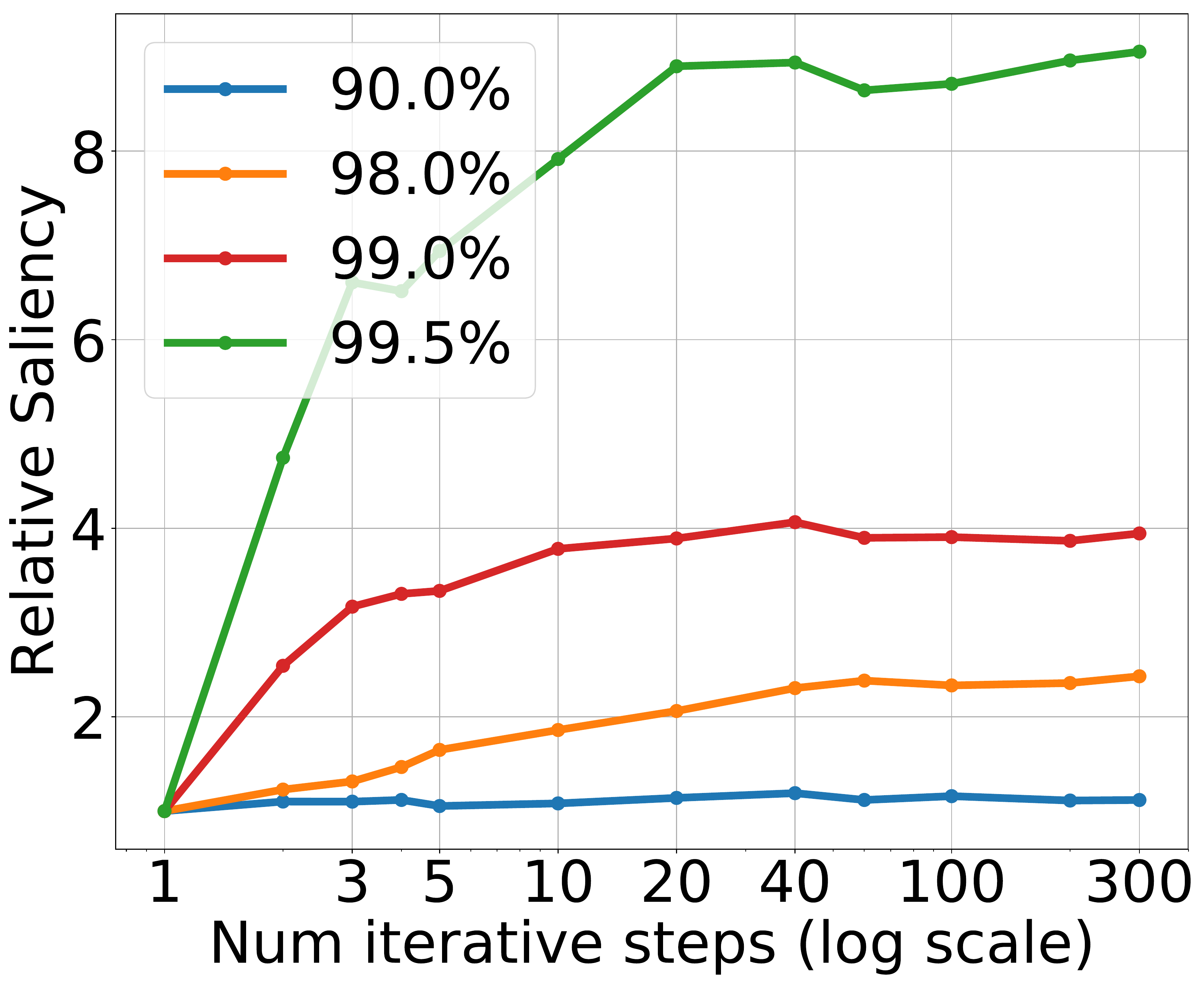}
\end{minipage}
\begin{minipage}[t]{.3\linewidth}
\centering
\includegraphics[width=\linewidth]{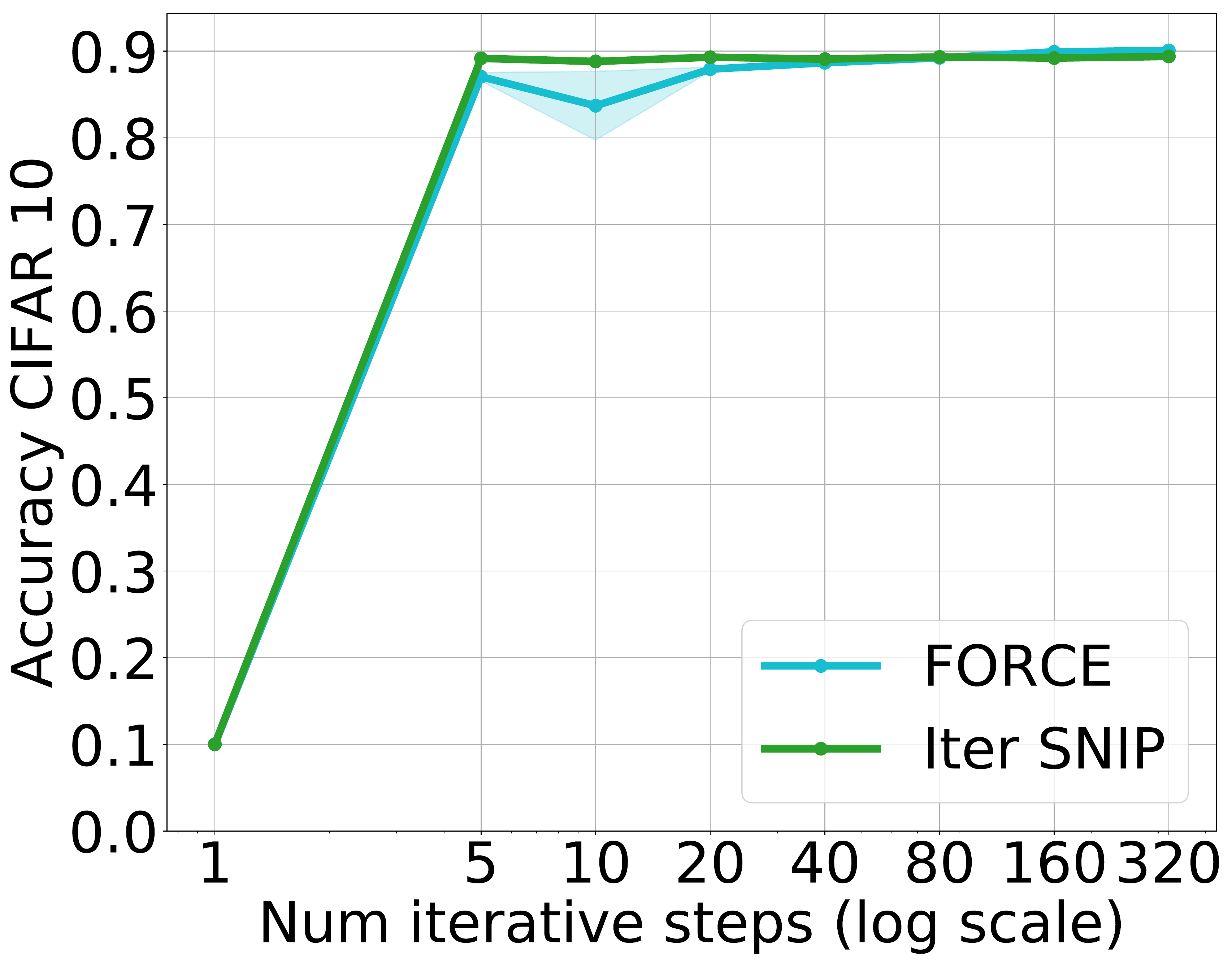}
\end{minipage}
\begin{minipage}[b]{.3\linewidth}
\small{
\begin{tabular}{c|c|c}
    Algorithm   &   Time & Acc \\
    \hline
    SNIP (1 b) & 14 s & \textcolor{red}{10.0}\\
    GRASP (1 b) & 47 s & \textcolor{red}{19.2} \\
    FORCE (20 it) & 57 s & 87.9 \\
    Iter SNIP (20 it) & 57 s & 88.9 \\
    SNIP (300 b) & 4.8 m & 76.1 \\
    FORCE (300 it) & 11.6 m & \textbf{90.0} \\
    Iter SNIP (300 it) & 11.6 m & 89.0 \\
    GRASP (300 b) & 50.7 m & 88.5 \\
\end{tabular}
}
\end{minipage}
\caption{ \textbf{Left:} FORCE saliency \eqref{saliency-snip-exact} obtained with \eqref{eq:iterative} when varying $T$ normalized by the saliency with one-shot SNIP ($T=1$). Pruning iteratively brings more gains for higher sparsity levels. Error bars not shown for better visualization. \textbf{Middle:} Test acc pruning with FORCE and Iter SNIP at 99.5\% sparsity for different $T$. Both methods are extremely robust to the choice of $T$. \textbf{Right:} Wall time to compute pruning masks for CIFAR10/Resnet50/TeslaK40m vs acc at 99.5\% sparsity; ($x$ b) means we used $x$ batches to compute the gradients while ($x$ it) denotes we used $x$ pruning iterations, with one batch per iteration. Numbers in \textcolor{red}{red} indicate performance below random pruning.}
\label{tab:pruning_cost}
\label{fig:figure_2} \vspace{-3.5ex}
\end{figure}

\textbf{FORCE vs Iterative SNIP} Empirically, we find that FORCE tends to outperform Iter SNIP more often than not, suggesting that allowing weights to recover is indeed beneficial despite having less theoretical guarantees (see \textit{gradient approximation} in Section \ref{sec: Extreme pruning}). Thus, we would make FORCE algorithm our default choice, especially for Resnet architectures. In Appendix \ref{sec:evo_prun_masks} we empirically observe two distinct phases when pruning with FORCE. The first one involves exploration (early phase) when the amount of pruned and recovered weights seem to increase, indicating exploration of masks that are quite different from each other. The second, however, shows rapid decrease in weight recovery, indicating a phase where the algorithm converges to a more constrained topology. As opposed to Iter SNIP, the possibility of the exploration of many possible sub-networks before converging to a final topology might be the reason behind the slightly improved performance of FORCE. But this exploration comes at a price, in Fig \ref{fig:figure_2} (middle and left) we observe how, despite FORCE reaching a higher accuracy when using enough steps, if we are under a highly constrained computational budget and can only afford a few pruning iterations, Iter SNIP is more likely to obtain a better pruning mask. This is indeed expected as FORCE might need more iterations to converge to a good sub-space, while Iter SNIP will be forced to converge by construction. A combination of FORCE and Iter SNIP might lead to an even better approach, we leave this for future work.

\textbf{Early pruning as an additional baseline} 
Our gradual pruning approaches (SNIP-MB and GRASP-MB as well) use multiple batches to obtain a pruned mask, considering that pruning can be regarded as a form of training \citep{piggyback}, we create another baseline for the sake of completeness. We train a network for one epoch, a similar number of iterations as used by our approach, and then use magnitude pruning to obtain the final mask, we call this approach {\bf early pruning} (more details in Appendix \ref{sec: early pruning}). Interestingly, we find that early pruning tends to perform worse than SNIP-MB (and gradual pruning) for Resnet, and shows competitive performance at low sparsity level for VGG but with a sharp drop in the performance as the sparsity level increases. Even though these experiments support the superiority of our approach, we would like to emphasize that they do not conclude that any early pruning strategy would be suboptimal compared to pruning at initialization as an effective approach in this direction might require devising a well thought objective function.


\textbf{Iterative pruning to maximize the Gradient Norm} In Sec \ref{iter_pruning}, we have seen Iterative SNIP can be used to optimize the FORCE saliency. We also tried to use GRASP iteratively, however, after a few iterations the resulting networks were not trainable. Interestingly, if we apply the \textit{gradient approximation} to GRASP saliency (instead of Taylor), we can come up with a different iterative approximation to maximize the gradient norm after pruning. We empirically observe this method is more robust than GRASP to high sparsity levels. This suggests that 1) Iterative pruning, although beneficial, can not be trivially applied to any method. 2) The gradient approximation is more general than in the context of FORCE/SNIP sensitivity. We present further details and results in Appendix \ref{grasp-it}.

\vspace{-1.5ex}
\section{Discussion}
\vspace{-1ex}
Pruning at initialization has become an active area of research both for its practical and theoretical interests. In this work, we discovered that existing methods mostly perform  below random pruning at extreme sparsity regime. We presented FORCE, a new saliency to compute the connection sensitivity after pruning, and two approximations to progressively optimize FORCE in order to prune networks at initialization. We showed that our methods are significantly better than the existing approaches for pruning at extreme sparsity levels, and are at least as good as the existing ones for pruning at moderate sparsity levels. We also provided theoretical insights on why progressive skeletonization is beneficial at initialization, and showed that the cost of iterative methods is reasonable compared to the existing ones. Although pruning iteratively has been ubiquitous in the pruning community, it was not evident that pruning at initialization might benefit from this scheme. Particularly, not every approximation could be used for gradual pruning as we have shown with GRASP. However, the \textit{gradient approximation} allowed us to gradually prune while maximizing either the gradient norm or FORCE. 
We consider our results might encourage future work to further investigate the exploration/exploitation trade-off in pruning and find more efficient pruning schedules, not limited to the pruning at initialization.

\vspace{-1ex}
\subsubsection*{Acknowledgments}
\vspace{-0.5ex}
This work was supported by the Royal Academy of Engineering under the Research Chair and Senior Research Fellowships scheme, EPSRC/MURI grant EP/N019474/1 and Five AI Limited. Pau de Jorge was fully funded by NAVER LABS Europe. Amartya Sanyal acknowledges support from The Alan Turing Institute under the Turing Doctoral Studentship grant TU/C/000023. Harkirat was supported using a Tencent studentship through the University of Oxford.

\newpage 

\bibliography{iclr2021_conference}
\bibliographystyle{iclr2021_conference}

\medskip

\newpage
\appendix
\section{Pruning implementation details} \label{pruning-details}
We present experiments on CIFAR-10/100~\citep{cifar}, which consists of 60k 32$\times$32 colour images divided into 10$/$100 classes, and also on Imagenet challenge ILSVRC-2012~\citep{imagenet} and its smaller version Tiny-ImageNet, which respectively consist of 1.2M$/$1k and 100k$/$200 images$/$classes.
Networks are initialized using the Kaiming normal initialization~\citep{he_init}. For CIFAR datasets, we train Resnet50\footnote{\url{https://github.com/kuangliu/pytorch-cifar/blob/master/models/resnet.py}} and VGG19\footnote{\url{https://github.com/alecwangcq/GraSP/blob/master/models/base/vgg.py}} architectures during 350 epochs with a batch size of 128. We start with a learning rate of $0.1$ and divide it by 10 at 150 and 250 epochs. As optimizer we use SGD with momentum $0.9$ and weight decay $5\times 10^{-4}.$ We separate 10\% of the training data for validation and report results on the test set. We perform mean and std normalization and augment the data with random crops and horizontal flips. For Tiny-Imagenet, we use the same architectures. We train during 300 epochs and divide the learning rate by 10 at 1/2 and 3/4 of the training. Other hyper-parameters remain the same. For ImageNet training, we adapt the official code\footnote{ \url{https://github.com/pytorch/examples/tree/master/imagenet}} of~\citet{pytorch} and we use the default settings. In this case, we use the Resnet50 and VGG19 with batch normalization architectures as implemented in~\citet{pytorch}. 

In the case of FORCE and Iter SNIP, we adapt the same public implementation\footnote{\url{https://github.com/mi-lad/snip}} of SNIP as \cite{grasp}. Instead of defining an auxiliary mask to compute the saliencies, we compute the product of the weight times the gradient, which was shown to be equivalent in \cite{snip-2}. As for GRASP, we use their public code.\footnote{\url{https://github.com/alecwangcq/GraSP}} After pruning, we implement pruned connections by setting the corresponding weight to 0 and forcing the gradient to be 0. This way, a pruned weight will remain 0 during training.

An important difference between SNIP and GRASP implementations is in the way they select the mini-batch to compute the saliency. SNIP implementation simply loads a batch from the dataloader. In contrast, in GRASP implementation they keep loading batches of data until they obtain exactly 10 examples of each class, discarding redundant samples. In order to compare the methods in equal conditions, we decided to use the way SNIP collects the data since it is simpler to implement and does not require extra memory. This might cause small discrepancies between our results and the ones reported in~\citet{grasp}.

\begin{table}[ht]
        \caption{Percentage of weights per layer for each network and dataset.}
        \centering
        \small
        \begin{tabular}{lcccc|cc}
        \hline
        \textbf{Layer type}     & Conv & Fully connected & BatchNorm & Bias & Prunable & Total \\
        \hline
        \textbf{CIFAR10}        &      &      &       &       &        & \\
        Resnet50                & 99.69& 0.09 &  0.11 & 0.11  &  99.78 & 23.52M\\
        VGG19                   & 99.92& 0.03 &  0.03 & 0.03  &  99.95 & 20.04M\\
        \textbf{CIFAR100}       &      &      &       &       &        & \\
        Resnet50                & 98.91& 0.86 &  0.11 & 0.11  &  99.78 & 23.71M\\
        VGG19                   & 99.69& 0.25 &  0.03 & 0.03  &  99.94 & 20.08M\\
        \textbf{TinyImagenet}   &      &      &       &       &        & \\
        Resnet50                & 98.06& 1.71 &  0.11 & 0.11  &  99.78 & 23.91M\\
        VGG19                   & 99.44& 0.51 &  0.03 & 0.03  &  99.94 & 20.13M\\
        \textbf{Imagenet}       &      &      &       &       &        & \\
        Resnet50                & 91.77& 8.01 &  0.10 & 0.11  &  99.79 & 25.56M\\
        VGG19                   & 13.93& 86.05&  0.01 & 0.01  &  99.98 & 143.68M\\
        \hline
        \end{tabular}
        \label{tab:params}
\end{table}

A meaningful design choice regarding SNIP and GRASP implementations is that they only prune convolutional and fully connected layers. These layers constitute the vast majority of parameters in most networks, however, as we move to high sparsity regimes, batch norm layers constitute a non-negligible amount. For CIFAR10, batch norm plus biases constitute 0.2\% and 0.05\% of the parameters of Resnet50 and VGG19 networks respectively. For consistency, we have as well restricted pruning to convolutional and fully connected layers and reported percentage sparsity with respect to the prunable parameters, as is also done in~\citet{snip} and~\citet{grasp} to the best of our knowledge. In Table \ref{tab:params} we show the percentage of prunable weights for each network and dataset we use. In future experiments we will explore the performance of pruning at initialization when including batch norm layers and biases as well. 

\section{Additional accuracy-sparsity plots}
\vspace{-1ex}
In the main text we show the complete range of the accuracy-sparsity curves for the different methods so it is clear why more robust methods are needed. However, it makes it more difficult to appreciate the smaller differences at lower sparsities. In Fig \ref{fig:zoomed_plots} we show the accuracy-sparsity curves where we cut the y axis to show only the higher accuracies.
\begin{figure}
\centering
\begin{subfigure}[b]{.49\linewidth}
\includegraphics[width=\linewidth]{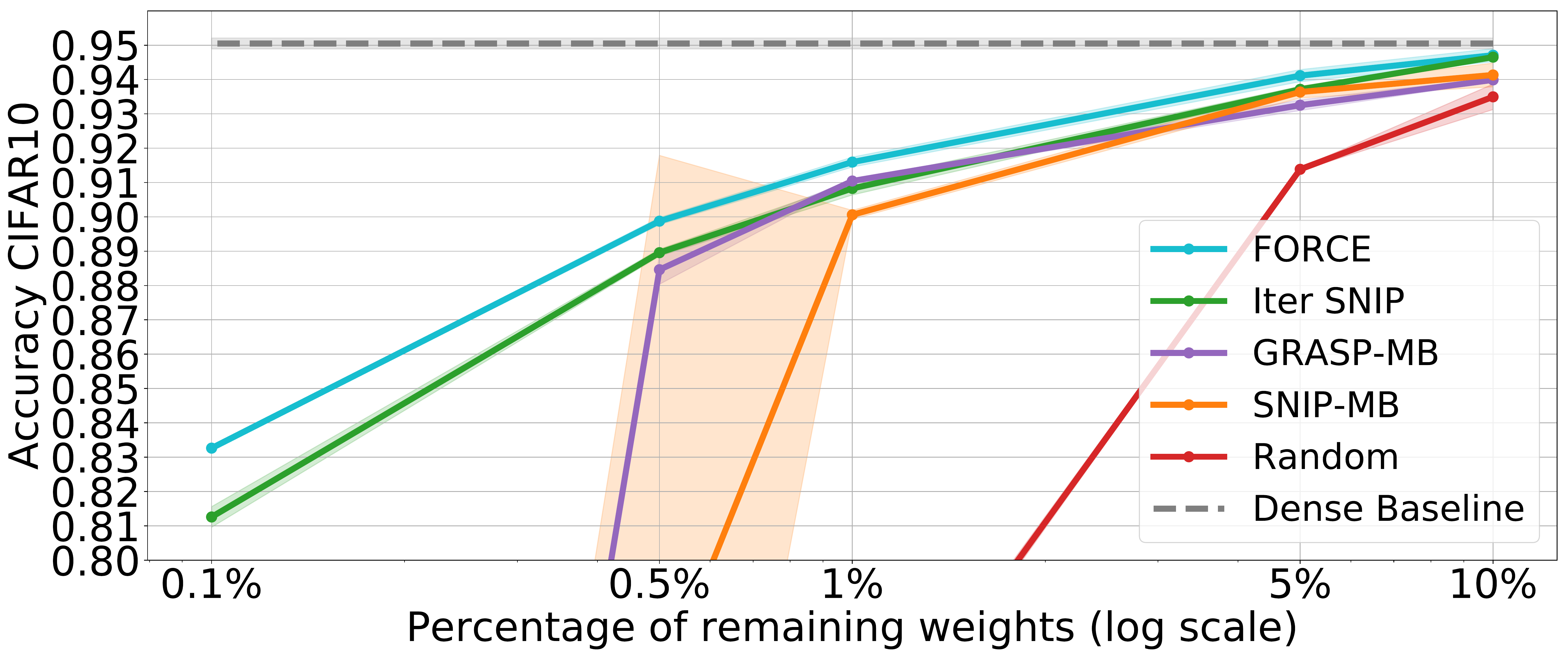}
\caption{Resnet50 CIFAR-10}
\end{subfigure}
\begin{subfigure}[b]{.49\linewidth}
\includegraphics[width=\linewidth]{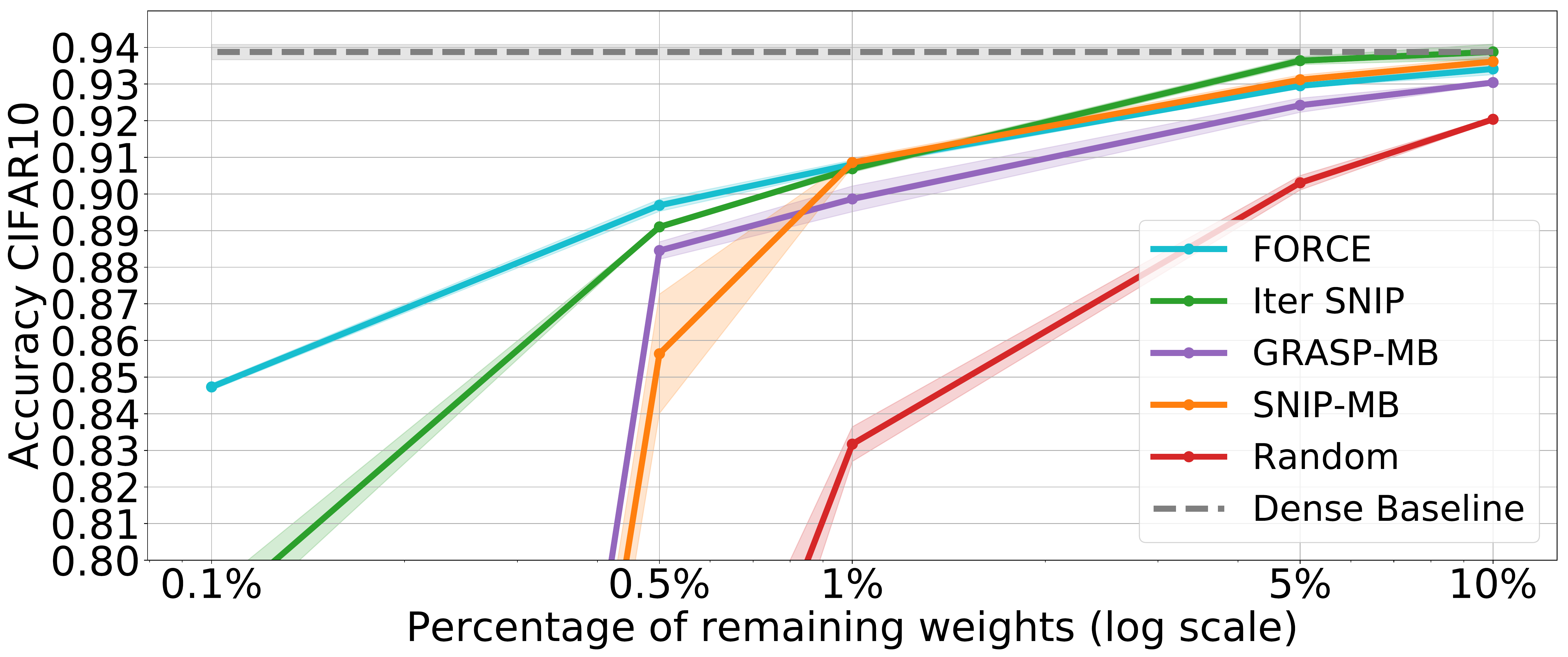}
\caption{VGG19 CIFAR-10}
\end{subfigure}

\begin{subfigure}[b]{.49\linewidth}
\includegraphics[width=\linewidth]{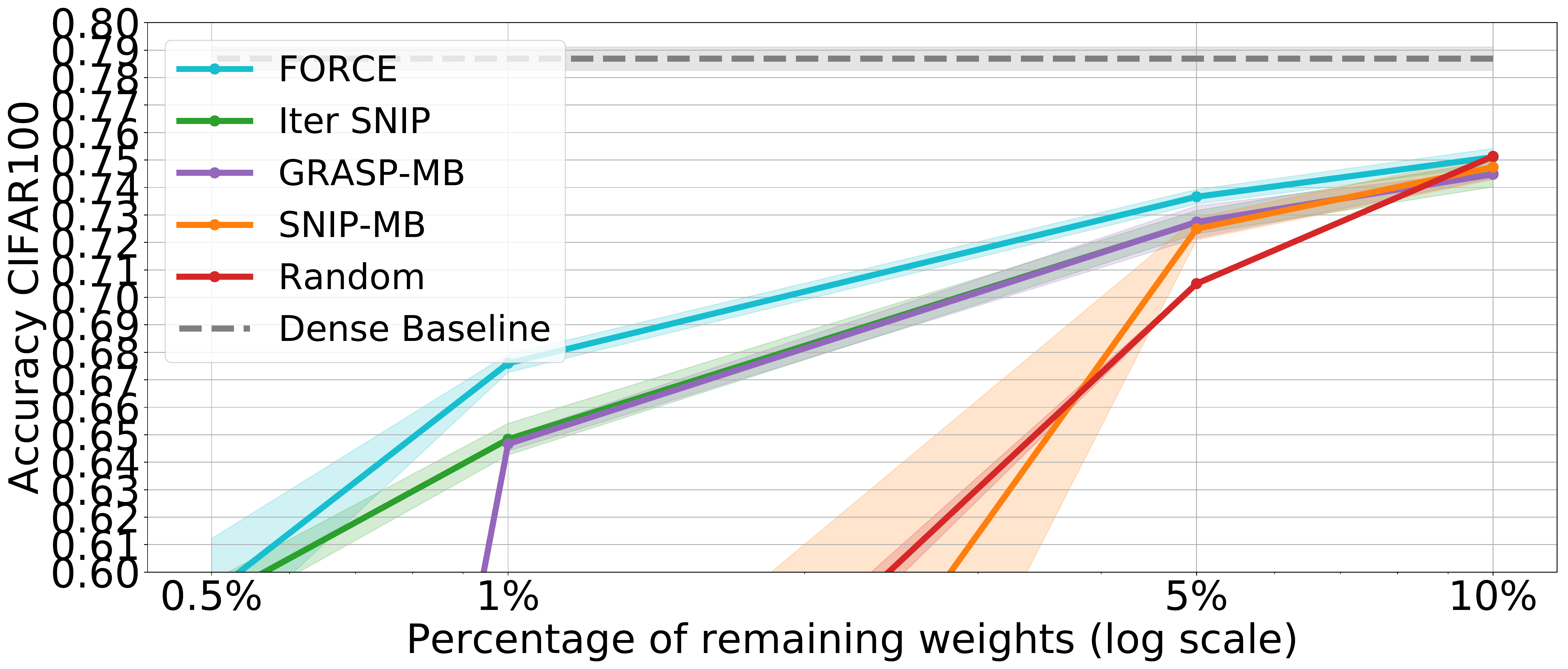}
\caption{Resnet50 CIFAR-100}
\end{subfigure}
\begin{subfigure}[b]{.475\linewidth}
\includegraphics[width=\linewidth]{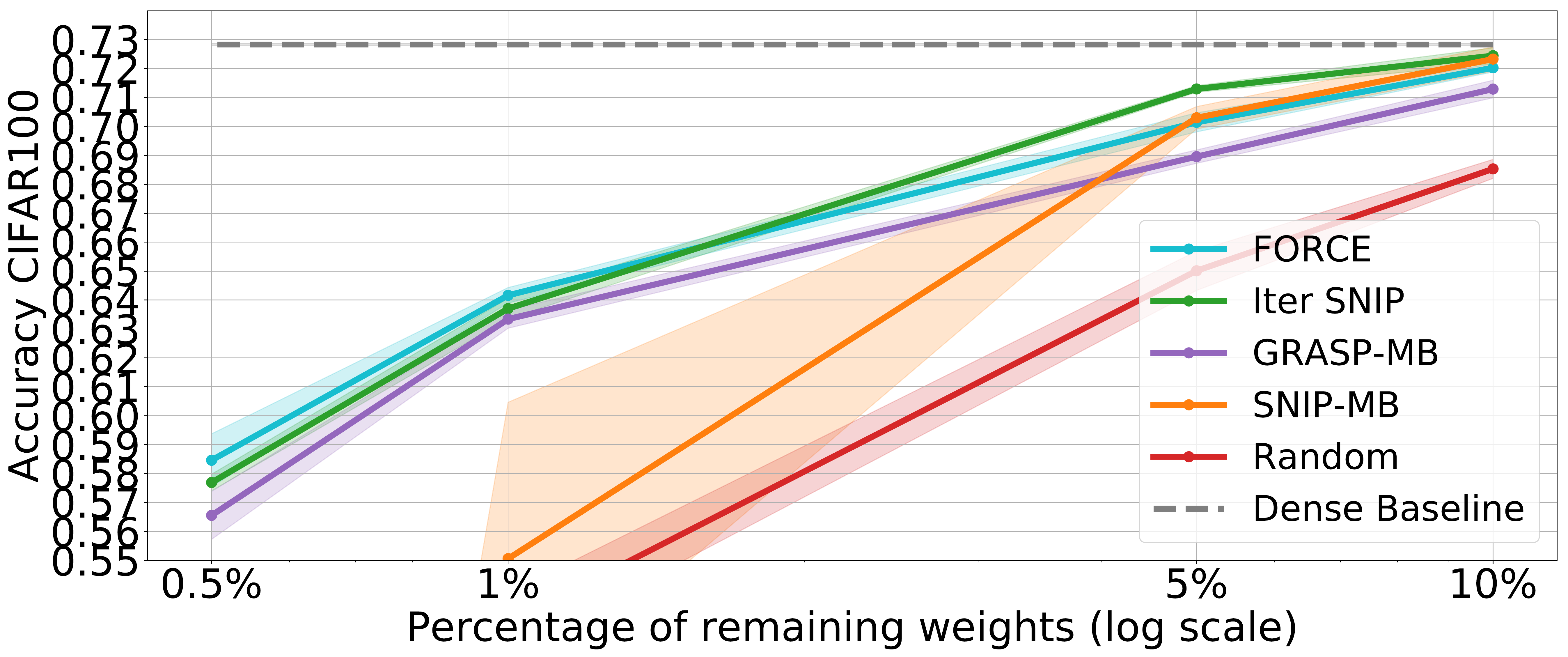}
\caption{VGG19 CIFAR-100}
\end{subfigure}

\begin{subfigure}[b]{.49\linewidth}
\includegraphics[width=\linewidth]{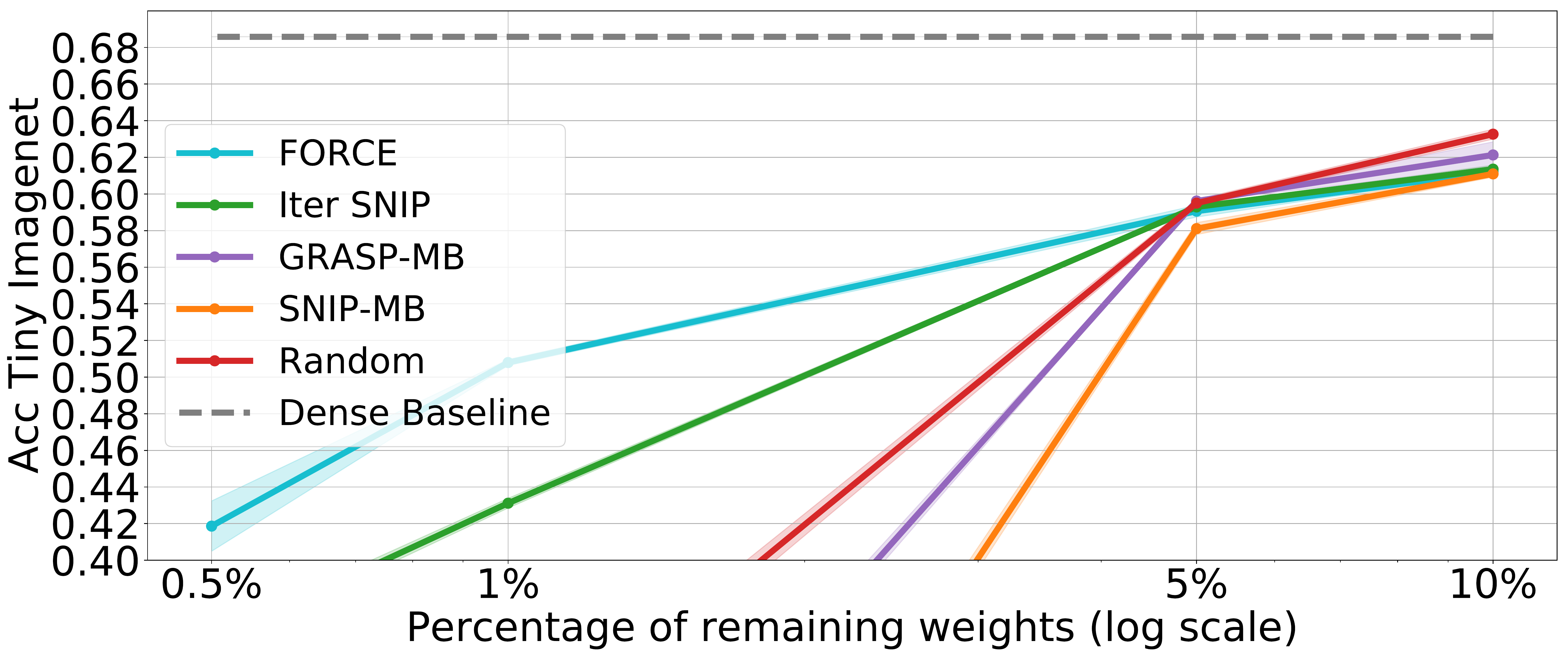}
\caption{Resnet50 Tiny Imagenet}
\end{subfigure}
\begin{subfigure}[b]{.475\linewidth}
\includegraphics[width=\linewidth]{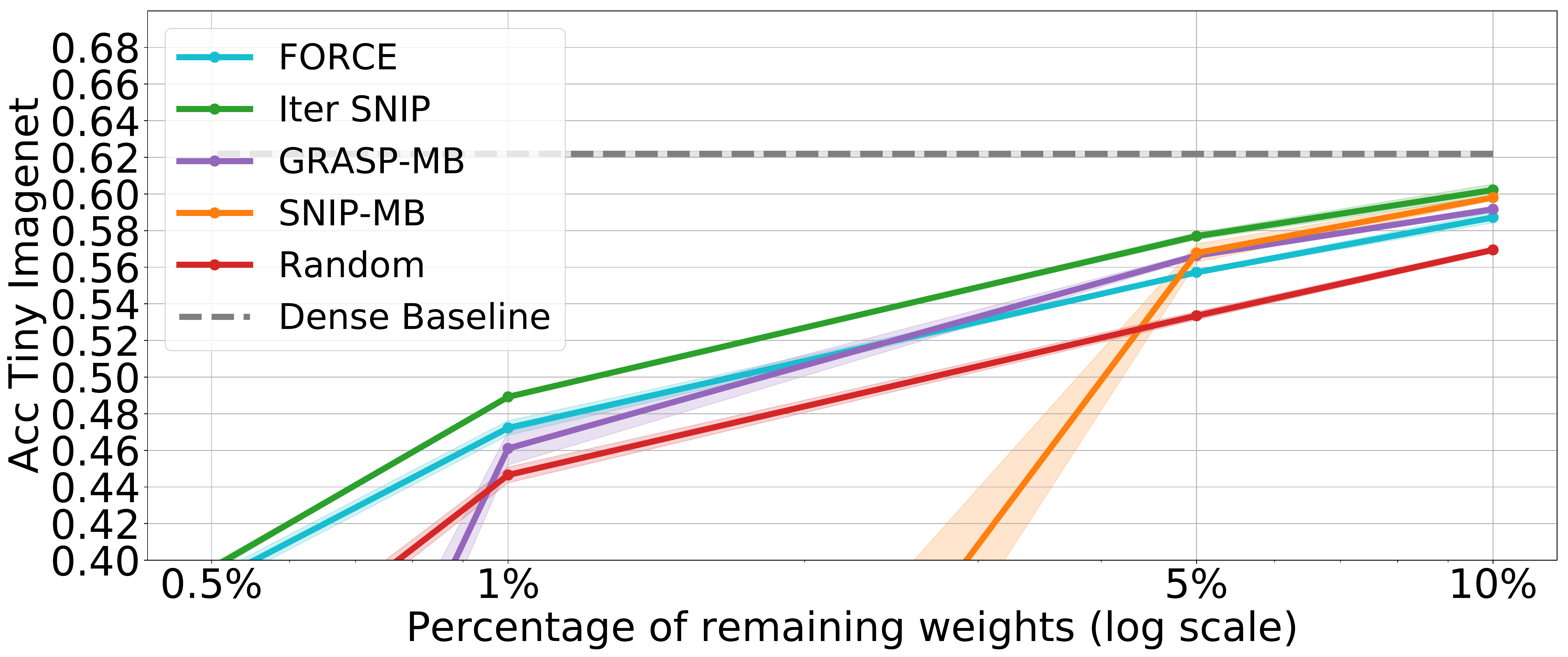}
\caption{VGG19 Tiny Imagenet}
\end{subfigure}
\caption{Test accuracies on CIFAR-10/100 and Tiny Imagenet for different pruning methods. Each point is the average over 3 runs of prune-train-test. The shaded areas denote the standard deviation of the runs (too small to be visible in some cases).}
\label{fig:zoomed_plots} \vspace{-3ex}
\end{figure}

\section{Further analysis of pruning at initialization}
\vspace{-1ex}
\subsection{Saliency vs T} \label{sec:saliency-T}
In Fig \ref{fig:figure_2} (left) we have seen that for higher sparsity levels, FORCE obtains a higher saliency when we increase the number of iterations. In Fig \ref{fig:figure_2_appendix} we compare the relative saliencies as we increase the number of iterations for FORCE and Iterative SNIP. As can be seen, both have a similar behaviour.

\begin{figure}
\centering
\begin{subfigure}[b]{.49\linewidth}
\includegraphics[width=\linewidth]{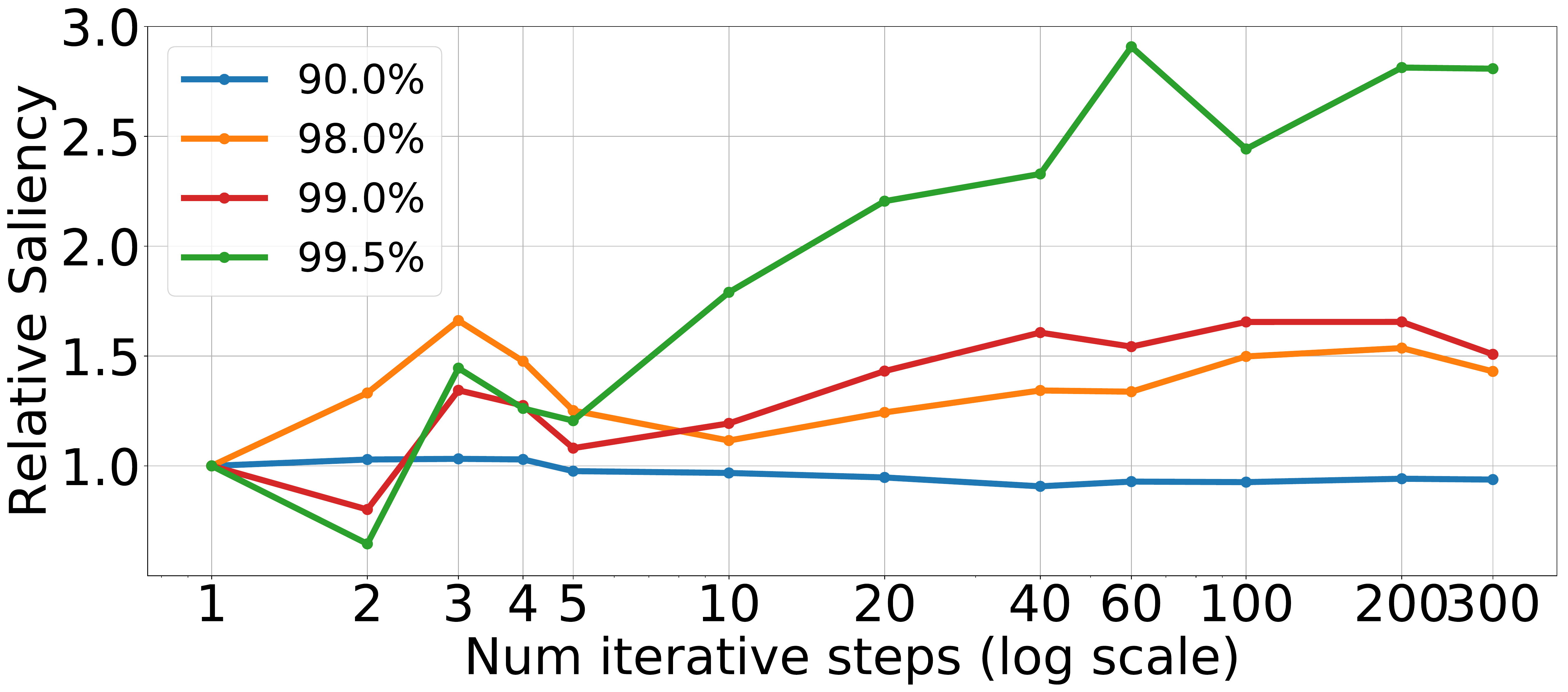}
\caption{Saliency vs $T$ across sparsity levels (FORCE)}
\end{subfigure}
\begin{subfigure}[b]{.475\linewidth}
\includegraphics[width=\linewidth]{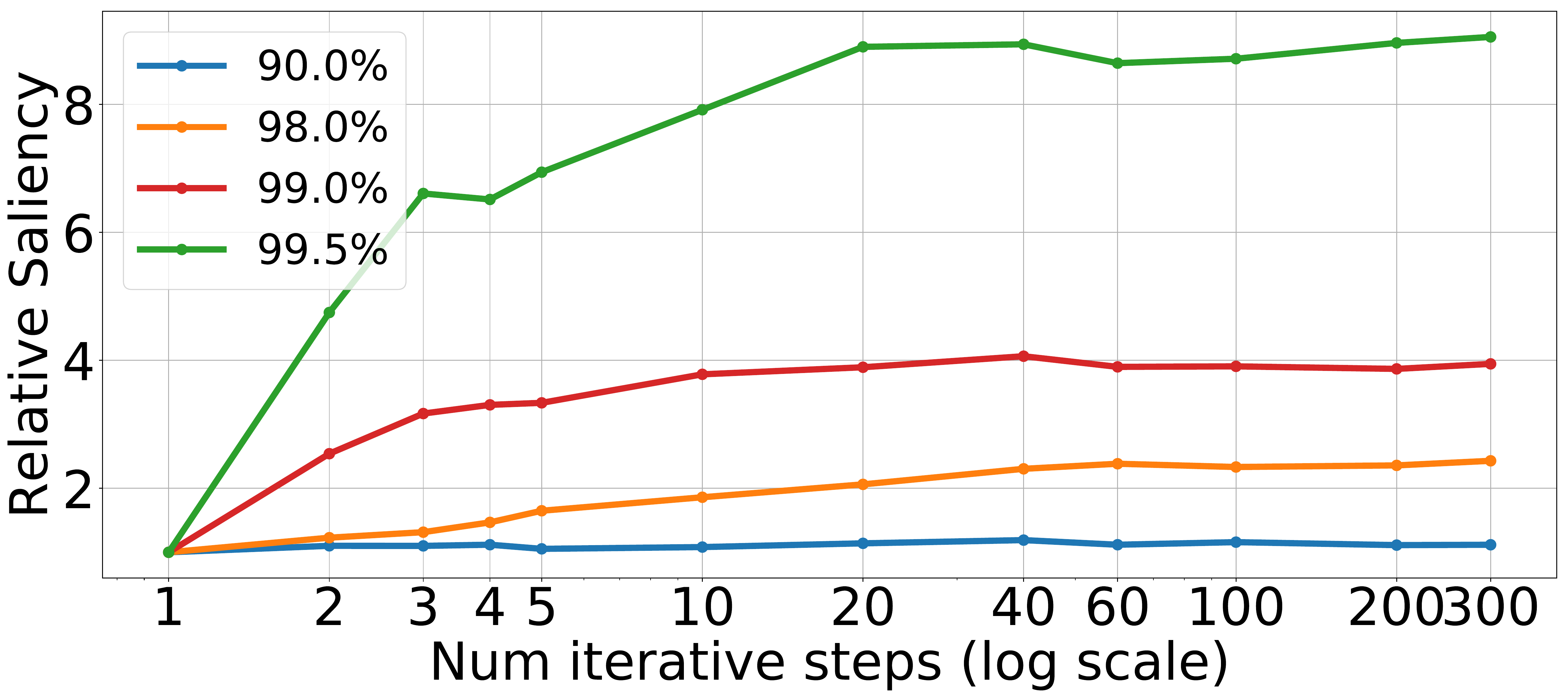}
\caption{Saliency vs $T$ across sparsity levels (Iter SNIP)}
\end{subfigure}
\caption{FORCE saliency \eqref{saliency-snip-exact} obtained with iterative pruning normalized by the saliency obtained with one-shot SNIP, $T=1$. (a) Applying the FORCE algorithm (b) Using Iterative SNIP. Note how both methods have similar behaviour.}
\label{fig:figure_2_appendix}
\end{figure}

\vspace{-1ex}
\subsection{Pruning vs Sparsification} \label{sec:prun-spars}
FORCE algorithm is able to recover pruned weights in later iterations of pruning. In order to do that, we do not consider the intermediate masks as \textit{pruning masks} but rather as \textit{sparsification masks}, where connections are set to 0 but not their gradients. In order to understand how does computing the FORCE \eqref{eq:snip-exact} on a sparsified vs pruned network affect the saliency, we prune several masks with FORCE algorithm at varying sparsity levels. For each mask, we then compute their FORCE saliency either considering the pruned network (gradients of pruned connections will be set to 0 during the backward pass) or the sparsified network (we only set to 0 the connections, but let the gradient signal flow through all the connections). Results are presented in Fig \ref{fig:spars-prun}. We observe that the two methods to compute the saliency are strongly correlated, thus, we can assume that when we use the FORCE algorithm that maximizes the saliency of sparsified networks we will also maximize the saliency of the corresponding pruned networks.
\begin{figure}
\centering
\includegraphics[width=0.4\linewidth]{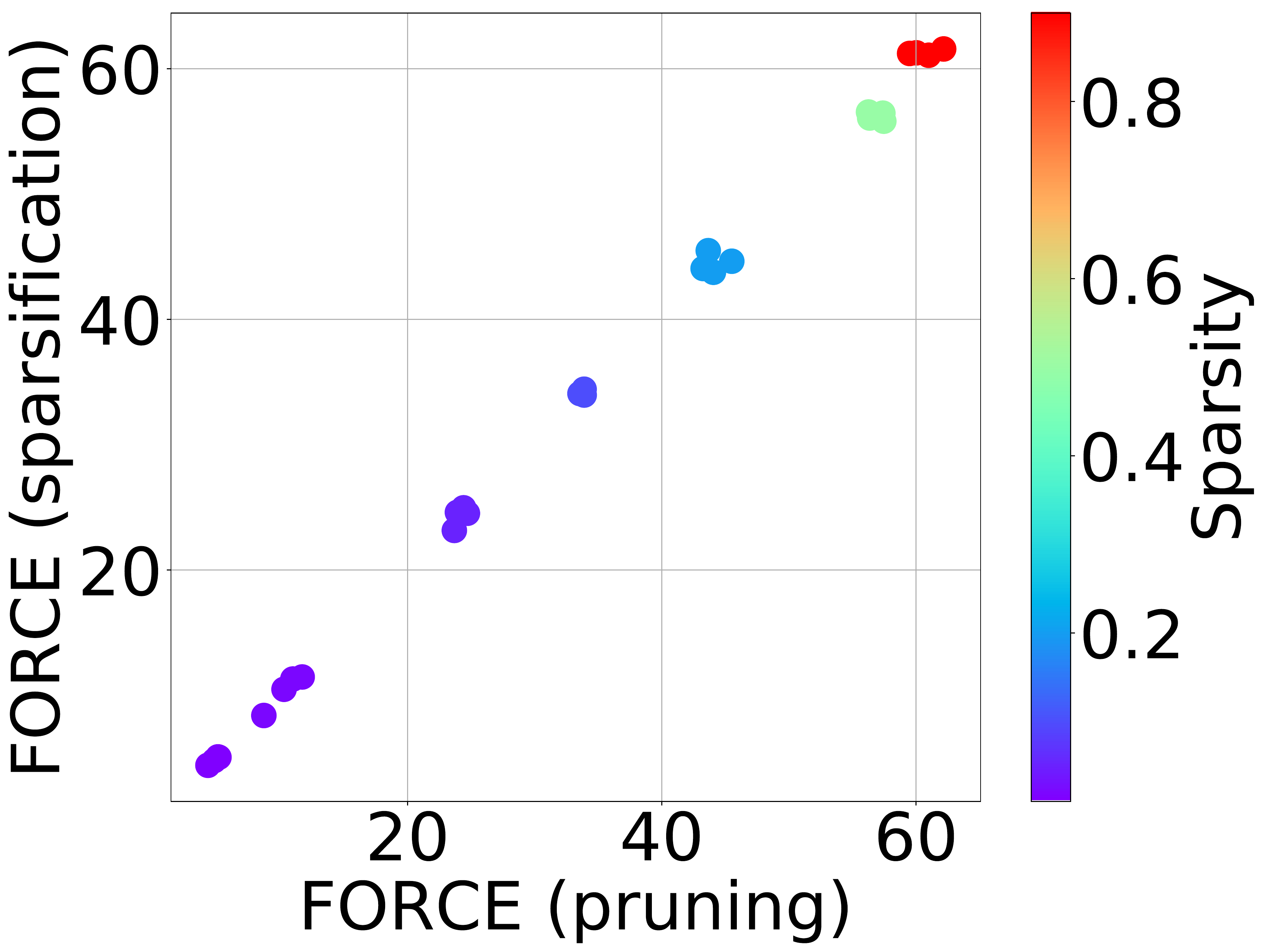}
\caption{FORCE saliency computed for masks as we vary sparsity. FORCE (sparsification) refers to measuring FORCE when we allow the gradients of \textit{zeroed} connections to be non-zero, while FORCE (pruning) cuts all backward signal of any \textit{removed} connection. As can be seen on the plot, they are strongly correlated.}
\label{fig:spars-prun}
\end{figure}



\begin{figure}
\centering
\begin{subfigure}[b]{.325\linewidth}
\includegraphics[width=\linewidth]{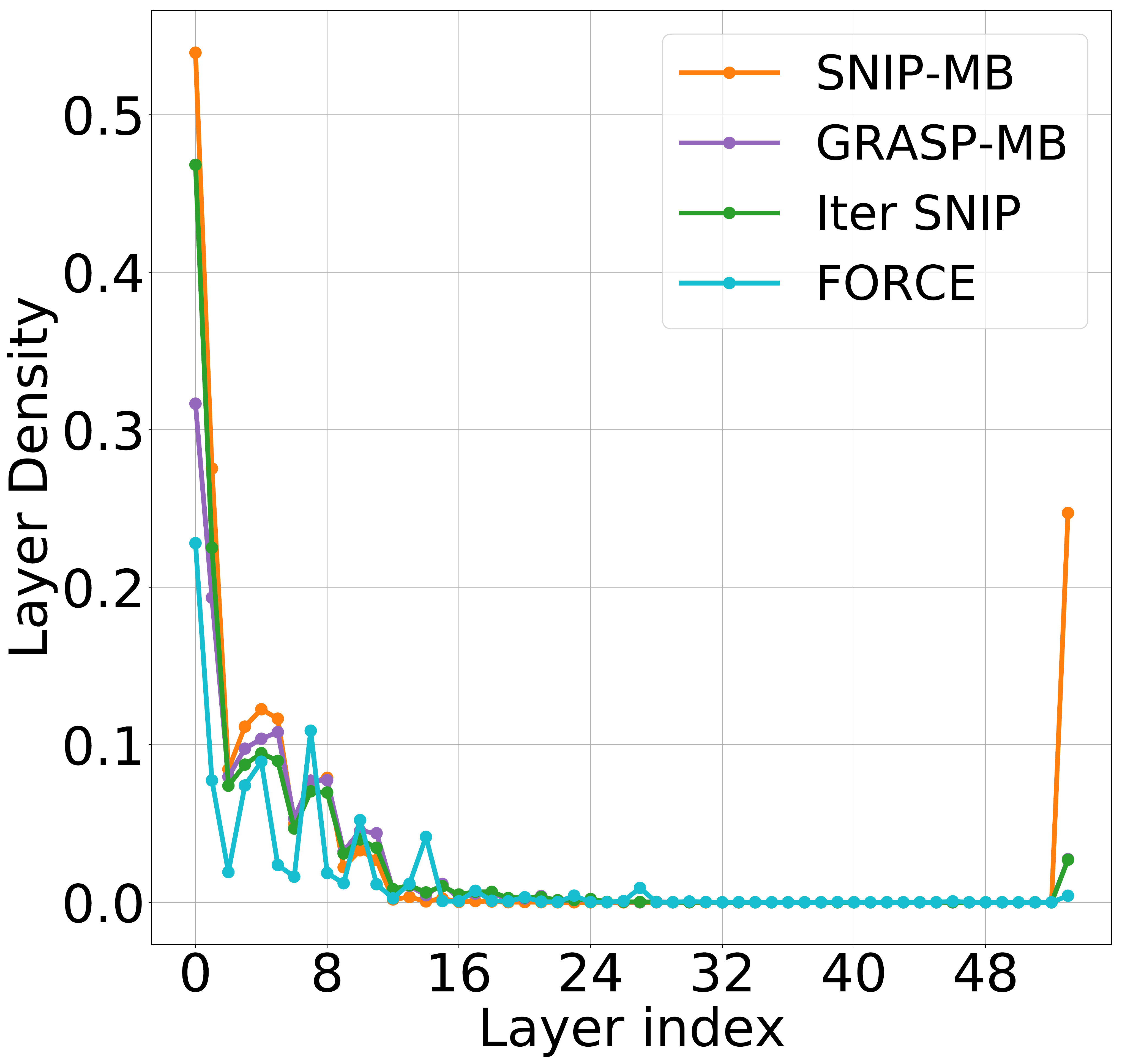}
\caption{Resnet50 Layer density}
\end{subfigure}
\begin{subfigure}[b]{.31\linewidth}
\includegraphics[width=\linewidth]{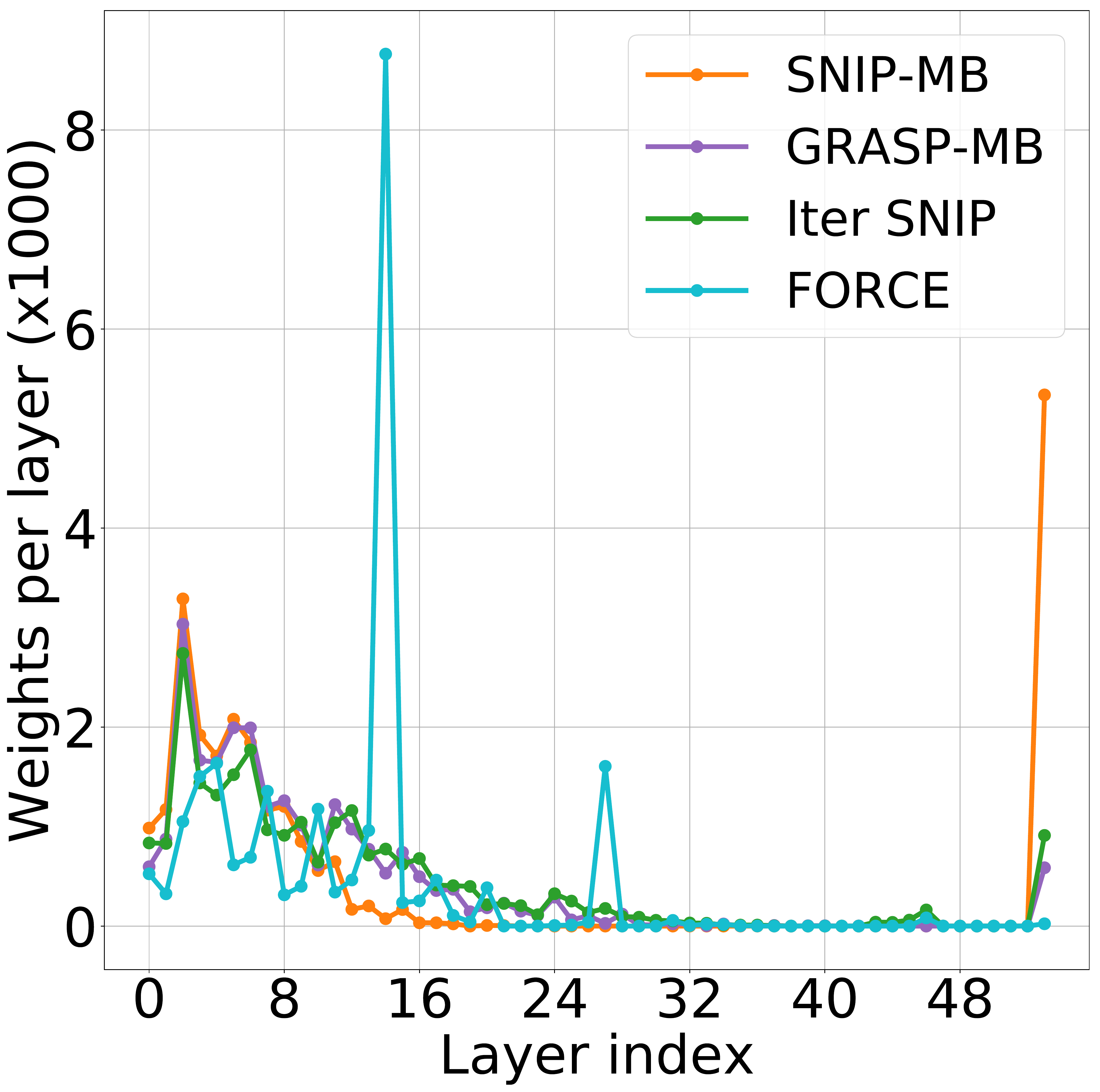}
\caption{Resnet50 total weights}
\end{subfigure}
\begin{subfigure}[b]{.335\linewidth}
\includegraphics[width=\linewidth]{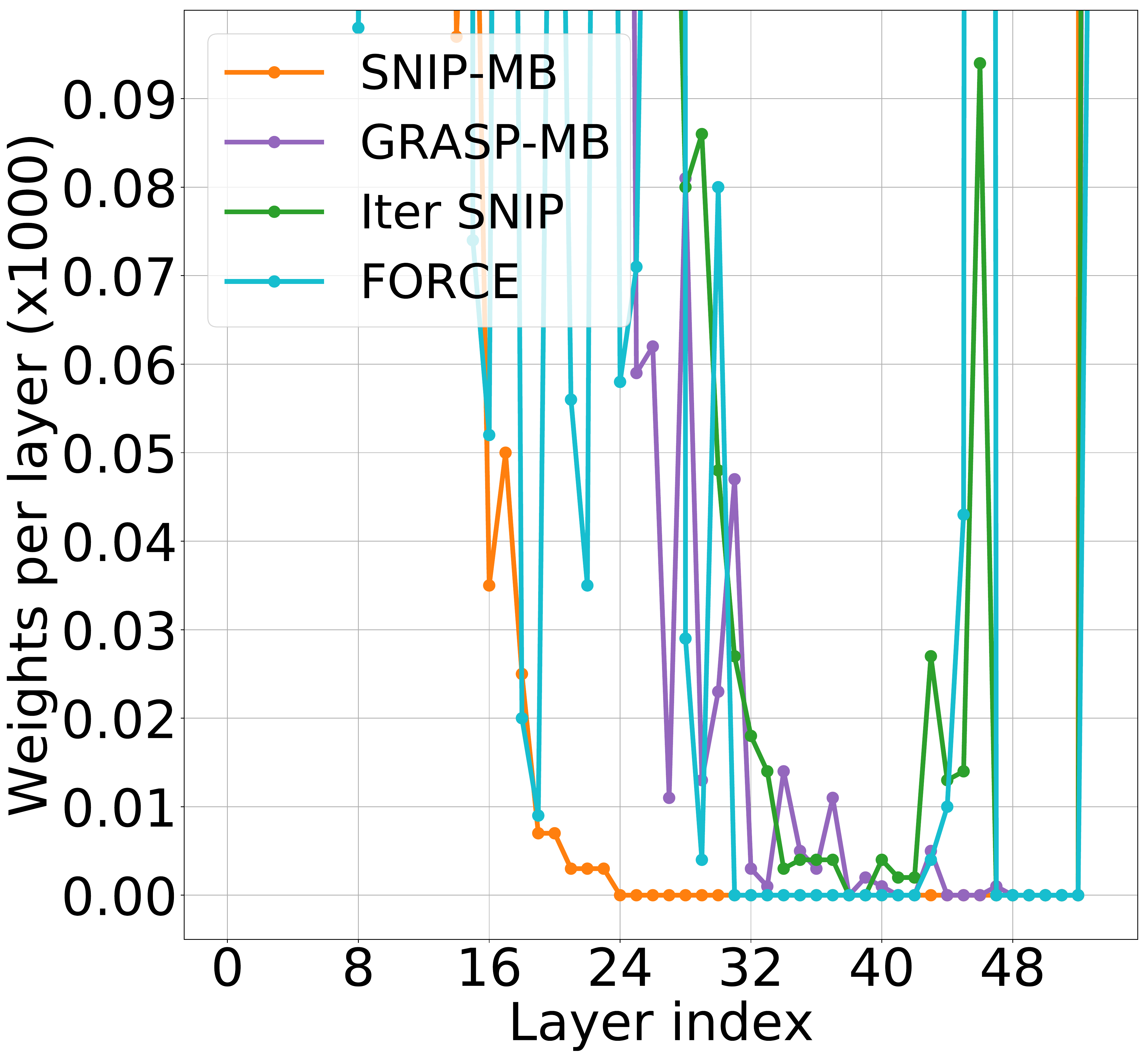}
\caption{Resnet50 total weights (zoom)}
\end{subfigure}

\begin{subfigure}[b]{.325\linewidth}
\includegraphics[width=\linewidth]{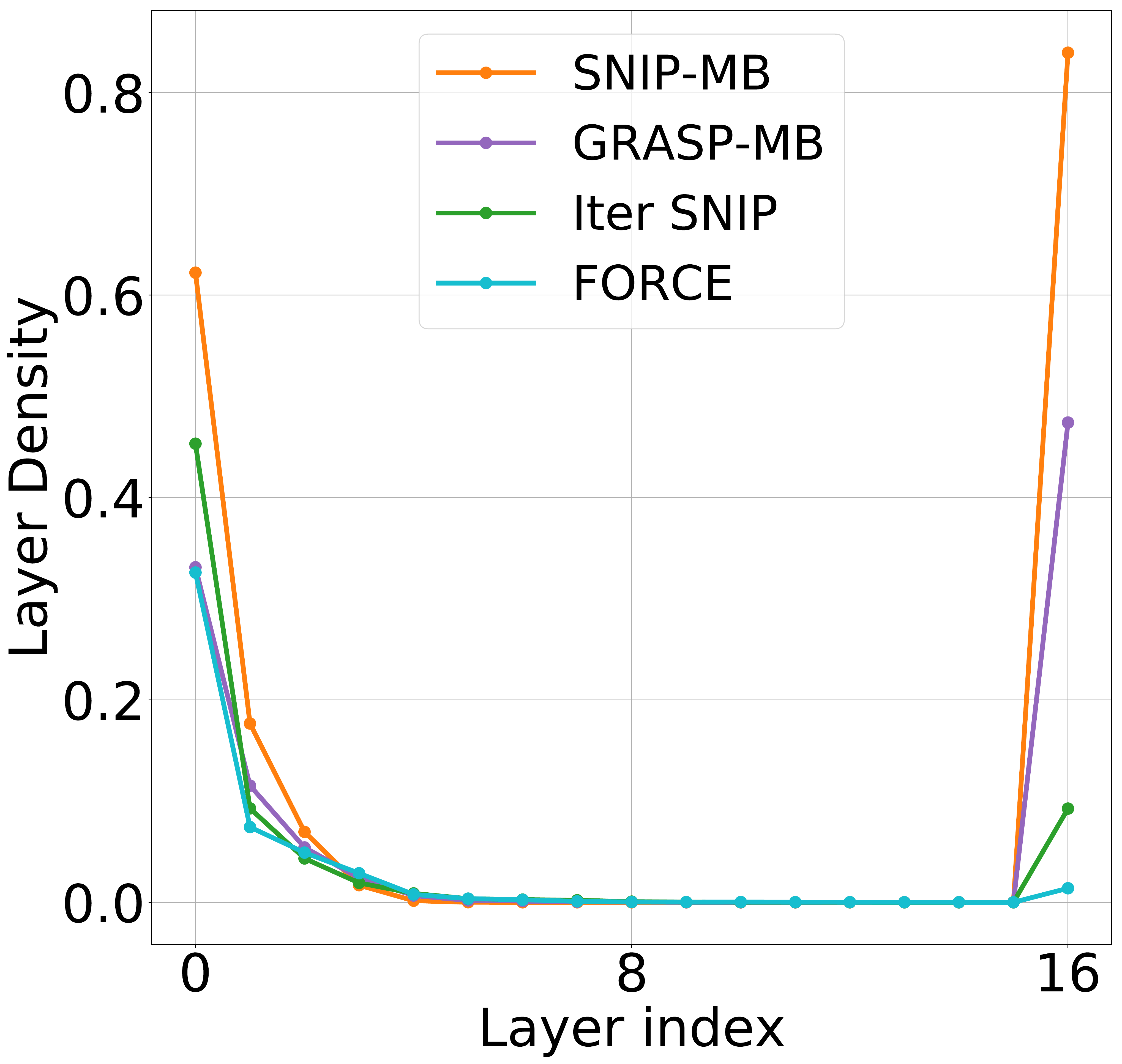}
\caption{VGG19 Layer density}
\end{subfigure}
\begin{subfigure}[b]{.31\linewidth}
\includegraphics[width=\linewidth]{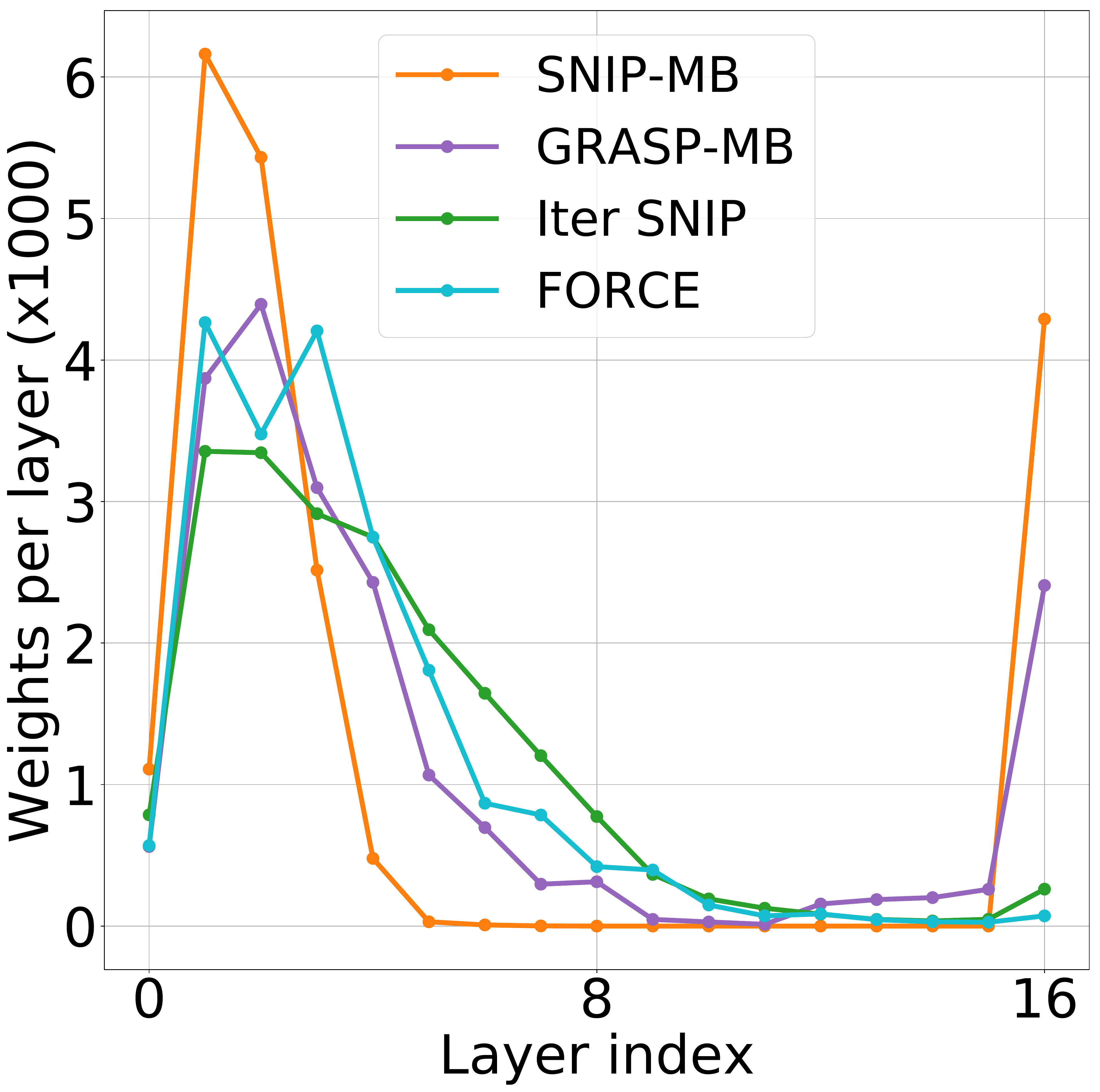}
\caption{VGG19 total weights}
\end{subfigure}
\begin{subfigure}[b]{.335\linewidth}
\includegraphics[width=\linewidth]{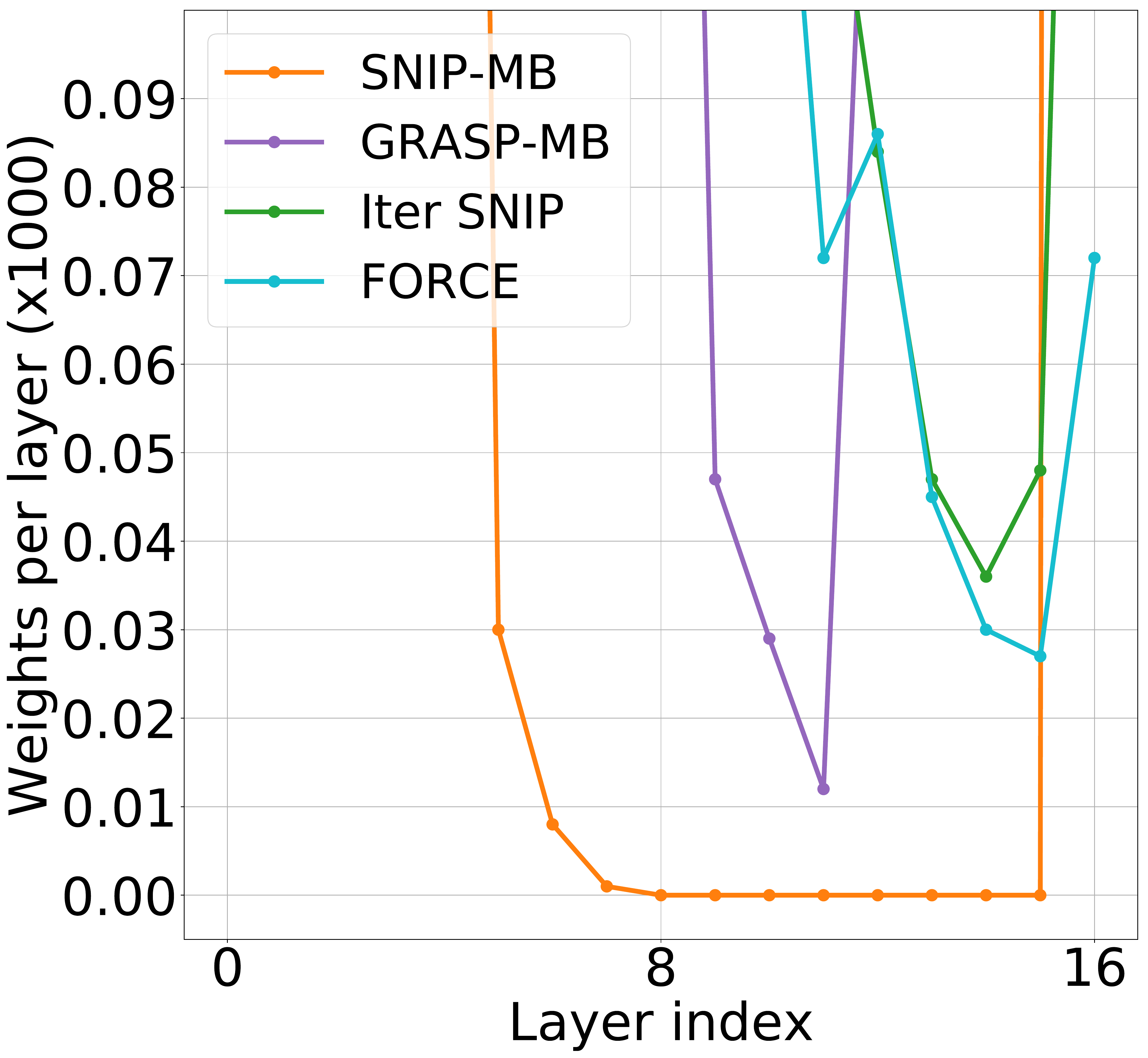}
\caption{VGG19 total weights (zoom)}
\end{subfigure}
\caption{Visualization of remaining weights after pruning $99.9\%$ of the weights of Resnet-50 and VGG-19 for CIFAR-10. (a) and (d) show the fraction of remaining weights for each prunable layer. (b) and (e) show the actual number of remaining weights and (c) and (f) zoom to the bottom part of the plot. Observe in (c) and (f) that some layers have exactly 0 weights left, so they are removed entirely.}
\label{fig:figure_5}
\end{figure} 

\subsection{Network structure after pruning}\label{sec:prun_struct}
In Fig \ref{fig:figure_5} we visualize the structure of the networks after pruning $99.9\%$ of the parameters. We show the fraction of remaining weights and the total number of remaining weights per layer after pruning. As seen in (a) and (d), all analysed methods show a tendency to preserve the initial and final layers and to prune more heavily the deep convolutional layers, this is consistent with results reported in~\citet{grasp}. In (b) and (e), we note that FORCE has a structure that stands out compared to other methods that are more similar. This is reasonable since, it is the only method that allows pruned weights to recover. In the zoomed plots (c) and (f) we would like to point out that FORCE and Iterative SNIP preserve more weights on the deeper layers than GRASP and SNIP for VGG19 while we observe the opposite behaviour for Resnet50. 

In Fig \ref{fig:figure_3}, we observe that gradual pruning is able to prune the Resnet50 network up to 99.99\% sparsity without falling to random accuracy. In contrast, with VGG19 we observe Iterative SNIP is not able to prune more than 99.9\%. In Fig \ref{fig:figure_5} we observe that for Resnet50, all methods prune some layers completely. 
However, in the case of ResNets, even if a convolutional layer is entirely pruned, skip connections still allow the flow of forward and backward signal. 
On the other hand, architectures without skip connections, such as VGG, require non-empty layers to keep the flow of information. Interestingly, in (c) we observe how FORCE and Iter SNIP have a larger amount of completely pruned layers than GRASP, however, there are a few deep layers with a significantly larger amount of unpruned weights. This seems to indicate that when a high sparsity is required, it is more efficient to have fewer layers with more weights than several extremely sparse layers.

\vspace{-1ex}
\subsection{Evolution of pruning masks}\label{sec:evo_prun_masks}
As discussed in the main text, FORCE allows weights that have been pruned at earlier iterations to become non-zero again, we argue this might be beneficial compared to Iterative SNIP which will not be able to correct any possible "mistakes" made in earlier iterations. In particular, it seems to give certain advantage to prune VGG to high sparsities without breaking the flow of information (pruning a layer entirely) as can be seen in Fig \ref{fig:figure_3} (b). In order to gain a better intuition of how does the amount of pruned/recovered weights ratio evolve during pruning, in Fig \ref{fig:prun-recover} we plot the normalized amount of pruned and recovered weights (globally on the whole network) at each iteration of FORCE and also for Iter SNIP as a sanity check. Note that Iterative SNIP does not recover weights and the amount of weights pruned at each step decays exponentially (this is expected since we derived Iterative SNIP as a constrained optimization of FORCE where each network needs to be a sub-network of the previous iteration. On the other hand, FORCE does recover weights. Moreover, the amount of pruned/recovered weights does not decay monotonically but has a clear peak, indicating there are two phases during pruning: While the amount of pruned weights increases, the algorithm explores masks which are quite far away from each other, although this might be harmful for the \textit{gradient approximation} (refer to section \ref{iter_pruning}), we argue that during the initial pruning iterations the network is still quite over-parametrized. After reaching a peak, both the pruning and recovery rapidly decay, thus the masks converge to a more constrained subset.
\begin{figure}
\centering
\begin{subfigure}[b]{.49\linewidth}
\includegraphics[width=\linewidth]{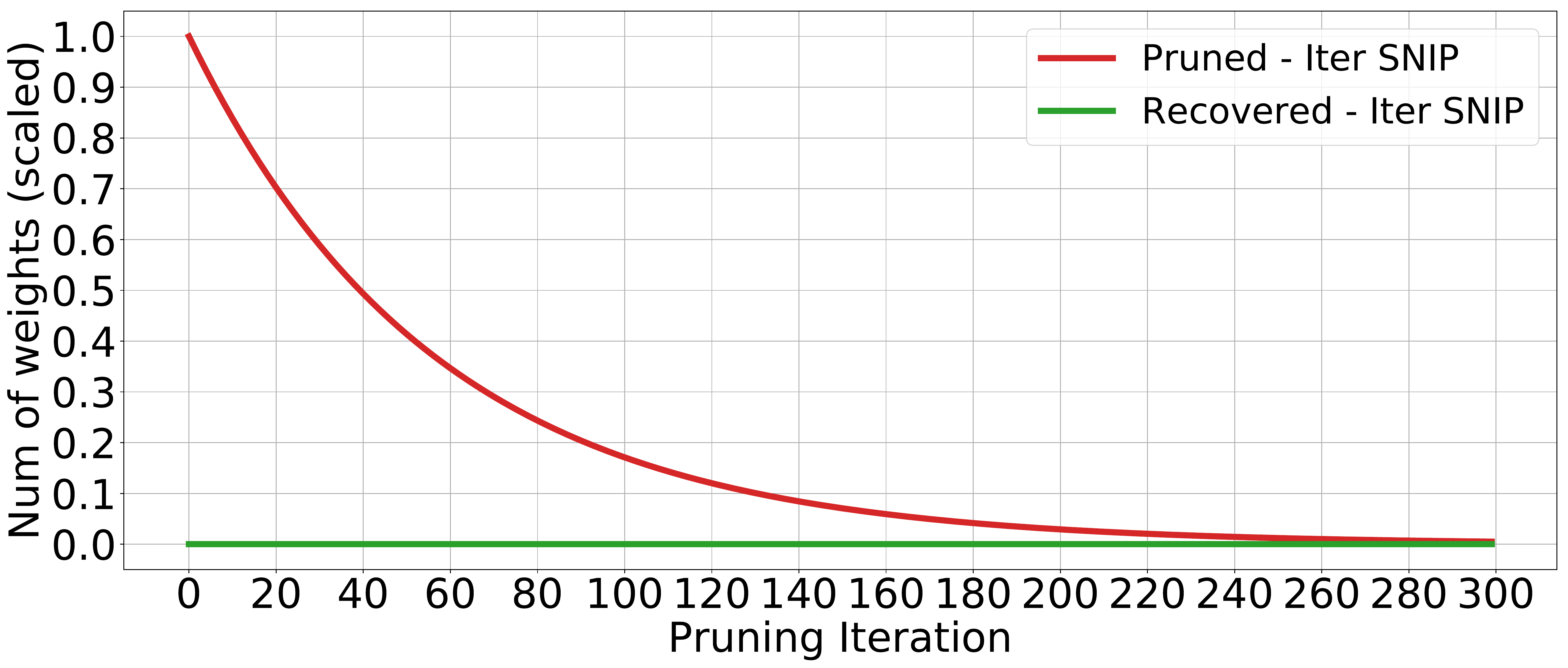}
\caption{Pruned and recovered weights (Iter SNIP)}
\end{subfigure}
\begin{subfigure}[b]{.49\linewidth}
\includegraphics[width=\linewidth]{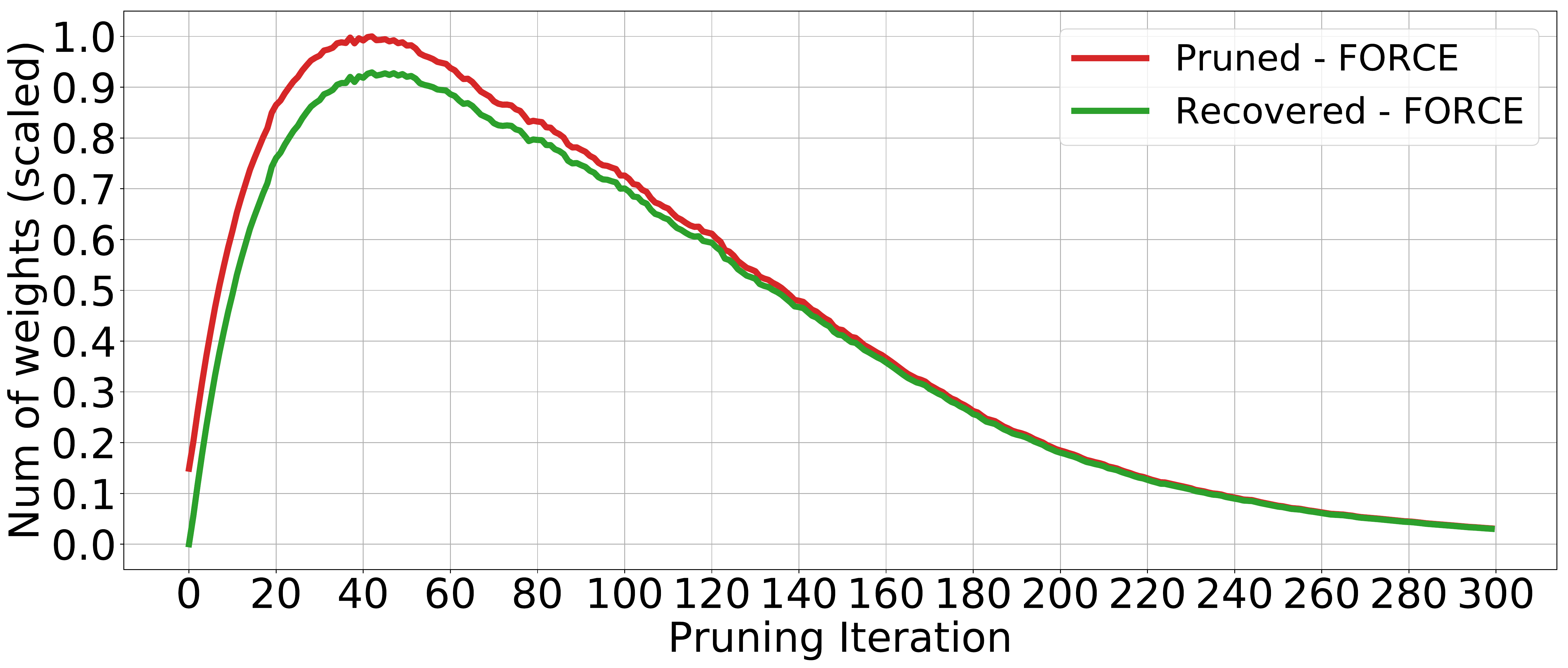}
\caption{Pruned and recovered weights (FORCE)}
\end{subfigure}
\caption{Normalized amount of globally pruned and recovered weights at each pruning iteration for Resnet50, CIFAR-10 when pruned with (a) Iterative SNIP and (b) FORCE. As expected, the amount of recovered weights for Iterative SNIP is constantly zero, since this is by design. Moreover, the amount of pruned weights decays exponentially as expected from our pruning schedule. On the other hand, we see the amount of recovered weights is non-zero with FORCE, interestingly the amount of pruned/recovered weights does not decay monotonically but has a clear peak, indicating there is an "exploration" and a "convergence" phase during pruning.}
\label{fig:prun-recover}  \vspace{-3ex}
\end{figure}

\begin{figure}[b]
\centering
\begin{subfigure}[b]{.49\linewidth}
\includegraphics[width=\linewidth]{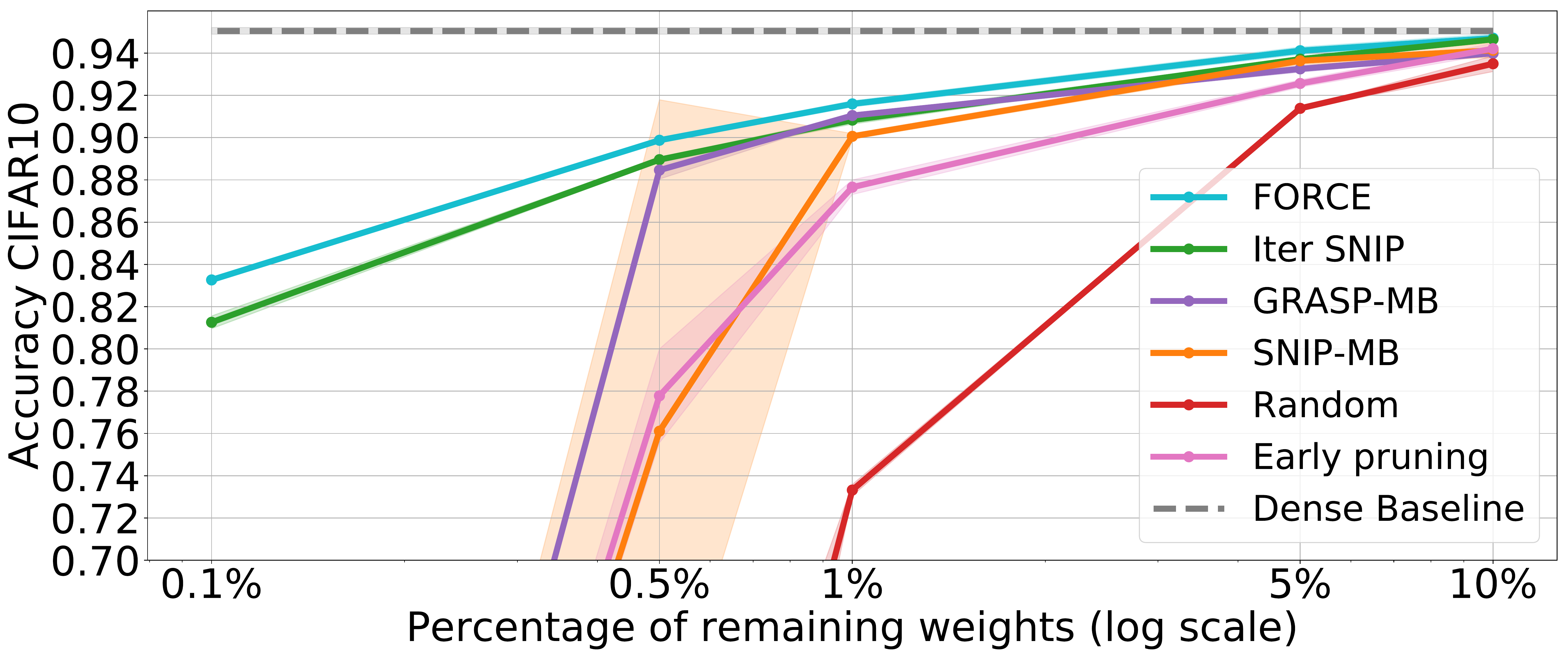}
\caption{Resnet50}
\end{subfigure}
\begin{subfigure}[b]{.49\linewidth}
\includegraphics[width=\linewidth]{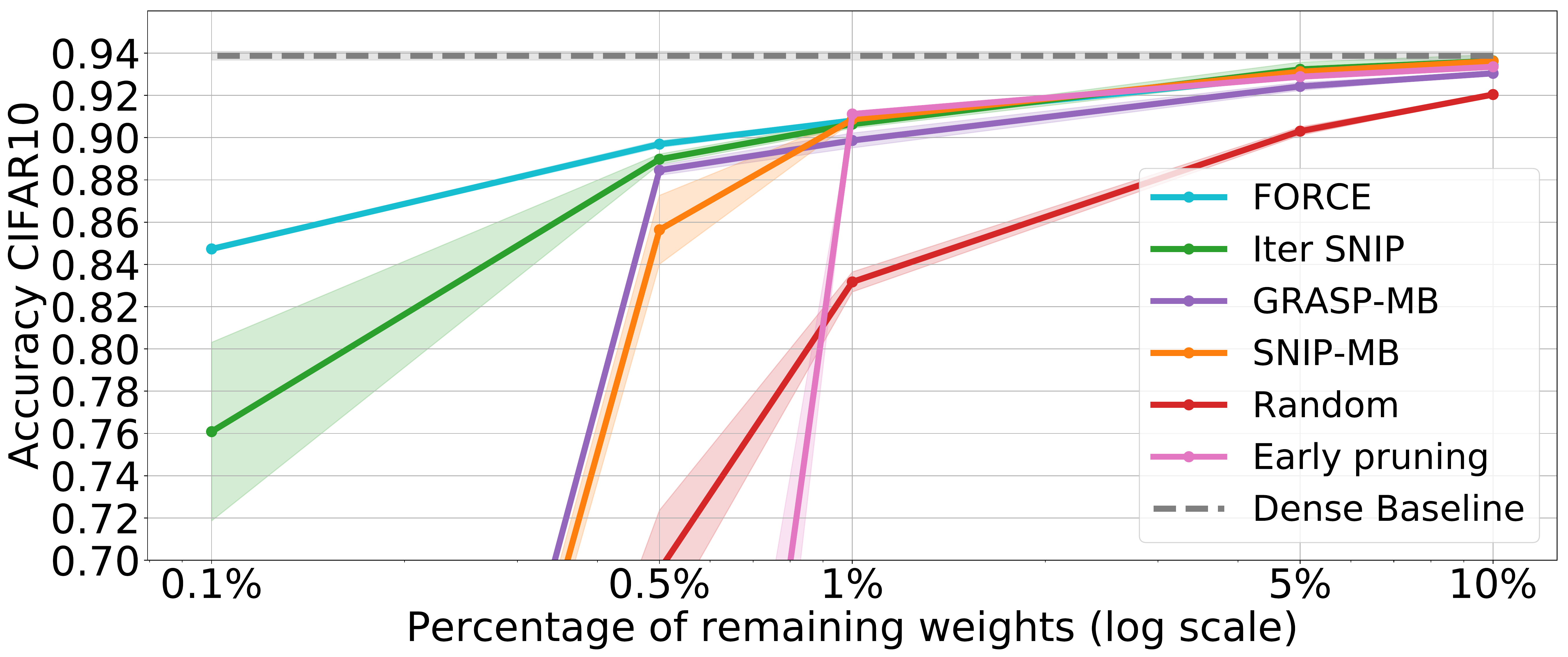}
\caption{VGG19}
\end{subfigure}
\caption{Test accuracies on CIFAR-10 for different pruning methods. Each point is the average over 3 runs of prune-train-test. The shaded areas denote the standard deviation of the runs (too small to be visible in some cases). Early pruning is at most on par with gradual pruning at initialization methods and has a strong drop in performance as we go to higher sparsities.}
\label{fig: early_pruning}  \vspace{-3ex}
\end{figure}

\vspace{-1ex}
\subsection{Comparison with early pruning}\label{sec: early pruning}
For fair comparison, we provided the same amount of data to SNIP and GRASP as was used by our approach and call this variant SNIP-MB and GRASP-MB. Similarly, under this new baseline which we call {\em early pruning}, we 
train the network on 1 epoch of data of CIFAR-10, that is slightly more examples than our pruning at initialization methods which use $128 \cdot 300 = 38400$ examples (see section \ref{experiments}). After training for 1 epoch we perform magnitude pruning which requires no extra cost (results presented in Fig \ref{fig: early_pruning}). Although early pruning yields competitive results for VGG at moderate sparsity levels, it soon degrades its performance as we prune more weights. On the other hand, for Resnet architecture it is sub-optimal at all evaluated sparsity levels. Note, this result does not mean that any early pruning strategy would be sub-optimal compared to pruning at initialization, however exploring this further is out of the scope of this work.

\vspace{-1ex}
\subsection{Mobilenet experiments}
\label{sec:mobilenet}
All our experiments were on overparameterized architectures such as Resnet and VGG. To test the wider usability of our methods, in this section we prune the Mobilenet-v2 architecture\footnote{\url{https://github.com/kuangliu/pytorch-cifar/blob/master/models/mobilenetv2.py}} ~\citep{movilenet} which is much more "slim" than Resnet and VGG (Mobilenet has 2.3M params compared to 20.03M and 23.5M of VGG and Resnet respectively). Results are provided in Fig \ref{fig:movilenet}. Similarly to Resnet and VGG architectures, we see that gradual pruning tends to better preserve accuracy at higher sparsity levels than one-shot methods. Moreover, both FORCE and Iter SNIP improve over SNIP at moderate sparsity levels as well. 
FORCE and Iter SNIP have comparable accuracies except for high sparsity where Iter SNIP surpasses FORCE. We hypothesize for such a slim architecture ($\approx 10\ \times$ fewer parameters than Resnet and VGG) the gradient approximation becomes even more sensitive to the distance between iterative masks and perhaps the exploration of FORCE is harmful in this case. As discussed in the main paper, we believe that further research to understand the exploration/exploitation trade-off when pruning might yield to even more efficient pruning schemes, especially for very high sparsity levels.  We train using the same settings as described in Appendix \ref{pruning-details} except for the weight decay which is set to $4\times 10^{-5}$, following the settings of the original Mobilenet paper.

\begin{figure}[h]
\centering
\begin{subfigure}[b]{.65\linewidth}
\includegraphics[width=\linewidth]{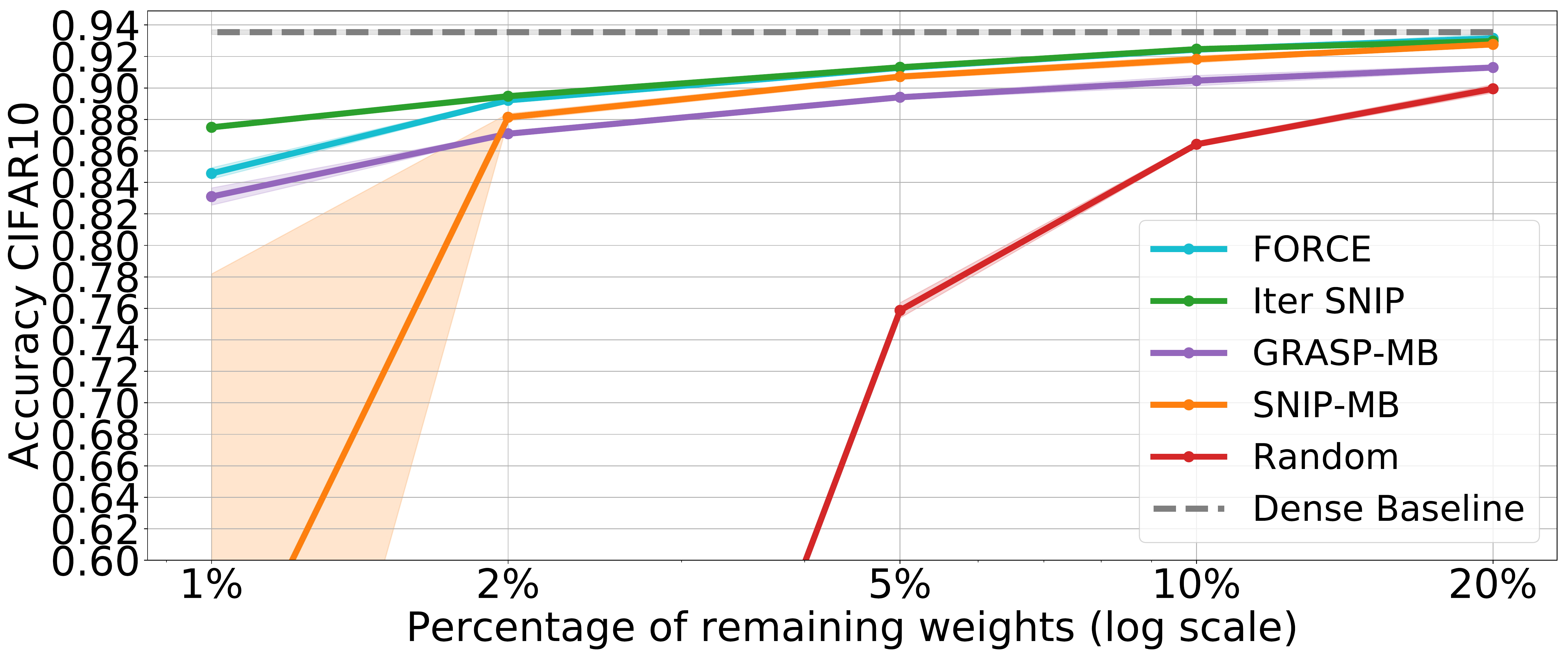}
\end{subfigure}
\caption{Test accuracies on CIFAR-10 for different pruning methods on Mobilenet architecture. Each point is the average over 3 runs of prune-train-test. The shaded areas denote the standard deviation of the runs (too small to be visible in some cases). When pruning Mobilenet-v2 architecture, which has roughly $ 10\times$ less parameters than Resnet or VGG, we observe a similar pattern: Gradual pruning tends to be able to preserve accuracy for higher sparsities better than one-shot methods. Moreover, it tends to improve slightly over SNIP at moderate sparsities as well. Although GRASP does not show an acute drop in performance like SNIP, it seems to be sub-optimal with respect to other methods across all sparsities.}
\label{fig:movilenet} \vspace{-3ex}
\end{figure}

\section{Local optimal masks} \label{proof}
\begin{defn}[$\br{p,\epsilon}$-local optimal mask]
  Consider any two sets\footnote{For ease of
    notation, we will use this representation interchangeably with its
  binary encoding i.e. a $m$-dimensional binary vector with its support equal
to $\vec{c}$} $\vec{c}_t\subseteq\bc{1,\cdots,m}$ and
  $\vec{c}_{t+1}\subset\vec{c}_t$. For any $\epsilon>0$ and $0\le p\le
  \abs{\vec{c}_t\setminus\vec{c}_{t+1}}$, $\vec{c}_{t+1}$ is a
  $\br{p,\epsilon}$ local optimal with respect to $\vec{c}_t$ if the
  following holds
  \begin{equation}
    \label{eq:pe_local_opt}
    S\br{\theta,\vec{c}_{t+1}} \ge S\br{\theta,\br{\vec{c}_{t+1}\setminus
        S_{-}}\cup S_{+}} - \epsilon
  \end{equation}
  for all  $S_{-}\subset \vec{c}_{t+1},\abs{S_{-}}=p$ and
  $S_{+}\subset\br{\vec{c}_t\setminus \vec{c}_{t+1}},\abs{S_{+}}=p$.
\end{defn}
\begin{defn}[CRS; Coordinate-Restricted-Smoothness]
  Given a 
  function $\cL:\reals^m\rightarrow \reals$~(which encodes both the network architecture and the
  dataset), $\cL$ is said to be $\lambda_c$-Coordinated Restricted Smooth with
  respect to $\vec{c}\subseteq\bc{1,\cdots,m}$ if there exists a real number $\lambda_c$ such
  that
  \begin{align}\label{defn:crs-spec}
    \norm{\vec{c}\odot\nabla\cL\br{\vec{w}\odot \vec{c}} -
    \vec{c}\odot\nabla\cL\br{\vec{w}\odot \widehat{\vec{c}}}}_\infty
    &\le
      \lambda_c\norm{\vec{w}\odot\vec{c}-\vec{w}\odot\widehat{\vec{c}}}_1
  \end{align}
  for all $\vec{w}\in\reals^m$ and $\widehat{\vec{c}}\subset\vec{c}$.
  When $s=\abs{\vec{c}\setminus\widehat{\vec{c}}}$, an application of Holder's inequality shows
  \[\lambda_c\norm{\vec{w}\odot\vec{c}-\vec{w}\odot\widehat{\vec{c}}}_1\le\lambda_c\norm{\vec{w}}_\infty\norm{\vec{c}-\widehat{\vec{c}}}_1=\lambda_cs\norm{\vec{w}}_\infty\] 
    
  We define $\cL$ to be $\Lambda$-total CRS if there exists a function
  $\Lambda:\bc{0,1}^m\rightarrow\reals$ 
  such that for all
  $\vec{c}\in\bc{0,1}^m$ $\cL$ is $\Lambda\br{\vec{c}}$-Coordinate-Restricted-Smooth
    with respect to $\vec{c}$~(for ease of notation we use $\Lambda\br{\vec{c}}=\lambda_{\vec{c}}$).

\end{defn}
\begin{thm}[Informal]
  The mask $\vec{c}_{t+1}$ produced from $\vec{c}_t$ by FORCE is
  $\br{p, 2\lambda p\norm{\theta}_\infty^2 \abs{\vec{c}_t}}$-local
                                                   optimal if the
                                                   $\cL$ is $\Lambda$-CRS..
\end{thm}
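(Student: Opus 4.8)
The plan is to compare the saliency $S(\theta,\vec{c}_{t+1})$ of the mask the algorithm returns against that of an arbitrary competitor obtained by swapping $p$ kept coordinates for $p$ discarded ones, controlling the gap with two separate applications of CRS. First I would make explicit what Algorithm~\ref{algo} does at step $t$: writing $h:=\nabla\loss(\theta\odot\vec{c}_t)$ for the gradient evaluated \emph{before} the $t$-th pruning round, the procedure sets $\vec{c}_{t+1}$ to the indices of the $k_{t+1}$ largest values of $\abs{\theta_i h_i}$ among $i\in\vec{c}_t$ — coordinates outside $\vec{c}_t$ carry zero gradient in the pruned network, so $\vec{c}_{t+1}\subseteq\vec{c}_t$, which is exactly the Iterative-SNIP regime in which the hypothesis $\vec{c}_{t+1}\subset\vec{c}_t$ of the $(p,\epsilon)$-local-optimality definition holds. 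A consequence I would record immediately: for any $S_-\subset\vec{c}_{t+1}$ and $S_+\subset\vec{c}_t\setminus\vec{c}_{t+1}$ with $\abs{S_-}=\abs{S_+}=p$, every selected coordinate beats every rejected one, so $\sum_{i\in S_-}\abs{\theta_i h_i}\ge\sum_{i\in S_+}\abs{\theta_i h_i}$. Fix such $S_-,S_+$, set $\vec{c}':=(\vec{c}_{t+1}\setminus S_-)\cup S_+$, and note $\abs{\vec{c}'}=k_{t+1}$, $\vec{c}'\subseteq\vec{c}_t$, $A:=\vec{c}_{t+1}\cap\vec{c}'=\vec{c}_{t+1}\setminus S_-$ with $\abs{A}=k_{t+1}-p$, and $\vec{c}^\ast:=\vec{c}_{t+1}\cup\vec{c}'=A\cup S_-\cup S_+$ with $\vec{c}^\ast\subseteq\vec{c}_t$ and $\vec{c}^\ast\setminus\vec{c}_{t+1}=S_+$, $\vec{c}^\ast\setminus\vec{c}'=S_-$.

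Next I would split $S(\theta,\vec{c}_{t+1})-S(\theta,\vec{c}')$ into a \emph{common} part summed over $A$ and a \emph{swap} part comparing $S_-$ against $S_+$. For the common part, the reverse triangle inequality gives for each $i\in A$ that $\bigl|\,\abs{\theta_i\nabla\loss(\theta\odot\vec{c}_{t+1})_i}-\abs{\theta_i\nabla\loss(\theta\odot\vec{c}')_i}\,\bigr|\le\abs{\theta_i}\,\abs{\nabla\loss(\theta\odot\vec{c}_{t+1})_i-\nabla\loss(\theta\odot\vec{c}')_i}$; since $\vec{c}_{t+1},\vec{c}'\subset\vec{c}^\ast$, I route this difference through $\theta\odot\vec{c}^\ast$ and apply CRS w.r.t.\ $\vec{c}^\ast$ twice, once with $\widehat{\vec{c}}=\vec{c}_{t+1}$ (cost $\lambda\norm{\theta\odot\vec{c}^\ast-\theta\odot\vec{c}_{t+1}}_1=\lambda\sum_{j\in S_+}\abs{\theta_j}\le\lambda p\norm{\theta}_\infty$) and once with $\widehat{\vec{c}}=\vec{c}'$ (cost $\le\lambda p\norm{\theta}_\infty$), so the per-coordinate difference is at most $2\lambda p\norm{\theta}_\infty$ and the common part contributes at least $-2\lambda p(k_{t+1}-p)\norm{\theta}_\infty^2$. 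For the swap part I first pass from the true gradients to $h$: for $i\in S_-\subseteq\vec{c}_{t+1}\subseteq\vec{c}_t$ and $i\in S_+\subseteq\vec{c}'\subseteq\vec{c}_t$, CRS w.r.t.\ $\vec{c}_t$ (with $\widehat{\vec{c}}=\vec{c}_{t+1}$, resp.\ $\vec{c}'$) gives $\abs{\nabla\loss(\theta\odot\vec{c}_{t+1})_i-h_i}\le\lambda\norm{\theta\odot\vec{c}_t-\theta\odot\vec{c}_{t+1}}_1\le\lambda(k_t-k_{t+1})\norm{\theta}_\infty$ and likewise for $\vec{c}'$; hence $\sum_{i\in S_-}\abs{\theta_i\nabla\loss(\theta\odot\vec{c}_{t+1})_i}-\sum_{i\in S_+}\abs{\theta_i\nabla\loss(\theta\odot\vec{c}')_i}\ge\sum_{i\in S_-}\abs{\theta_i h_i}-\sum_{i\in S_+}\abs{\theta_i h_i}-2p\lambda(k_t-k_{t+1})\norm{\theta}_\infty^2\ge-2p\lambda(k_t-k_{t+1})\norm{\theta}_\infty^2$ by the top-$k$ inequality from the first paragraph.

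Adding the two contributions, $S(\theta,\vec{c}_{t+1})-S(\theta,\vec{c}')\ge-2\lambda p\norm{\theta}_\infty^2\bigl[(k_{t+1}-p)+(k_t-k_{t+1})\bigr]=-2\lambda p\norm{\theta}_\infty^2(k_t-p)\ge-2\lambda p\norm{\theta}_\infty^2\abs{\vec{c}_t}$, which is exactly $(p,\,2\lambda p\norm{\theta}_\infty^2\abs{\vec{c}_t})$-local optimality (here $\lambda$ is any common upper bound for $\lambda_{\vec{c}_t}$ and the $\lambda_{\vec{c}^\ast}$ over the finitely many relevant $\vec{c}^\ast$, which exists under $\Lambda$-total CRS). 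The main obstacle is that $\vec{c}_{t+1}$ and $\vec{c}'$ are not nested, so CRS cannot be invoked directly between them; passing through the union $\vec{c}^\ast$ is the fix that makes the common-part estimate go through. The second delicate point — and the reason the bound scales with $p$ rather than with $k_{t+1}$ — is to keep the two error sources apart: the common coordinates only "see" a size-$p$ perturbation (the swap), whereas the swapped coordinates "see" the full pruning step of size $k_t-k_{t+1}$, and it is the telescoping $(k_{t+1}-p)+(k_t-k_{t+1})\le\abs{\vec{c}_t}$ that collapses both into the single factor $\abs{\vec{c}_t}$. A minor but worth-stating preliminary is the reduction of the informal "FORCE" in the statement to the explicit top-$k$ selection rule on $\vec{c}_t$, since the local-optimality definition presupposes $\vec{c}_{t+1}\subset\vec{c}_t$; strictly speaking the guarantee is for Iterative SNIP, as the paper already notes.
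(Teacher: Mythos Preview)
Your proof is correct and follows the same overall strategy as the paper: split $S(\theta,\vec{c}_{t+1})-S(\theta,\vec{c}')$ into a common part and a swap part, bound the common part via CRS, and bound the swap part by passing to the gradient $h=\nabla\loss(\theta\odot\vec{c}_t)$ and invoking the top-$k$ selection rule. Your observation that the guarantee is really for Iterative SNIP (since the definition requires $\vec{c}_{t+1}\subset\vec{c}_t$) matches what the paper itself says in the main text.

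The one substantive difference is in how the common part is handled. The paper sums over all of $\vec{c}_{t+1}$ and invokes CRS with $\vec{c}=\vec{c}_{t+1}$ directly between $\theta\odot\vec{c}_{t+1}$ and $\theta\odot\zeta$; but $\zeta\not\subset\vec{c}_{t+1}$, so this step is not literally licensed by the CRS definition as stated. You instead sum only over $A=\vec{c}_{t+1}\cap\vec{c}'$ and route through the union $\vec{c}^\ast=\vec{c}_{t+1}\cup\vec{c}'$, applying CRS twice with $\widehat{\vec{c}}=\vec{c}_{t+1}$ and $\widehat{\vec{c}}=\vec{c}'$, both of which are genuine subsets of $\vec{c}^\ast$. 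This is the cleaner and more rigorous argument, and it also gives the slightly sharper intermediate constant $k_t-p$ before relaxing to $\abs{\vec{c}_t}$. Otherwise the two proofs are the same in structure and in the final bound.
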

\begin{proof}
Consider the masks $\vec{c}_t$ and $\vec{c}_{t+1}$ where the latter is
obtained by one step of FORCE on the former. Let $S_{-}$ and $S_{+}$
be any set of size $p$ such that $S_{-}\subset c_{t+1}$ and
$S_{+}\subset\br{\vec{c}_t\setminus \vec{c}_{t+1}}$. Finally, for ease
of notation we define $\zeta=\br{c_{t+1}\setminus S_{-}}\cup S_{+}$
\begin{align}
   S\br{\theta,\zeta} - S\br{\theta,\vec{c}_{t+1}} &= 
                                                    \sum_{i\in\zeta}\abs{\theta_i\cdot
                                                    \nabla\cL\br{\theta\odot\zeta}_i}\nonumber
                                                    -
                                                    \sum_{i\in\vec{c}_{t+1}}\abs{\theta_i\cdot
                                                    \nabla\cL\br{\theta\odot\vec{c}_{t+1}}_i}
                                                    \\
                                                  &= 
                                                    \underbrace{\sum_{i\in\vec{c}_{t+1}}\abs{\theta_i\cdot
                                                    \nabla\cL\br{\theta\odot\zeta}_i}
                                                    -\sum_{i\in\vec{c}_{t+1}}\abs{\theta_i\cdot
                                                    \nabla\cL\br{\theta\odot\vec{c}_{t+1}}_i}
                                                    }_{\Gamma_1}\nonumber\\
                                                  &+\underbrace{\sum_{i\in S_{+}}\abs{
                                                    \theta_i\cdot\nabla\cL\br{
                                                    \theta\odot\zeta}_i}}_{\Gamma_2}
                                                    \underbrace{-\sum_{i\in S_{-}}\abs{
                                                    \theta_i\cdot\nabla\cL\br{
                                                    \theta\odot\zeta}_i}}_{\Gamma_3}\label{eq:gamma_expansion}
\end{align}

Let us look at the three terms individually. We assume that $\cL$ is $\Lambda$-CRS.
\begin{align}
  \Gamma_1 &=\sum_{i\in\vec{c}_{t+1}}\abs{\theta_i\cdot
               \nabla\cL\br{\theta\odot\zeta}_i} -
               \abs{\theta_i\cdot\nabla\cL\br{\theta\odot\vec{c}_{t+1}}_i}\nonumber\\
             &\le \norm{\vec{c}_{t+1}\odot\theta\odot\nabla\cL\br{\theta\odot\vec{c}_{t+1}}-
               \vec{c}_{t+1}\odot\theta\odot
               \nabla\cL\br{\theta\odot\zeta}}_1 \hspace{1.4cm} \text{By Triangle Inequality}\nonumber\\
             &\le \norm{\vec{c}_{t+1}\odot\theta}_1\norm{\vec{c}_{t+1}\odot\nabla\cL\br{\theta\odot\vec{c}_{t+1}}-
               \vec{c}_{t+1}\odot
               \nabla\cL\br{\theta\odot\zeta}}_\infty \hspace{0.6cm}  \text{By Holder's Inequality}\nonumber\\
             &\le2\lambda_{c_{t+1}}\abs{c_{t+1}}p\norm{\theta}_\infty^2\label{eq:gamma_1} \hspace{7cm} \because\cL\text{ is $\Lambda$-CRS}
\end{align} 
\begin{align}
  \Gamma_2 &=\sum_{i\in S_{+}}\abs{
             \theta_i\cdot\nabla\cL\br{
             \theta\odot\zeta}_i} - \abs{
             \theta_i\cdot\nabla\cL\br{
             \theta\odot\vec{c}_t}_i} + \abs{
             \theta_i\cdot\nabla\cL\br{
               \theta\odot\vec{c}_t}_i}\nonumber\\
             &\le \sum_{i\in S_{+}}\abs{
             \theta_i\cdot\nabla\cL\br{
               \theta\odot\vec{c}_t}_i} +
               \lambda_{\vec{c}_t}p\norm{\theta}_\infty^2\br{\abs{\vec{c}_t}-\abs{\vec{c}_{t+1}}}
             \hspace{1cm} \because\abs{\vec{c}_t\setminus\zeta}=
                \abs{\vec{c}_t}-\abs{\vec{c}_{t+1}},\label{eq:gamma_2}\\
  \Gamma_3 &=-\sum_{i\in S_{-}}\abs{
             \theta_i\cdot\nabla\cL\br{
             \theta\odot\zeta}_i} + \abs{
             \theta_i\cdot\nabla\cL\br{
             \theta\odot\vec{c}_t}_i} - \abs{
             \theta_i\cdot\nabla\cL\br{
               \theta\odot\vec{c}_t}_i}\nonumber\\
             &\le -\sum_{i\in S_{-}}\abs{
             \theta_i\cdot\nabla\cL\br{
               \theta\odot\vec{c}_t}_i} +
               \lambda_{\vec{c}_t}p\norm{\theta}_\infty^2\br{\abs{\vec{c}_t}-\abs{\vec{c}_{t+1}}},\label{eq:gamma_3}
\end{align} 

Adding~\cref{eq:gamma_2,eq:gamma_3}, we get
\begin{align}\label{eq:alg_invariance}
  \Gamma_2 + \Gamma_3 &\le  \sum_{i\in S_{+}}\abs{
             \theta_i\cdot\nabla\cL\br{
               \theta\odot\vec{c}_t}_i} -\sum_{i\in S_{-}}\abs{
             \theta_i\cdot\nabla\cL\br{
               \theta\odot\vec{c}_t}_i} +
                        2\lambda_{\vec{c}_t}p
                        \norm{\theta}_\infty^2\br{
                        \abs{\vec{c}_t}-\abs{\vec{c}_{t+1}}}\nonumber\\
  &\le 2\lambda_{\vec{c}_t}p \norm{\theta}_\infty^2\br{
  \abs{\vec{c}_t}-\abs{\vec{c}_{t+1}}} - \gamma p
\end{align}
where
$\gamma=\min_{i\in\vec{c}_{t+1},j\in\br{\vec{c}_t\setminus\vec{c}_{t+1}}}
\abs{\theta_i\nabla\cL\br{\theta\odot\vec{c}_t}_i}-\abs{\theta_j\nabla\cL\br{\theta\odot\vec{c}_{t}}_j}
\ge 0$

Substituting~\cref{eq:alg_invariance,eq:gamma_1}
into~\cref{eq:gamma_expansion} we get
\begin{align*}
  S\br{\theta,\zeta} -  S\br{\theta,\vec{c}_{t+1}}  &\le
                                                     2\lambda_{\vec{c}_{t+1}}
                                                     \abs{\vec{c}_{t+1}}p\norm{\theta}_\infty^2 
                                                     + 2\lambda_{\vec{c}_t}p \norm{\theta}_\infty^2\br{
                                                     \abs{\vec{c}_t}-\abs{\vec{c}_{t+1}}} - \gamma p\\
                                                   &=2\lambda_{\vec{c}_{t+1}}p\norm{\theta}_\infty^2\br{\abs{\vec{c}_{t+1}}
                                                     + \br{
                                                     \abs{\vec{c}_t}-\abs{\vec{c}_{t+1}}}}
                                                     - \gamma p\\
                       S\br{\theta,\vec{c}_{t+1}}&\ge S\br{\theta,\zeta} - 2\lambda_{\vec{c}_{t+1}}p\norm{\theta}_\infty^2\abs{\vec{c}_{t}} + \gamma p\\
                        S\br{\theta,\vec{c}_{t+1}}&\ge S\br{\theta,\zeta} -2\lambda_{\vec{c}_{t+1}}p\norm{\theta}_\infty^2\abs{\vec{c}_{t}}\\
\end{align*}
\end{proof}

\section{Iterative pruning to maximize the gradient norm} \label{grasp-it}

\begin{figure}[hb]
\centering
\begin{subfigure}[b]{.49\linewidth}
\includegraphics[width=\linewidth]{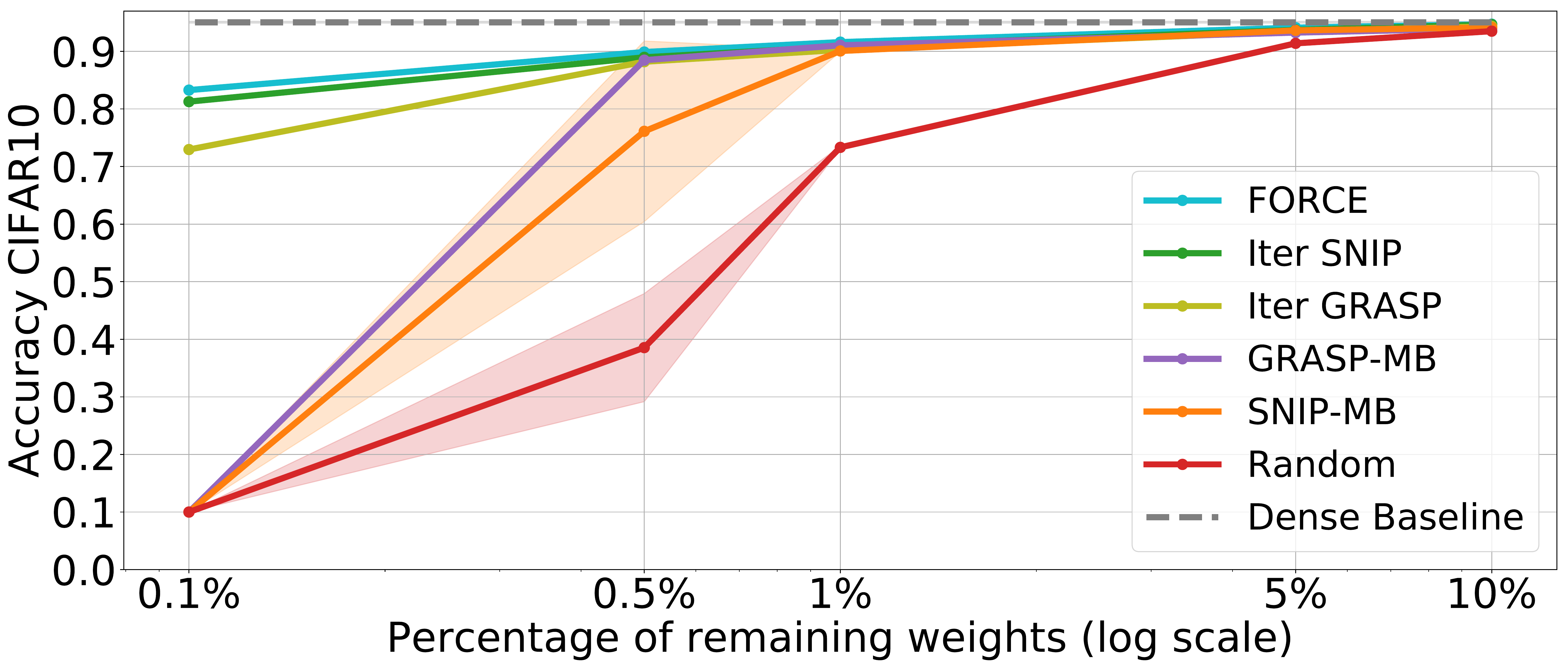}
\caption{CIFAR 10 - Resnet50}
\end{subfigure}
\begin{subfigure}[b]{.49\linewidth}
\includegraphics[width=\linewidth]{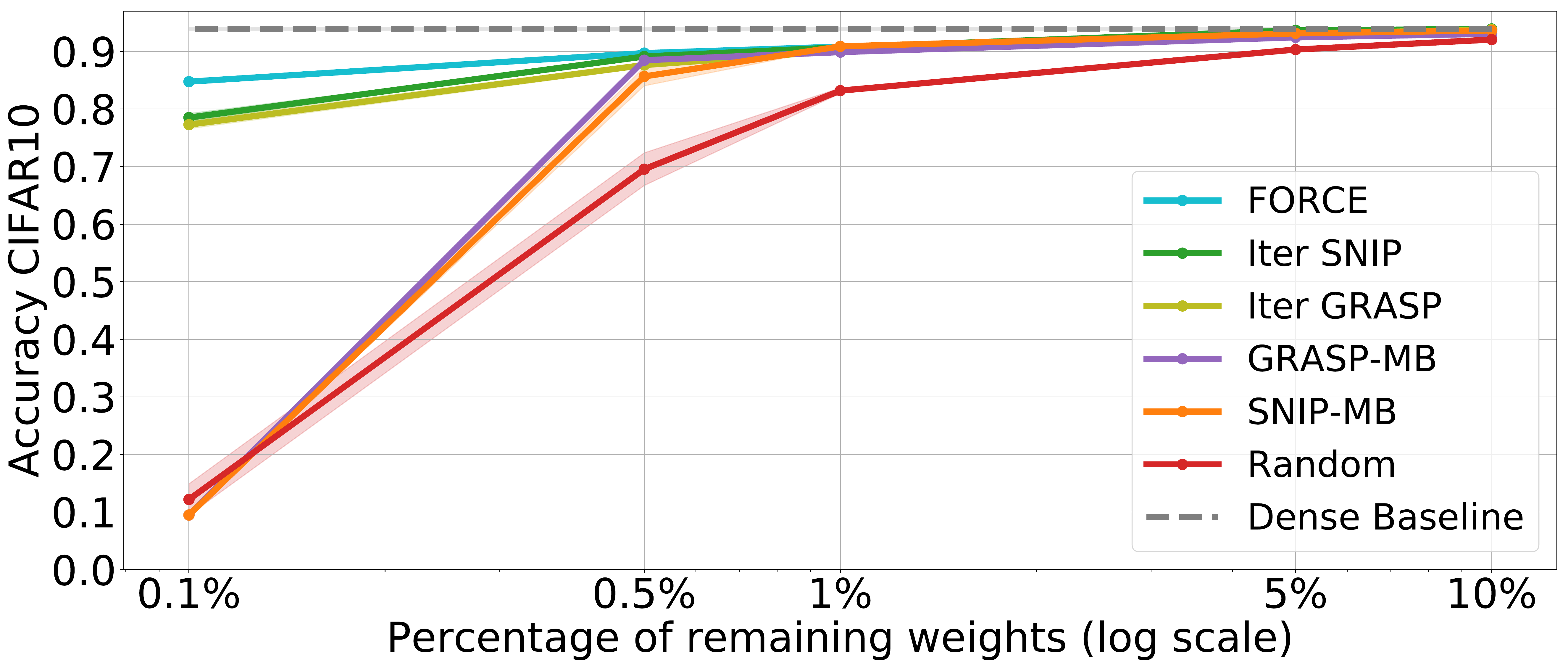}
\caption{CIFAR 10 - VGG19}
\end{subfigure}
\centering
\begin{subfigure}[b]{.49\linewidth}
\includegraphics[width=\linewidth]{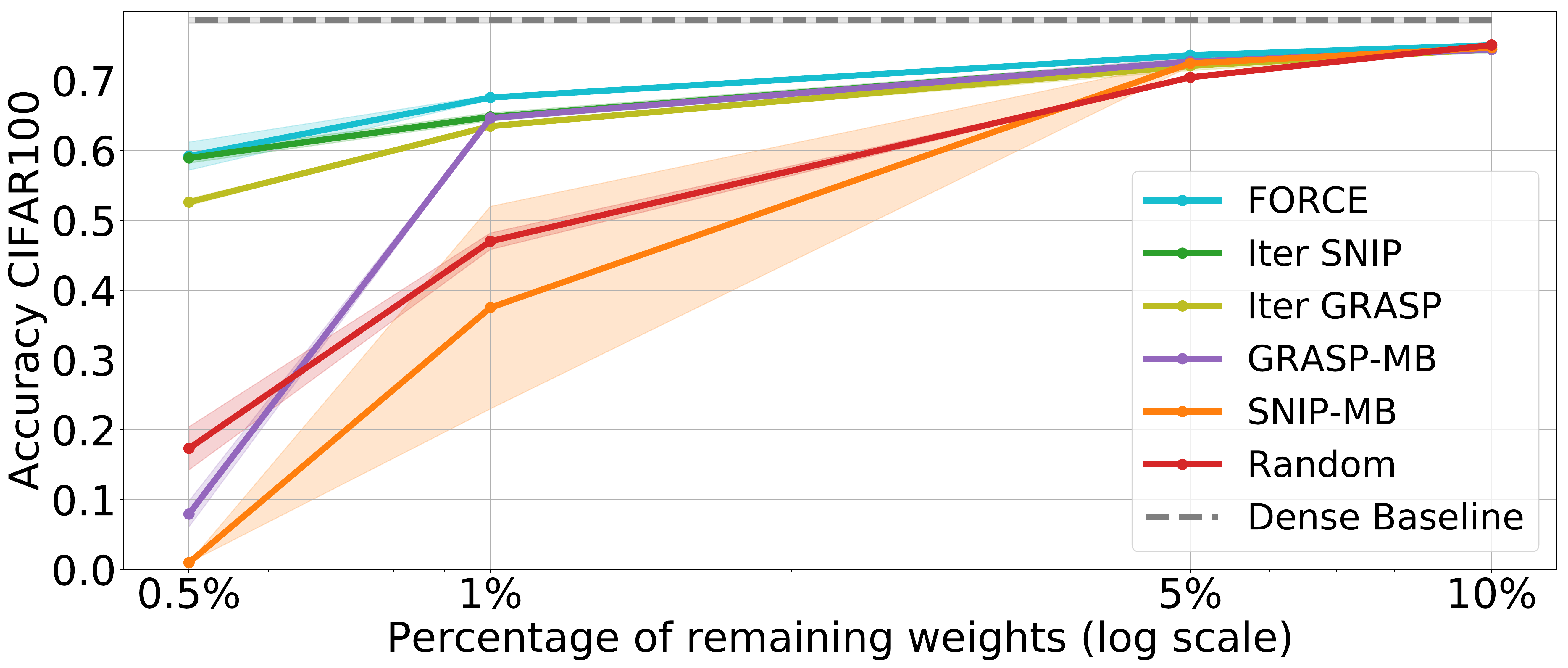}
\caption{CIFAR 100 - Resnet50}
\end{subfigure}
\begin{subfigure}[b]{.49\linewidth}
\includegraphics[width=\linewidth]{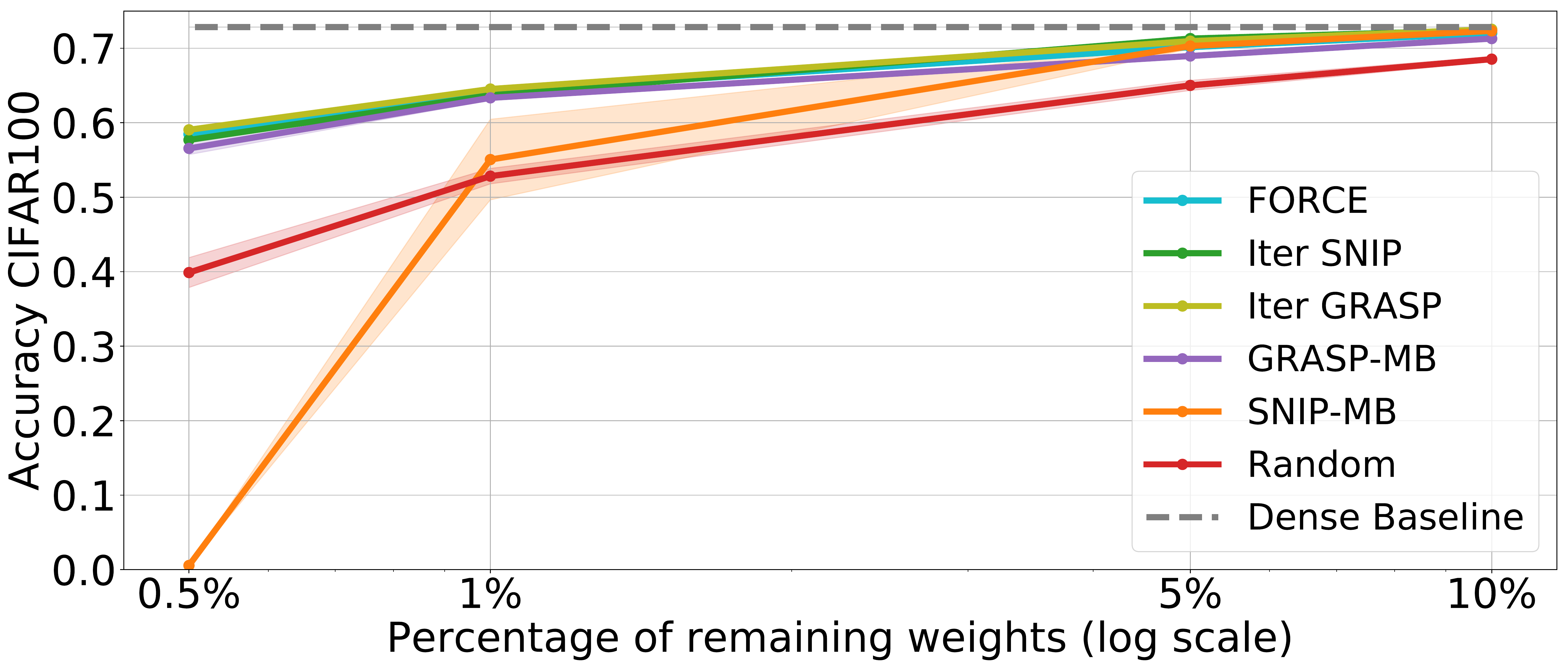}
\caption{CIFAR 100 - VGG19}
\end{subfigure}
\centering
\begin{subfigure}[b]{.49\linewidth}
\includegraphics[width=\linewidth]{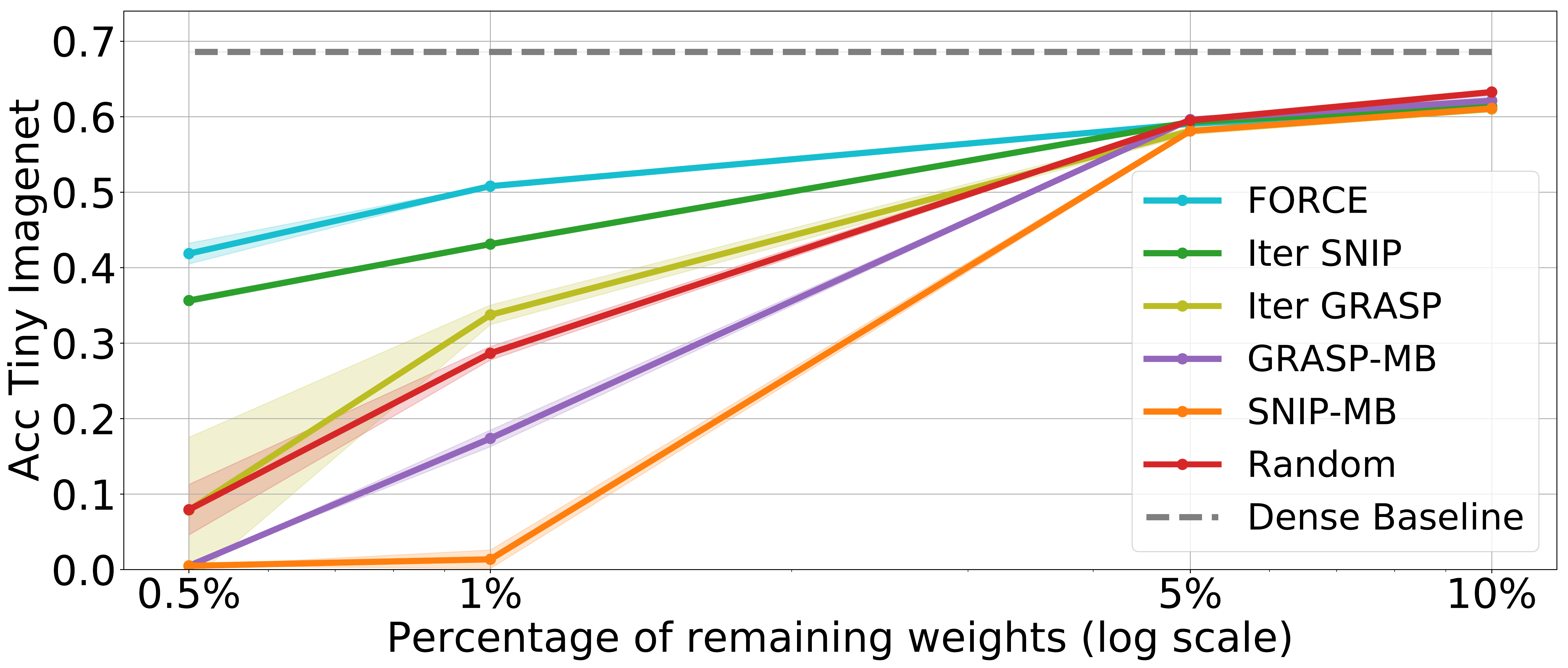}
\caption{Tiny Imagenet - Resnet50}
\end{subfigure}
\begin{subfigure}[b]{.49\linewidth}
\includegraphics[width=\linewidth]{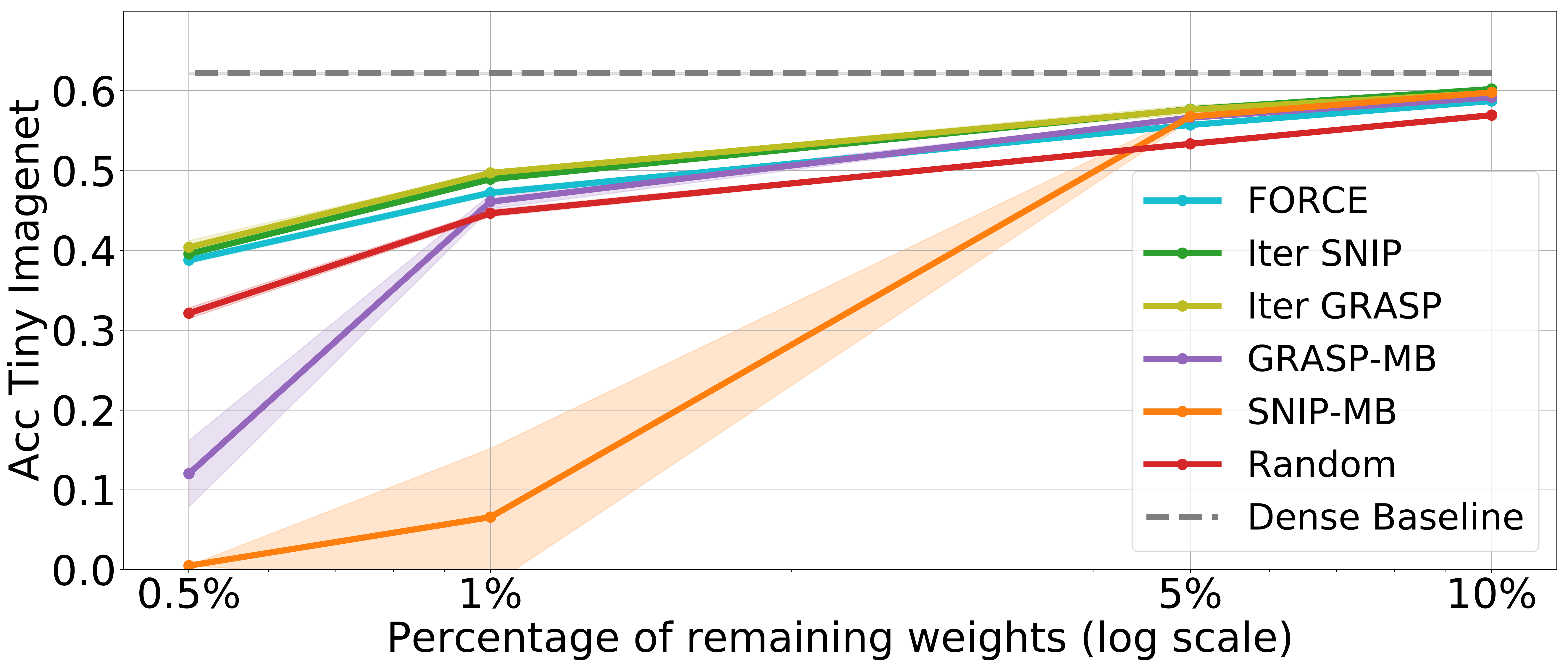}
\caption{Tiny Imagenet - VGG19}
\end{subfigure}

\caption{Test accuracies for different datasets and networks when pruned with different methods. Each point is the average over 3 runs of prune-train-test. The shaded areas denote the standard deviation of the runs (sometimes too small to be visible).}
\label{fig:figure_graspit} \vspace{-0.3cm}
\end{figure} \vspace{-1ex}

\subsection{Maximizing the gradient norm using the gradient approximation} 

In order to maximize the gradient norm after pruning, the authors in~\citet{grasp} use the first order Taylor's approximation. While this seems to be better suited than SNIP for higher levels of sparsity, it assumes that pruning is a small perturbation on the weight matrix. We argue that this approximation will not be valid as we push towards extreme sparsity values. Our \textit{gradient approximation} (refer to section \ref{sec: Extreme pruning}) can also be applied to maximize the gradient norm after pruning. In this case, we have
\begin{equation}\label{graspit}
G(\bm{\theta}, \bm{c}) := \Delta\mathcal{L}(\bm{\theta} \odot \bm{c}) - \Delta\mathcal{L}(\bm{\theta}) \approx \sum_{\{i:\ c_i = 0\}}\ -[\nabla\mathcal{L} (\bm{\theta})_i]^2,
\end{equation}
where we assume pruned connections have null gradients (this is equivalent to the restriction used for Iterative SNIP) and we assume gradients remain unchanged for unpruned weights (gradient approximation). Combining this approximation with Eq.~\eqref{eq:iterative}, we obtain a new pruning method we name Iterative GRASP, \textit{although it is not the same as applying GRASP iteratively}. Unlike FORCE, Iterative GRASP does not recover GRASP when $T=1$.

In Fig \ref{fig:figure_graspit} we compare Iterative GRASP to other pruning methods. We use the same settings as described in section \ref{experiments}. We observe that Iterative GRASP outperforms GRASP in the high sparsity region. Moreover, for VGG19 architecture Iterative GRASP achieves comparable performance to Iterative SNIP. Nevertheless, for Resnet50 we see that Iterative GRASP performance falls below that of FORCE and Iterative SNIP as we prune more weights. FORCE saliency takes into account both the gradient and the magnitude of the weights when computing the saliency, on the other hand, the Gradient Norm only takes gradients into account, therefore it is using less information. We hypothesize this might the reason why Iterative GRASP does not match Iterative SNIP.

\subsection{Iterative GRASP (Pruning consistency)} \label{pruning-consistency}

\begin{figure}[ht]
\centering
\begin{subfigure}[b]{.49\linewidth}
\includegraphics[width=\linewidth]{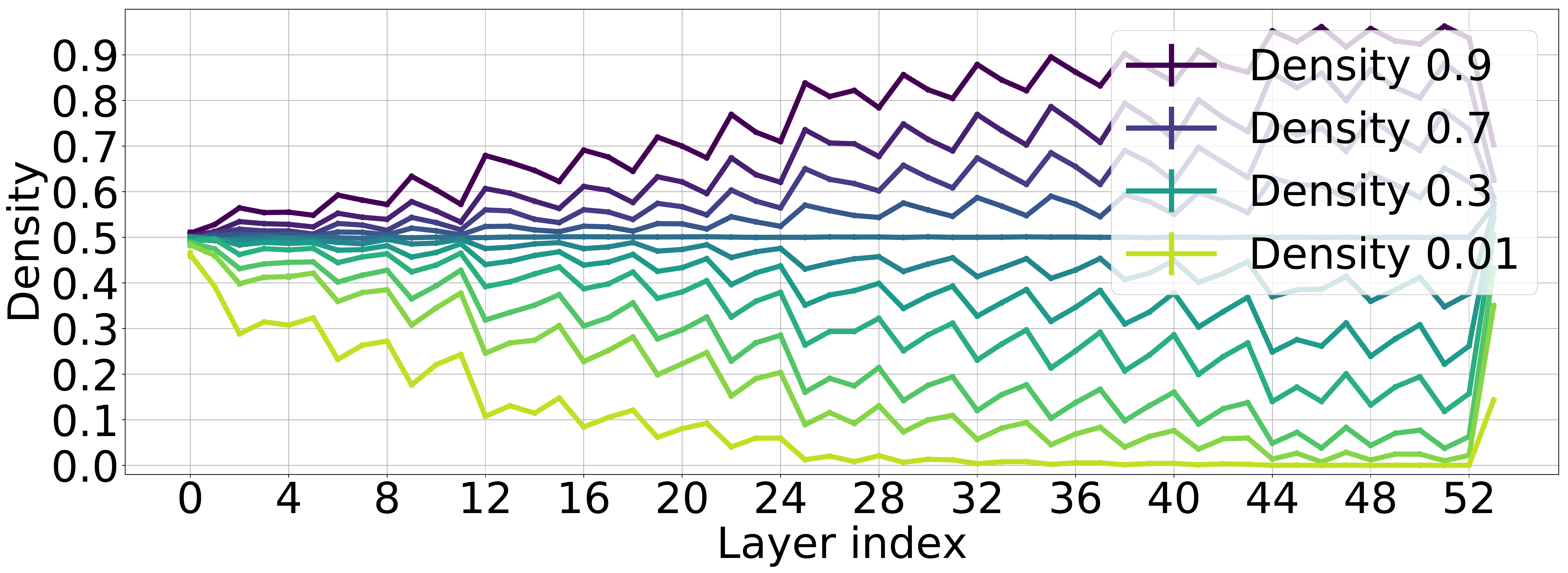}
\caption{\textit{One-shot} GRASP}
\end{subfigure}
\begin{subfigure}[b]{.49\linewidth}
\includegraphics[width=\linewidth]{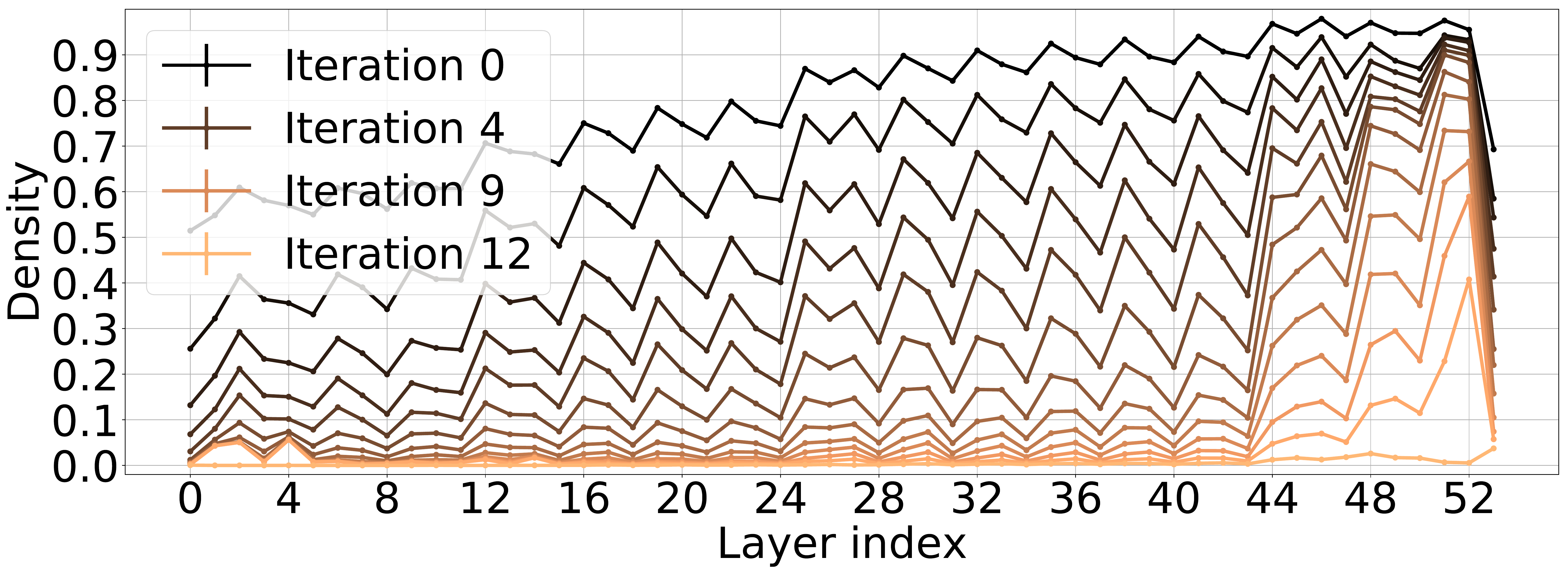}
\caption{Iterative GRASP}
\end{subfigure}

\begin{subfigure}[b]{.49\linewidth}
\includegraphics[width=\linewidth]{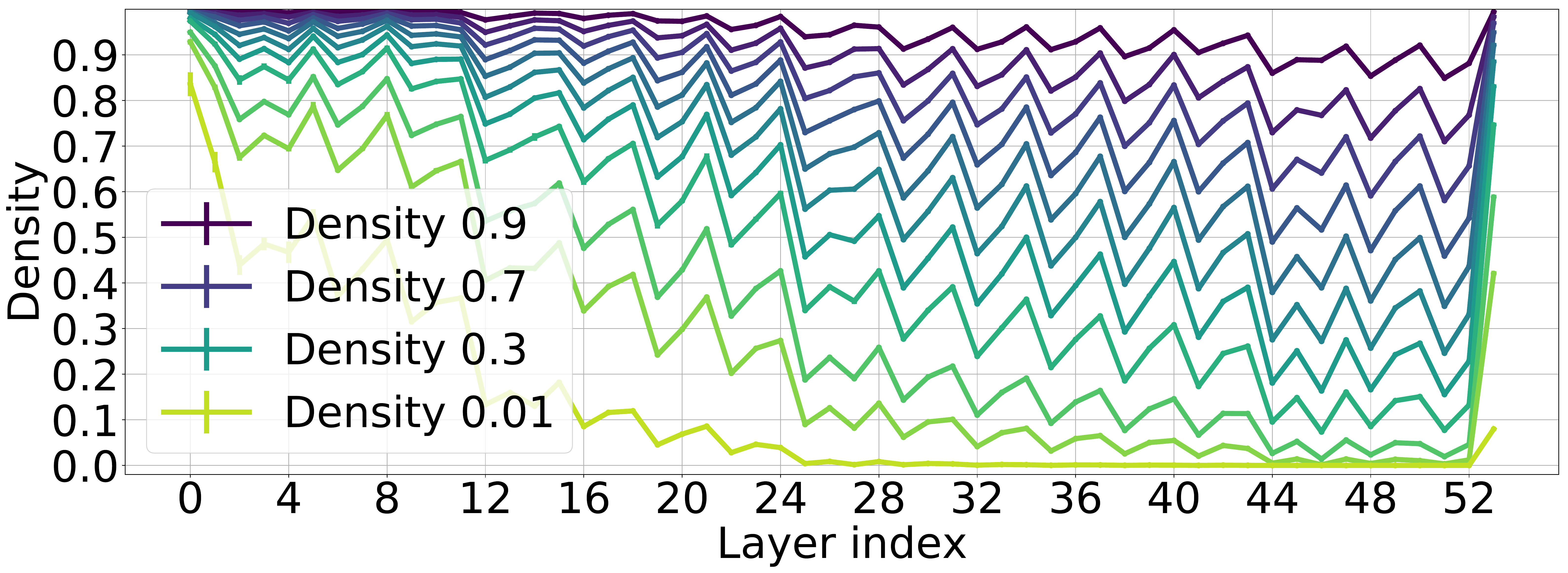}
\caption{\textit{One-shot} SNIP}
\end{subfigure}
\begin{subfigure}[b]{.49\linewidth}
\includegraphics[width=\linewidth]{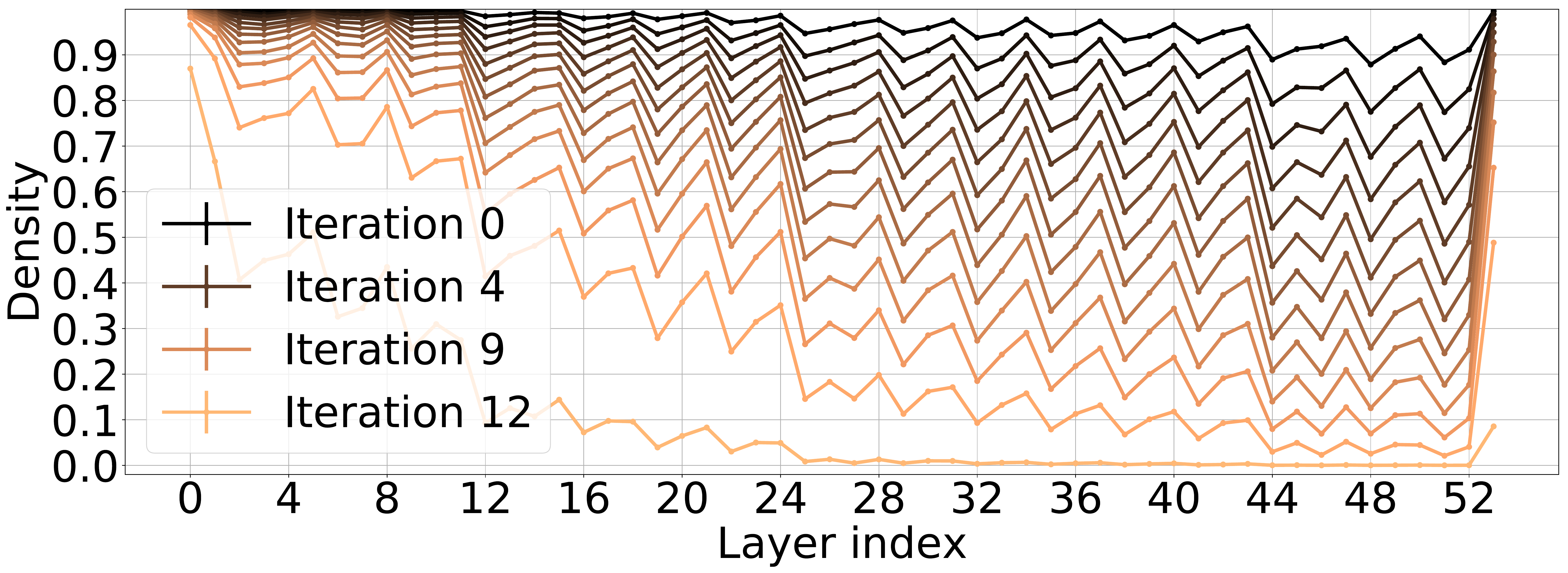}
\caption{Iterative SNIP}
\end{subfigure}
\caption{(a) - (c) Portion of remaining weights for each layer. Each point is an average of 3 runs and the error bars (hardly visible) denote standard deviation. (b) - (d) Portion of remaining weights for each intermediate mask that is computed during iterative pruning with global portion of remaining weights of 0.01.}
\label{fig:figure_ax}
\end{figure} 

As explained in the main text, we tried to apply GRASP iteratively with the Taylor approximation described in~\citet{grasp}. Unfortunately, we found that all resulting masks yield networks unable to train. In light of this result, we performed some analysis of the behaviour of GRASP compared to that of SNIP and, in the following, we provide some insights as to why we can not use GRASP's approximation iteratively.

In~\citet{rethinking}, the authors show that applying a pruning method to the same architecture with different random initializations would yield consistent pruning masks. Specifically, they find that the percentage of pruned weights in each layer had very low variance. We reproduce the same experiment and additionally explore another dimension, the (global) sparsity level. Given an architecture, we prune it at varying levels of sparsity and extract the percentage of remaining weights at each layer. For each level of sparsity, we average the results over 3 trials of initialize-prune. As shown in Fig \ref{fig:figure_3}, both SNIP and GRASP have very low variance across initializations, on the other hand, as we vary the global sparsity with GRASP, the percentage of remaining weights for each layer is inconsistent. The layers that are most preserved at high levels of sparsity, such as the initial and last layers, are the most heavily pruned at low sparsity levels.

The authors in~\citet{rethinking}, reason that manually designed networks have layers which are more redundant than others. Therefore, pruning methods even this redundancies by pruning layers with different percentages. We extend this reasoning, and hypothesize that pruning algorithms should always have preference for pruning the same (redundant) layers across all levels of sparsity. We denote this as \textit{pruning consistency}. We observe that when applying iterative pruning to GRASP, the resulting masks tend to prune almost all of the weights at the initial and final layers, producing networks that are unable to converge. When using iterative pruning, we prune a small portion of remaining weights at each step. Thus, we are always in the low sparsity regime, where the GRASP behaviour is reversed. Conversely, when we use SNIP the behaviour changes completely. In this case, the preserved layers are consistent across sparsity levels, and when we use iterative pruning we obtain networks that reach high accuracy values.

\end{document}